\def\T{\mathsf{T}}
\def\H{\mathsf{H}}
\def\Cbb{\mathbb{C}}
\def\Rbb{\mathbb{R}}
\def\xbf{{\mathbf{x}}}
\def\ybf{{\mathbf{y}}}
\def\zbf{{\mathbf{z}}}
\def\gbf{{\mathbf{g}}}
\def\rbf{{\mathbf{r}}}
\def\sbf{{\mathbf{s}}}
\def\nbf{{\mathbf{n}}}
\def\hbf{{\mathbf{h}}}
\def\Dbm{{\bm{D}}}
\def\Rbm{{\bm{R}}}
\def\ebm{{\bm{e}}}
\def\dbm{{\bm{d}}}
\def\xbm{{\bm{x}}}
\def\rbm{{\bm{r}}}
\def\lbm{{\bm{l}}}
\def\e{{\mathrm{e}}}
\def\z{{\, \mathrm{z}}}
\def\s{{\, \mathrm{s}}}
\def\u{{\, \mathrm{u}}}
\def\v{{\, \mathrm{v}}}
\def\Xbf{{\mathbf{X}}}
\def\Bbf{{\mathbf{B}}}
\def\Dbf{{\mathbf{D}}}
\def\Cbf{{\mathbf{C}}}
\def\Fbf{{\mathbf{F}}}
\def\Ebf{{\mathbf{E}}}
\def\Abf{{\mathbf{A}}}
\def\Acal{\mathcal{A}}
\def\Bcal{\mathcal{B}}
\def\Fcal{\mathcal{F}}
\def\Scal{\mathcal{S}}
\theoremstyle{theorem}
\newtheorem{proposition}{Proposition}
\begin{document}
%\ninept

%%%%%%%%%%%%%%%%%%%%%%%%%%%%%%%%%%%%%%%%%%%%%
%% Title
%%%%%%%%%%%%%%%%%%%%%%%%%%%%%%%%%%%%%%%%%%%%%

\title{Sparse Blind Deconvolution for Distributed Radar Autofocus Imaging}

%%%%%%%%%%%%%%%%%%%%%%%%%%%%%%%%%%%%%%%%%%%%%
%% Authors
%%%%%%%%%%%%%%%%%%%%%%%%%%%%%%%%%%%%%%%%%%%%%

\author{Hassan~Mansour,~\IEEEmembership{Senior~Member,~IEEE}, Dehong~Liu,~\IEEEmembership{Senior~Member,~IEEE}, Ulugbek~S.~Kamilov,~\IEEEmembership{Member,~IEEE}
and Petros~T.~Boufounos,~\IEEEmembership{Senior~Member,~IEEE}
\thanks{H.~Mansour, D. Liu and P. Boufounos are with Mitsubishi Electric Research Laboratories (MERL), 201 Broadway, Cambridge,
MA 02139, USA (email: mansour@merl.com; liudh@merl.com; petrosb@merl.com).\\
U.~S.~Kamilov is with the Washington University in St. Louis, St. Louis, MO 63130, USA (kamilov@wustl.edu). U.~S.~Kamilov contributed to this work while he was at MERL.}
\thanks{A preliminary version of this work appeared in ICASSP 2018~\cite{MKLB:2018}.}
}%

\markboth{Radar Autofocus Using Sparse Blind Deconvolution}%
{Mansour et al.}

\maketitle

%% Abstract %%
\begin{abstract}
A common problem that arises in radar imaging systems, especially
those mounted on mobile platforms, is antenna position ambiguity. Approaches to
resolve this ambiguity and correct position errors are generally known
as radar autofocus. Common techniques that attempt to resolve the antenna ambiguity generally assume an unknown gain and phase error afflicting the radar measurements. However, ensuring identifiability and tractability of the unknown error imposes strict restrictions on the allowable antenna perturbations. Furthermore, these techniques are often not applicable in near-field
imaging, where mapping the position ambiguity to phase
errors breaks down.

In this paper, we propose an alternate formulation where the position error of each antenna is mapped to a spatial shift
operator in the image-domain. Thus, the radar autofocus problem
becomes a multichannel blind deconvolution problem, in which the radar
measurements correspond to observations of a static radar image that
is convolved with the spatial shift kernel associated with each
antenna. To solve the reformulated problem, we also develop a block
coordinate descent framework that leverages the sparsity and
piece-wise smoothness of the radar scene, as well as the one-sparse
property of the two dimensional shift kernels. We evaluate the
performance of our approach using both simulated and experimental
radar measurements, and demonstrate its superior performance compared
to state-of-the-art methods.
\end{abstract}

\begin{IEEEkeywords}
Radar autofocus, blind deconvolution, sparse image reconstruction, fused-Lasso, block-coordinate descent
\end{IEEEkeywords}

% graphics path
\graphicspath{{./figures/}}

\IEEEpeerreviewmaketitle

%%%%%%%%%%%%%%%%%%%%%%%%%%%%%%%%%%%%%%%%%%%%%
%% Sections
%%%%%%%%%%%%%%%%%%%%%%%%%%%%%%%%%%%%%%%%%%%%%

%% new section %%
\section{Introduction}
\label{sec:Intro}

High resolution radar imaging has become essential in a variety of
remote sensing applications. While the resolution of the imaging
platform in the down-range direction is primarily a function of the
frequency and bandwidth of the transmitted pulse, the resolution along
the cross-range (azimuth) direction depends on the aperture, i.e., the
size of the radar array. In order to achieve the large aperture
required in modern applications, practical systems deploy and
distribute over a large area one or more, often mobile, antennas or
antenna arrays, each having a relatively small aperture. The larger
aperture is achieved by the virtual array formed over the large area
of deployment and motion of the antennas.

A multi-antenna distributed setup further reduces the operational and
maintenance costs, allows for flexibility of platform placement, and
provides robustness to sensor failures. Leveraging prior knowledge of
the scene, such as sparsity, the precise knowledge of the antenna
positions along with a full synchronization of received signals has
been shown to significantly improve the radar imaging
resolution~\cite{HermanStrohmer:2009,YPP:2010,BergerMoura:2011,LKB:2015}.

A fundamental challenge that arises in distributed array imaging is
the uncertainty in the exact positions of the transmitting and
receiving antennas. Advanced positioning and navigation systems, such
as the global navigation satellite system (GPS/GNSS) and the inertial
navigation system (INS), generally provide reasonably accurate but not
exact location information. The remaining uncertainty in the true
antenna positions can still span multiple wavelengths. As a result, if
the inexact antenna positions are used as-is in the imaging
process, the received signal is distorted, effectively contaminated
with a gain and phase ambiguity. Consequently, if the position
perturbation is not compensated for, conventional reconstruction
techniques produce out-of-focus radar images. A detailed analysis of
antenna position errors in bistatic synthetic arrays can be found
in~\cite{WangYaziciYanik:2015}.

There is extensive literature addressing the radar autofocus problem
by developing tools that compensate for antenna position
errors~\cite{WEGJ:1994,XGN:1999,YYB:1999,ChoMunson:2008,LiuMunson:2011,NA:2013}. In
some cases, the underlying structure of the radar image, such as its
sparsity, is exploited to limit the solution space and produce higher
quality
reconstructions~\cite{OnhonCetin:2012,DDH:2013,KellyYaghoobiDavies:2014,YHTJZ:2014,LKB:2016,ZHAOetal:2016,Hasankhan2017}. Instead
of estimating the position error, most techniques estimate an
equivalent set of frequency-domain gain and phase errors in the
measured signal, i.e., model the effect of the position error as a
linear time-invariant filter. One advantage of these techniques is
that they can often be directly combined with conventional imaging
methods in the processing pipeline. However, converting the problem to
a phase recovery one often poses severe restrictions on the
applicability of solutions in practical applications.

\begin{figure}[t]
\centering
\includegraphics[width=3.5in]{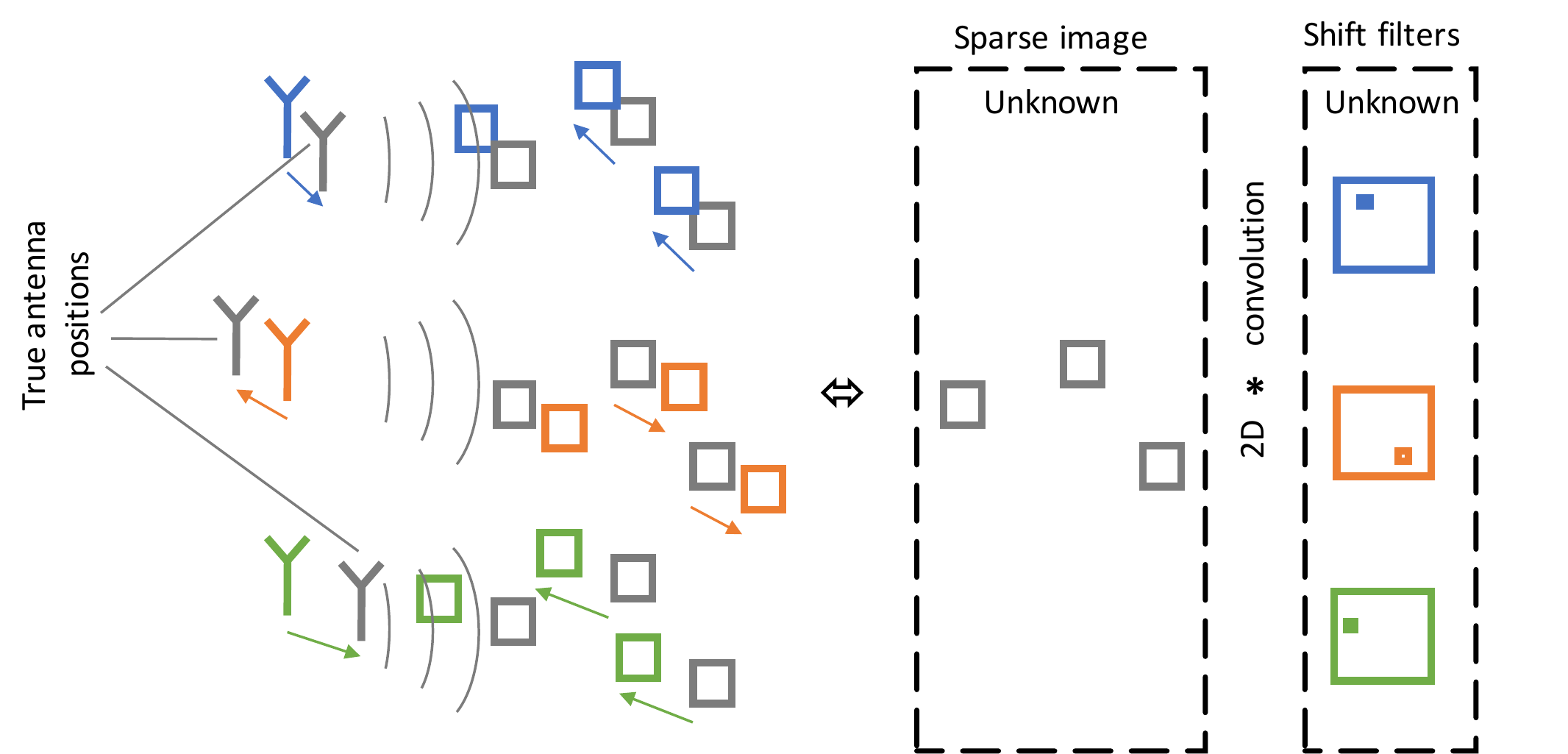}
\caption{\small Position ambiguity of the radar antennas induces an image-domain convolution model.}\label{fig:2DconvIllust}\vspace{-0.2in}
\end{figure}

\subsection{Main contributions}
In contrast to earlier approaches, our work fundamentally re-examines
the acquisition model. Specifically, we study the general high
resolution radar imaging problem of recovering a sparse stationary
image of a scene under position ambiguity of the antennas. We
demonstrate in Section~\ref{sec:Problem} that determining the antenna position ambiguity is
equivalent to a blind deconvolution problem in the image
domain. In particular, we assert that measurements acquired using an
antenna in an assumed position, but with unknown position error, are
equal to measurements of the scene convolved with a two-dimensional
unknown shift kernel, using an antenna without position
error. Figure~\ref{fig:2DconvIllust} illustrates the key
intuition of our formulation. The unknown kernel shifting the scene
mirrors the unknown shift error of the antenna, relative to its
assumed position. In addition, we prove that the image-domain
convolution model is exact if the transmitting and receiving antennas
are affected by the same position error, a common occurrence in
practice, for example if transmission and reception uses the same
antenna or if both antennas are mounted on the same platform.

In addition to our novel formulation, we also propose in Section~\ref{sec:ProposedSolution} a block-coordinate descent algorithm to efficiently
solve the sparse blind deconvolution problem in the image domain. We
validate our approach using numerical simulations and experimental
results that demonstrate their
effectiveness. Figure~\ref{fig:imaging_comparison} provides an example
of the improvements due to our approach. In this example, the scene in
Figure~\ref{fig:imaging_comparison}(a), including three reflective
objects, is imaged using a distributed array with 4 mobile antennas
and position perturbations. Ignoring position errors produces
significant artifacts, as shown in
Figure~\ref{fig:imaging_comparison}(b). While conventional methods
improve imaging performance, as shown in
Figure~\ref{fig:imaging_comparison}(c), there are still missing targets
and artifacts. Our approach is able to recover the scene, as
demonstrated in Figure~\ref{fig:imaging_comparison}(d). The performance
improvement is consistent in a variety of sensing conditions and very
large position errors. More details on this and other experiments are
provided in Section~\ref{sec:Results}.
\begin{figure}[t]
\centering
\mbox{
\begin{subfigure}[b]{0.2\textwidth}
                \centering
                \includegraphics[width=\textwidth]{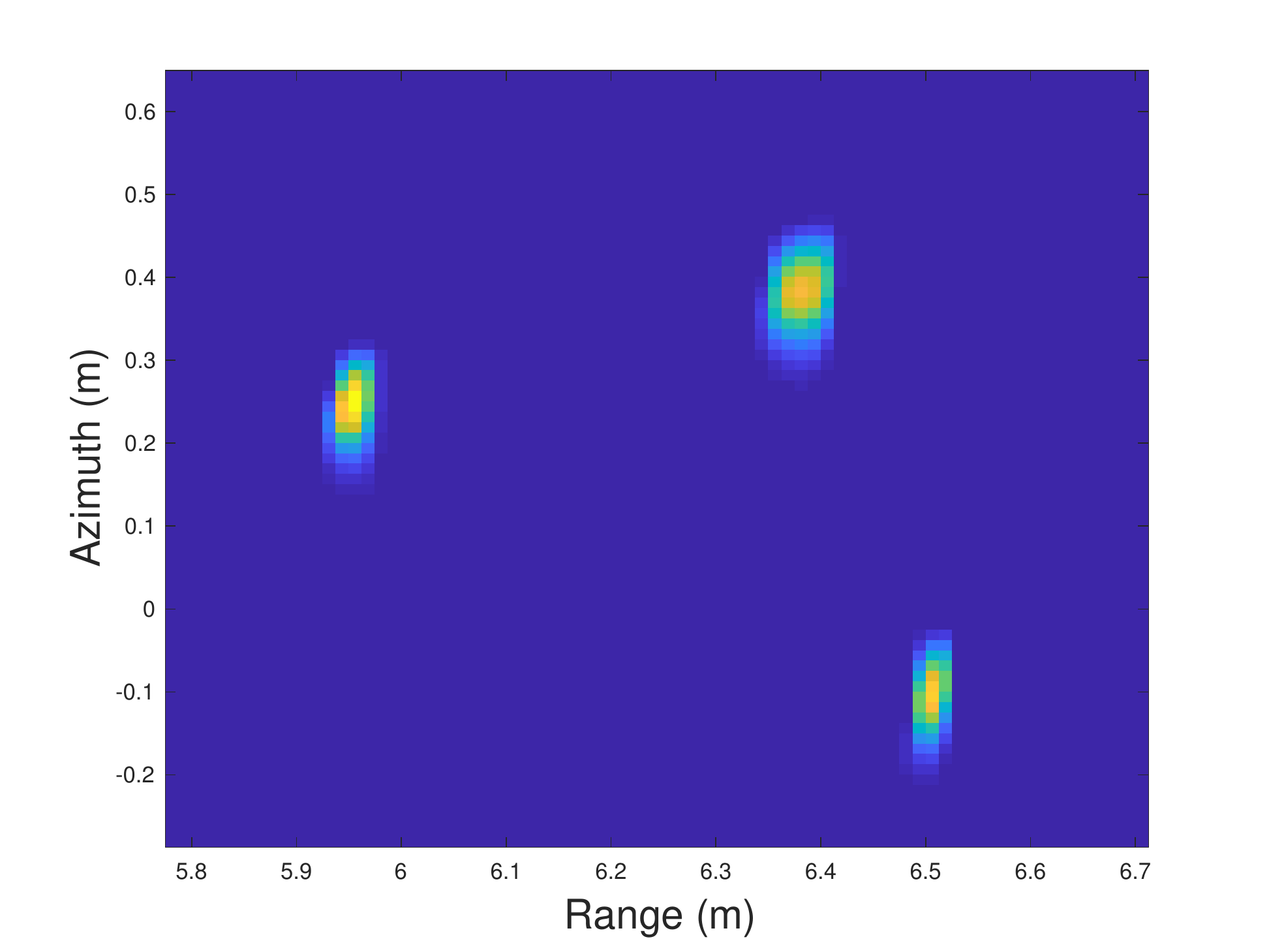}
                \caption{}
        \end{subfigure}%
        \begin{subfigure}[b]{0.2\textwidth}
                \centering
                \includegraphics[width=\textwidth]{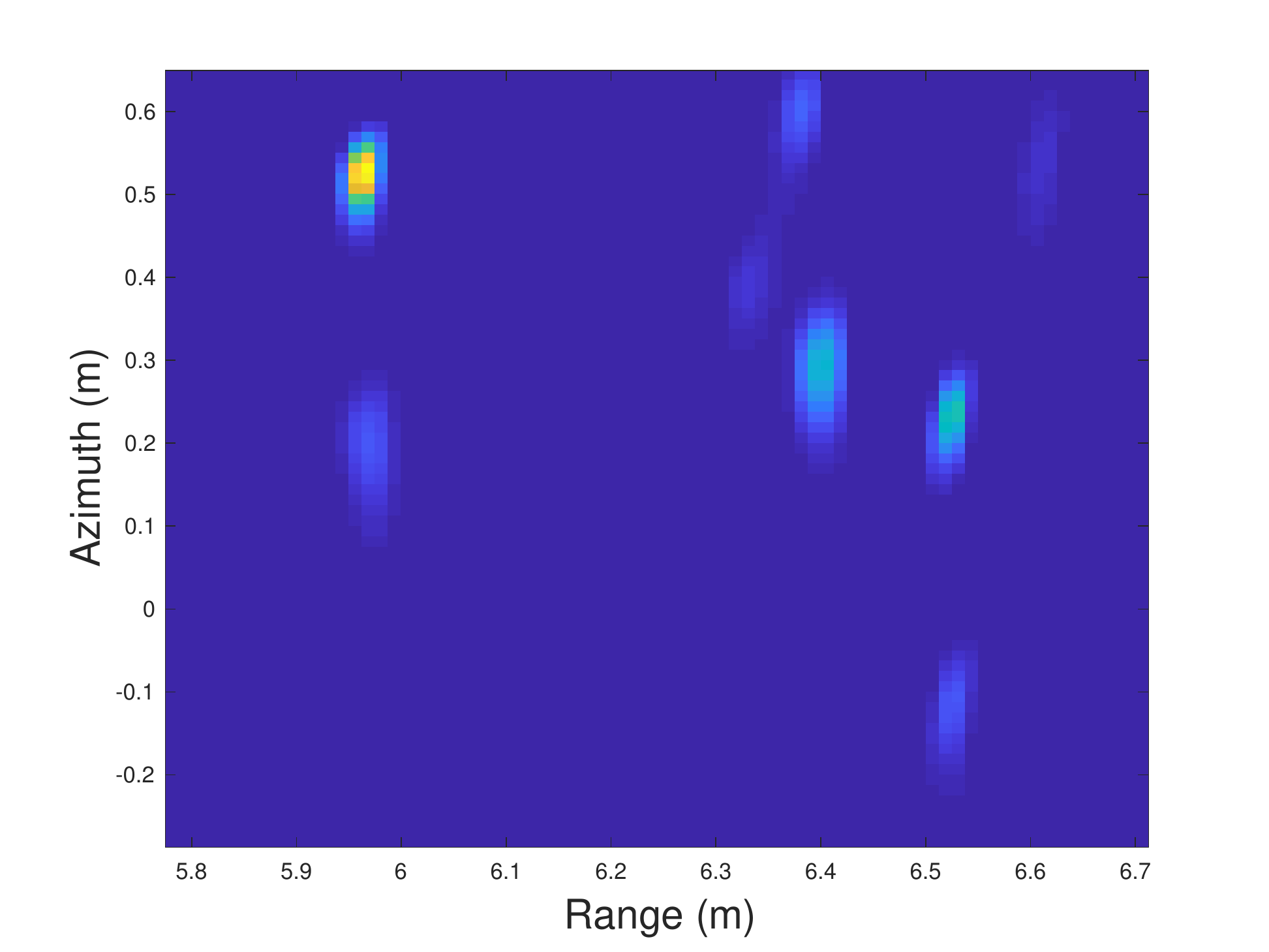}
                \caption{}
        \end{subfigure}%
}
\mbox{
\begin{subfigure}[b]{0.2\textwidth}
                \centering
                \includegraphics[width=\textwidth]{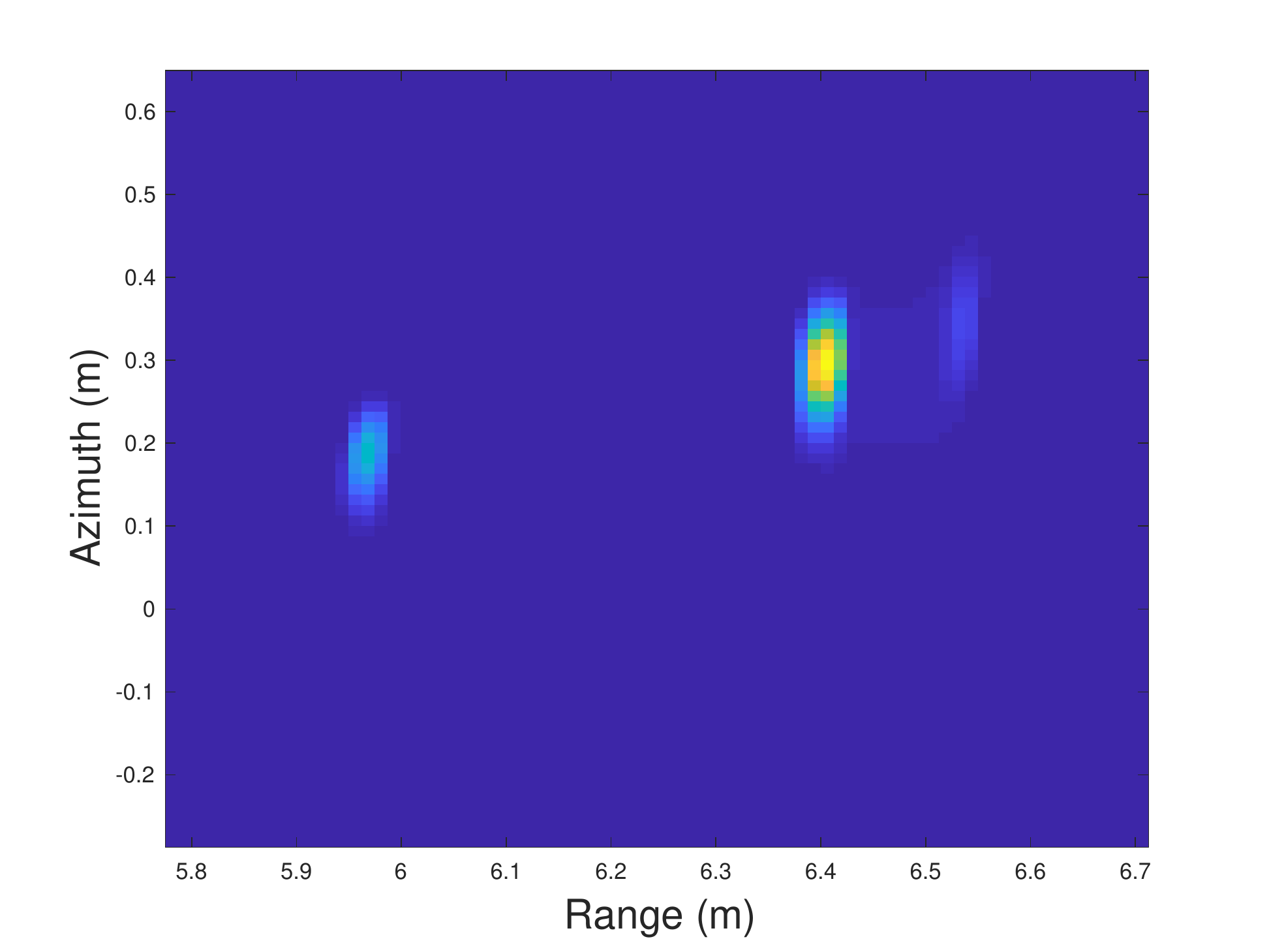}
                \caption{}
        \end{subfigure}%
        \begin{subfigure}[b]{0.2\textwidth}
                \centering
                \includegraphics[width=\textwidth]{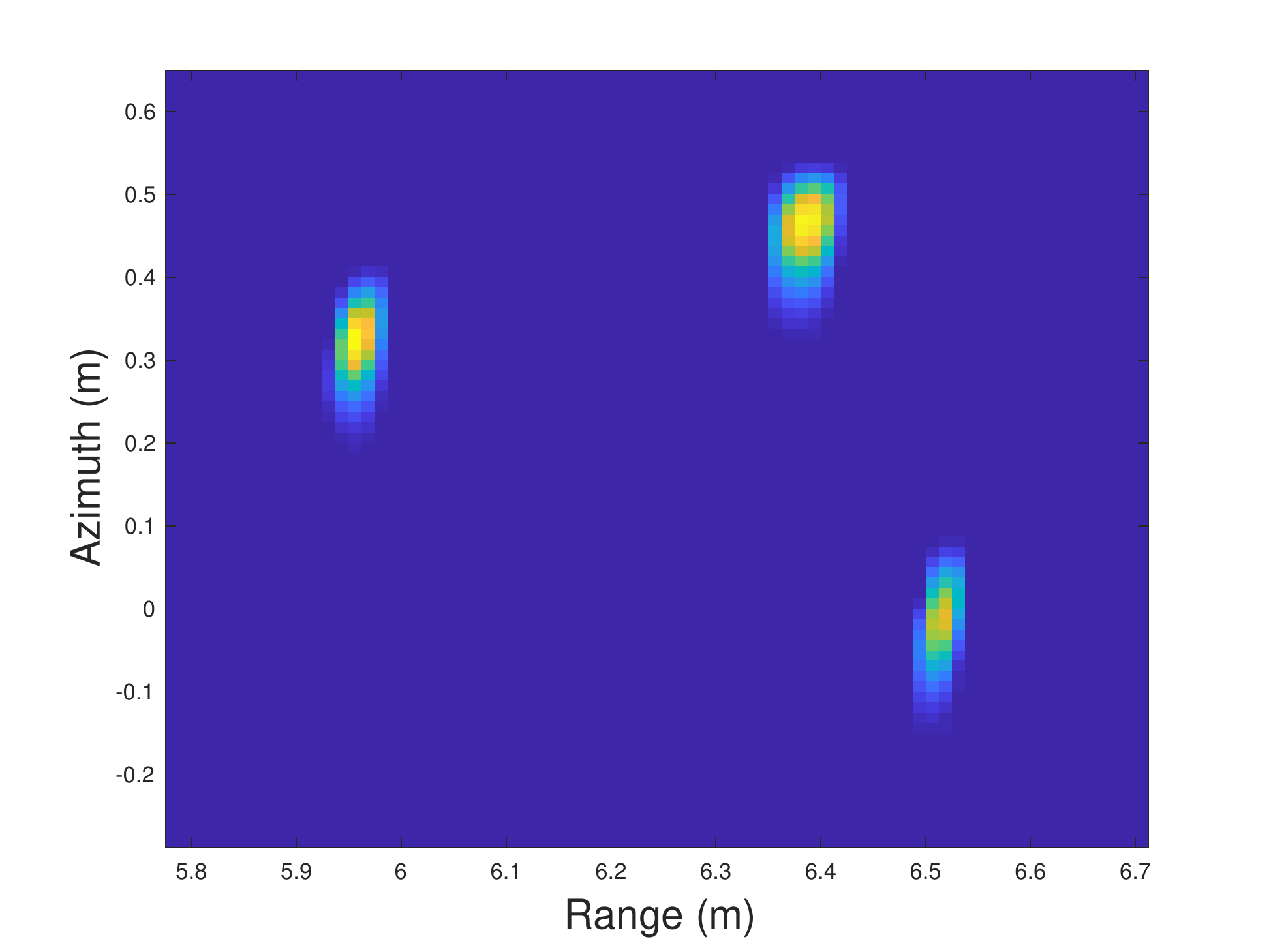}
                \caption{}
        \end{subfigure}%
}
\caption{\small(a)--(b) Sparse recovery of radar images without autofocusing obtained using (a) the correct antenna positions, and (b) the incorrect antenna positions. (c)--(d) Autofocus results using blind deconvolution according to (c) the time-domain convolution model; and (d) using the proposed sparse blind deconvolution and the image-domain convolution model.}\label{fig:imaging_comparison}\vspace{-0.2in}
\end{figure}

\subsection{Background and Related Work}
%%%%%%%
In its most general formulation, using an arbitrary gain and phase
error to model the data error in the frequency domain, the radar
autofocus problem is ill-posed. The overwhelming majority of the
literature addresses this ill-posedness by imposing constraints on the
dimensionality of the error. These constraints include constant phase error
assumptions or restricted subspace
assumptions~\cite{YYB:1999,LiuWieselMunson:2012,ZXXetal:2013,UUR2015,EZJ:2017},
and sparsity
assumptions~\cite{OnhonCetin:2012,KellyYaghoobiDavies:2014,ZHAOetal:2016,LKB:2016,Hasankhan2017}. While
some of these assumptions may be valid under specific conditions of
the measurement process, they do not necessarily apply to the general
radar imaging problem. 

In the context of autofocus using synthetic arrays, well established
approaches include phase gradient autofocus (PGA)~\cite{WEGJ:1994} and
sharpness optimization autofocus
techniques~\cite{XGN:1999,MDM:2007,BNCC:2008,ZYXLH:2016}. The PGA
approach estimates the derivative of the phase error using a minimum
variance estimator. The underlying assumption is that individual
targets in the scene exhibit the same phase error. Alternatively,
contrast optimization techniques, optimize a sharpness function of the
reconstructed image. The radar imaging operators of each measurement
are assumed to be invertible and the phase errors are assumed to
impact individual columns in the radar image. 

Another approach championed
in~\cite{LiuMunson:2011,LiuWieselMunson:2012,LiuWieselMunson:2013} is
the multichannel autofocus (MCA) and Fourier domain multichannel
autofocus (FMCA). Both techniques develop solutions that map the
autofocus problem into a constant modulus quadratic program (CMQP)
that is solved using subspace identification methods or semidefinite
relaxation. In these approaches, it is assumed that the measurements
are acquired at specific look angles. For each look angle, the
measurements are impacted by an unknown delay, resulting in a constant
unknown phase shift under a narrow band assumption. The delays, and
their associated phases, change between different look angles. Such a
model has two limitations for high resolution radar imaging. The first
limitation is the assumption that a radar measurement from one antenna
position receives reflections from a single ray along a single scan
angle. In the general radar imaging setting, a single radar
measurement can receive reflections from a large spatial area in the
scene. The second limitation is related to the first and materializes
in the form of restricting the phase ambiguity to a spatial shift
along the radial line. In fact, we prove in this paper that, in the
general radar setting, the phase ambiguity is realized instead as a
two-dimensional spatial shift of the radar image.

\if 
The overwhelming majority of the literature on radar autofocus address
the ill-posedness of~\eqref{eq:Model1intro} by imposing constraints
that limit the dimensionality of the vectors $\widehat{\gbf}_m$. These
constraints include constant phase assumptions or restricted subspace
assumptions~\cite{YYB:1999,LiuWieselMunson:2012,ZXXetal:2013,UUR2015,EZJ:2017},
and sparsity
assumptions~\cite{OnhonCetin:2012,KellyYaghoobiDavies:2014,
  ZHAOetal:2016,LKB:2016,Hasankhan2017}. While some of these
assumptions may be valid under specific conditions of the measurement
process, they do not necessarily apply to the general radar imaging
problem. In the context of autofocus using synthetic arrays, well
established approaches include phase gradient autofocus
(PGA)~\cite{WEGJ:1994} and sharpness optimization autofocus
techniques~\cite{XGN:1999,MDM:2007,BNCC:2008,ZYXLH:2016}. The PGA
approach estimates the derivative of the phase error using a minimum
variance estimator. The underlying assumption is that individual
targets in the scene exhibit the same phase error. Contrast
optimization techniques, on the other hand, optimize a sharpness
function of the reconstructed image. The radar imaging operators of
each measurement are assumed to be invertible and the phase errors are
assumed to impact individual columns in the radar image. Another
approach championed by Munson et
al.~\cite{LiuMunson:2011,LiuWieselMunson:2012,LiuWieselMunson:2013} is
the multichannel autofocus (MCA) and Fourier domain multichannel
autofocus (FMCA). Both techniques develop solutions that map the
autofocus problem into a constant modulus quadratic program (CMQP)
that is solved using subspace identification methods or semidefinite
relaxation. In these approaches, it is assumed that the measurements
are acquired at specific look angles. For each look angle, the
measurements are impacted by an unknown delay, resulting in a constant
unknown phase shift under a narrow band assumption. The delays, and
their associated phases, change between different look angles. Such a
model has two limitations for high resolution radar imaging. The first
limitation is the assumption that a radar measurement from one antenna
position receives reflections from a single ray along a single scan
angle. In the general radar imaging setting, a single radar
measurement can receive reflections from a large spatial area in the
scene. The second limitation is related to the first and materializes
in the form of restricting the phase ambiguity to a spatial shift
along the radial line. We prove in this paper that, in the general
radar setting, the phase ambiguity is realized instead as a
two-dimensional spatial shift of the radar image.
\fi

More generally, the radar autofocus problem can be viewed as a
multichannel blind deconvolution problem, where the distortions due to
position errors can be considered as unknown channels acting on the
signals that are received in each antenna. Blind deconvolution attempts to
separate a signal and a channel from their convolved mixture, i.e.,
their product in the frequency domain, using assumptions on each of
the two. The multi-channel version considers multiple channels
affecting the same signal. Solutions to the general multichannel blind deconvolution problem have
existed in the literature since the
1990s~\cite{XLTK:1995,MDCM:1995,GN:1995,AQH:1997,TP:1998}. 

More recent research has focused on convex formulations and recovery
guarantees. The seminal work of Ahmed, Recht, and
Romberg~\cite{BlindDeconvConvex_ARR2013} proposed a convex formulation
of the single-channel problem, recovering the two signals by lifting
the problem to a frequency-domain outer-product space. Under this
formulation, the acquired data are linear measurements of an unknown
rank-one matrix to be recovered. The
unknown channel and signal vectors are obtained by factoring this
rank-one matrix to its components. The authors presented a theoretical
analysis of the recovery guarantees of the lifted problem when both
unknowns exist in low dimensional subspaces, and for a generic linear
acquisition system. The analysis was later generalized by Li and
Strohmer~\cite{SelfCalibratBiconvex_LS2015} using the
\emph{SparseLift} technique to the case where the signal is sparse and
exists on a union of subspaces spanned by an unknown subset of the
columns of a dictionary that has measurement concentration
properties. Along a similar track, Li, Lee, and
Bresler~\cite{LiLeeBresler:2016,LiLeeBresler:2017} analyzed the
identifiability properties of these bilinear inverse problems. The
problem of recovering a sparse signal from diverse convolutive
measurements was also studied by Ahmed and Demanet~\cite{AD:2016}
under the assumption that the convolution filters are members of known
low-dimensional random subspaces.

Another convex approach that has been studied
in~\cite{BN:2007,Balzano2008,GCD:2012,BPGL:2014,BlindDeconvLLS:2016,WC:2016}
recasts blind deconvolution as a linear least squares problem, thus
enjoying lower computational complexity compared to the lifting
approach when certain conditions are satisfied. Finally, we point out
that other nonconvex formulations with recovery guarantees have been
studied in~\cite{LLSW:2016,LTR:2016,CJ:2016,CJarxiv:2016}. 
A key difference in this prior work and our model is that, in the
terminology of blind deconvolution, both our signals and our channels
are sparse in the same domain, thus violating most common assumptions
necessary for recovery guarantees. On the other hand, our algorithm
is able to exploit some additional diversity due to the multi-channel
nature of our formulation.

\if

To summarize, none of the aforementioned works address the general
model in~\eqref{eq:Model2intro} for the case when both $\xbf$ and
$\hbf_m$'s are sparse, and when the matrices $\Abf_m$ are not generic
Gaussian or sub-Gaussian random matrices.

%%%%%%%%

We are interested in the problem of recovering an image of a stationary scene composed of a small number of targets. Formally, let $\xbf \in \Cbb^N$ be the vector representation of the sparse radar image. The image is to be recovered by processing $F-$dimensional frequency-domain measurements $\{\widetilde{\ybf}_m\}_{m=1}^M$ from $M$ distributed antennas that suffer from position ambiguity. We focus our attention to two-dimensional radar imaging in this paper but our analysis extends to three-dimensional imaging as well. We develop an image reconstruction framework wherein a perturbation in the antenna positions results in a measured signal that corresponds to an image-domain convolution model as illustrated in Figure~\ref{fig:2DconvIllust}. More precisely, if we denote the radar propagation matrix at the \emph{correct} antenna positions by $\widetilde{\Abf}_m$, and denote by $\Abf_m$ the corresponding matrix at the \emph{incorrect} positions, then we have $\widetilde{\ybf}_m = \widetilde{\Abf}_m \xbf \neq \Abf_m \xbf$. Unfortunately, we are only provided the measurements $\widetilde{\ybf}_m$ and the matrices $\Abf_m$. The position ambiguity of radar antennas can be modeled as a time-domain convolution with the measurements, or equivalently, as a gain and phase ambiguity in the frequency-domain of the radar signal, that is,
\begin{equation}\label{eq:Model1intro}
	\widetilde{\ybf}_m = \Dbf_{\widehat{\gbf}_m} \Abf_m \xbf,
\end{equation} 
where $\Dbf_{\widehat{\gbf}_m}$ is a diagonal matrix with the phase correction vector $\widehat{\gbf}_m \in \Cbb^F$ on its diagonal entries. Notice that the system in~\eqref{eq:Model1intro} is ill-posed in general since for any $M$ measurements, we are left with $MF$ equations and $MF+N$ unknowns. Alternatively, we propose in this paper to represent the gain and phase ambiguity as an image-domain convolution where a two-dimensional spatial shift kernel $\hbf_m$ is applied to the radar image $\xbf$, i.e.,
\begin{equation}\label{eq:Model2intro}
	\widetilde{\ybf}_m = \Abf_m \left(\xbf * \hbf_m \right).
\end{equation}
Under the new model, the shift kernels are one-sparse vectors of length $N_h^2$ with unknown support locations, thereby reducing the unknown degrees of freedom to an order of $M\log(N_h) + N$. The $\log(N_h)$ is the number of unknowns needed to encode the position of the nonzero entry in a vector of length $N_h$. Figure~\ref{fig:introLayout_v2} illustrates an example of a radar scene acquired by 16 distributed antennas with average position error around $2\lambda$ and maximum error at $3.5\lambda$, where $\lambda$ is the wavelength of the center frequency of a Gaussian pulse centered at 6 GHz with a 9 GHz bandwidth. The received signals are contaminated with white Gaussian noise at 30 dB peak-signal-to-noise-ratio (PSNR) after matched-filtering with the transmitted pulse. Figure~\ref{fig:imaging_comparison}(a) shows the ground truth recovery where the correct antenna positions are used. Contrast that with Figure~\ref{fig:imaging_comparison}(b) where the incorrect antenna positions are used in constructing the radar propagation matrix. The benefit of the image-domain convolution model becomes evident when we compare the recovery performance in Figures~\ref{fig:imaging_comparison}(c) and \ref{fig:imaging_comparison}(d). While the measurement-domain convolution model can provide an approximation of the radar scene in Figure~\ref{fig:imaging_comparison}(c), the image-domain convolution model produces an exact reconstruction of the scene in Figure~\ref{fig:imaging_comparison}(d) up to a global shift ambiguity. 

\begin{figure}[t]
\centering
\includegraphics[width=3in]{introLayout_v2-eps-converted-to.pdf}
\caption{\small Example of a distributed radar acquisition system with position ambiguity. The round dots indicate the assumed but erroneous antenna positions, while the $\times$'s indicate the true positions.}\label{fig:introLayout_v2}
\end{figure}

\fi

%%%%%%%%
\subsection{Notation}
Throughout the text, lower case bold face letters $\xbf$ denote vectors and upper case bold face letters $\Xbf$ denote matrices. Sets are denoted by calligraphic upper case letters $\Bcal$ or upper case Greek letters $\Omega$. Italicized bold upper case letters $\Rbm(\cdot)$ refer to functions that act on vector spaces.  We use $\Fbf_1$ and $\Fbf_2$ to refer to the one-dimensional and two-dimensional Fourier transform matrices, respectively. The matrix $\Dbf_{\xbf}$ is used to indicate a diagonal matrix with the vector $\xbf$ on its main diagonal. Superscripts $^\T$ and $^\H$ refer to the transpose and the Hermitian transpose of matrices, respectively. The function $\delta(\lbm  + \ebm)$ refers to the Dirac delta function of the variable $\lbm$ delayed by the quantity $\ebm$. The $\ell_1$ norm of a vector $\xbf$ is defined as the sum of its absolute values, i.e., $\|\xbf\|_1 = \sum\limits_j |\xbf(j)|$. The $\ell_2$ norm of a vector is equal to the square root of the sum of squares of the vector, i.e., $\|\xbf\|_2 = \sqrt{\sum\limits_j |\xbf(j)|^2}$.

%% new section %%
\section{Problem Formulation}
\label{sec:Problem}

\subsection{Signal model}

We consider a two-dimensional radar imaging scenario in which $M$
distributed antennas are used to detect $K$ targets. A
three-dimensional generalization is straightforward, but we avoid it
here for simplicity of exposition. The targets are located within a
spatial region of interest that is discretized on a grid $\Omega
\subset \Rbb^2, |\Omega | = N,$ and $N = N_x \times N_y$ with $N_x$
and $N_y$ specifying the number of grid points in the horizontal and
vertical directions. We use $\lbm \in \Omega$ to denote the index of
spatial positions, i.e., of grid-points in $\Omega$. We assume that the
grid is sufficiently fine, that off-grid errors are negligible, and
that there exists a single reflector in each grid point.

Let $\Gamma \subset \Rbb^2, |\Gamma| = M$ be the set of all the
spatial locations of the $M$ antennas. Note that the antennas may be
located anywhere, inside or outside of the grid, and not necessarily
on a grid point. Without loss of generality, we shall assume that a
subset of the antennas act as transmitter/receivers while the
remaining antennas are only receivers. A transmitting antenna at
position $\rbm \in \Gamma$ emits a time-domain pulse $p(t)$ with
frequency spectrum $P(\omega)$, where $\omega = 2\pi f$ is the angular
frequency and $f \in \Bcal$ is the ordinary frequency in the signal
bandwidth $\Bcal, |\Bcal | = F$. The received signal at antenna
position $\rbm' \in \Gamma$ due to the scattering of the transmitted
pulse by a target located at position $\lbm \in \Omega$ is given
by~\cite{LKLCVJLKC:2005}
\begin{equation}\label{eq:RcvSig}
	Y(\omega, \lbm, \rbm, \rbm') = P(\omega) G(\omega, \rbm, \rbm',\lbm) X(\lbm) + N(\omega), 
\end{equation}
where $X(\lbm) \in \Cbb$ is the scene reflectivity at location $\lbm$,
$N(\omega)$ is a noise component, and $G(\omega, \rbm, \rbm', \lbm)$
is the propagation gain characterized by
\begin{equation}\label{eq:G}
	G(\omega, \rbm, \rbm', \lbm) = a(\rbm, \rbm', \lbm) \e^{-i \omega \frac{\|\rbm - \lbm\|_2 + \|\rbm' - \lbm\|_2}{c}},
\end{equation}
where $a(\rbm, \rbm', \lbm)$ is the magnitude attenuation, $\e^{-i
  \omega \frac{\|\rbm - \lbm\|_2 + \|\rbm' - \lbm\|_2}{c}}$ is the
phase change due to the transmission delay, and $c$ denotes the speed
of light.

A typical scene comprises of multiple reflectors, at different
locations $\lbm\in\Omega$. If we assume no shadowing and no multiple
reflections, the received data for receiver-transmitter pair is the
sum of~\eqref{eq:RcvSig} over all $\lbm$ in which reflectors are
present. To compact notation, we use $\xbf \in \Cbb^N$ to denote the
vectorized reflectivity of the scene at all grid points in $\Omega$,
with empty grid points having zero reflectivity. Thus, the received
signal in \eqref{eq:RcvSig} at all frequencies $\omega$ can then be
written in vector form $\ybf(\rbm, \rbm') \in \Cbb^{F}$ as follows
\begin{equation}\label{eq:VecRcvSig}
	\ybf(\rbm, \rbm') = \Abf(\rbm, \rbm') \xbf + \nbf(\rbm, \rbm'),
\end{equation}
where $\Abf(\rbm, \rbm') \in \Cbb^{F \times N}$ includes $P(\omega)$
and $G(\omega, \rbm, \rbm',\lbm)$ and denotes the radar imaging
operator corresponding to the transmitter and receiver pair at
positions $\rbm$ and $\rbm'$, respectively, and $\nbf(\rbm, \rbm')$ is
the noise component.

\subsection{Imaging under position uncertainty}

Using \eqref{eq:VecRcvSig} to estimate the scene reflectivity $\xbf$
from measurements $\ybf(\rbm,\rbm')$ requires exact knowledge of the
imaging operator $\Abf(\rbm, \rbm')$, and consequently, the antenna
positions $\rbm$ and $\rbm'$. However, positioning errors commonly
occur in practice, especially in distributed radar settings which rely
on inaccurate global positioning systems or inertial navigation
systems.

To model the effect of position errors, we consider
transmitter-receiver pairs indexed by $m$, positioned at $(\rbm_m,
\rbm'_m)$. We denote the measurement vector at the assumed position of
the pair and the corresponding imaging operator using $\ybf_m :=
\ybf(\rbm_m, \rbm'_m)$ and $\Abf_m := \Abf(\rbm_m, \rbm'_m)$,
respectively. We use $\widetilde{\rbm}_m = \rbm_m + \ebm_m$ and
$\widetilde{\rbm}'_m = \rbm'_m + \ebm'_m$ to denote the actual positions
of the transmitter and receiver, respectively, where $\ebm_m$ and
$\ebm'_m$ denote the corresponding positioning errors.

The actual received antenna measurement $\widetilde{\ybf}_m :=
\ybf(\widetilde{\rbm}_m, \widetilde{\rbm}'_m)$ observes the scene reflectivity
$\xbf$ through the perturbed imaging operator $\widetilde{\Abf}_m :=
\Abf(\widetilde{\rbm}_m, \widetilde{\rbm}'_m)$, i.e.,
\begin{equation}
\widetilde{\ybf}_m = \widetilde{\Abf}_m \xbf + \nbf_m.
\end{equation} 
Since the operator $\widetilde{\Abf}_m$ is unknown, we need to model
the received measurements $\widetilde{\ybf}_m$ as a function of $\Abf_m$
and $\xbf$.

\subsubsection{Convolution in the measurement-domain}
Standard approaches for radar autofocus use a gain and phase
correction in the measurements' frequency domain to describe
$\widetilde{\ybf}_m$ in terms of $\Abf_m$ and $\xbf$. More precisely,
let $\widehat{\gbf}_m \in \Cbb^F$ be a complex valued vector
corresponding to the Fourier transform of a time-domain kernel $\gbf_m
\in \Rbb^M$, i.e, $\gbf_m = \Fbf_1^\H \widehat{\gbf}_m$. The received
frequency-domain measurements are expressed as
\begin{equation}\label{eq:Model1}
	\widetilde{\ybf}_m = \Dbf_{\widehat{\gbf}_m} \Abf_m \xbf + \nbf_m,
\end{equation}
where $\Dbf_{\widehat{\gbf}_m}$ is a diagonal matrix with
$\widehat{\gbf}_m$ on its diagonal entries, i.e., modulates the
acquired frequency-domain data. In other words, the position error is
assumed to affect the received signal through a time-domain
convolution with $\gbf_m$. Given $M$ measurements $\widetilde{\ybf}_m,
m \in \{1\dots M\}$, the radar autofocus problem is regarded as a
bilinear inverse problem in both the reflectivity image $\xbf$ and the
frequency-domain phase correction vectors $\widehat{\gbf}_m$ for all
$m$.

Notice that the system in~\eqref{eq:Model1} has $F$ equations with
$F+N$ unknowns, which makes it severely ill-posed. Even in the case
where $\xbf$ is sparse, the problem remains ill-posed since a general
phase correction vector $\widehat{\gbf}_m$ continues to have $F$
degrees of freedom. In order to make the problem tractable, the
kernels $\gbf_m = \Fbf_1^\H \widehat{\gbf}_m$ are often assumed to be
shift kernels, which reduces the degrees of freedom to a singe phase
angle per transmitter-receiver pair. However, the approximation that
$\gbf_m$ is a shift operator is only valid in the far field regime,
where the position error can be approximated by a one dimensional
shift in the down-range direction of the virtual antenna array, or if
the scene only contains a single reflector.

Most work in the self-calibration and radar autofocus literature
considers the measurement-domain convolution model
in~\eqref{eq:Model1} with some important simplifications. For
instance, the convolution kernels $\gbf_m$ are assumed to be
approximate shift kernels and therefore sparse in the
time-domain. Specifically, the simplification allows the kernels
$\gbf_m$ to be restricted to a lower $d$-dimensional subspace spanned
by the columns of a basis matrix $\Bbf \in \Cbb^{F\times d}, d <
F$. Combining all measurements $M$ into a single system of equations,
we obtain the following bilinear inverse problem
\begin{equation}\label{eq:CombinedSystem}
	\overline{\ybf} = \Dbf_{\widehat{\overline{\gbf}}} \overline{\Abf} \xbf + \overline{\nbf},
\end{equation}
where the over-line notation indicates stacking the $M$ vectors (or matrices) vertically into a single vector (or matrix), for example $\overline{\ybf} = \left[ \begin{array}{c}\ybf_1 \\ \vdots \\  \ybf_M \end{array} \right]$ and $\overline{\Abf} = \left[ \begin{array}{c}\Abf_1 \\ \vdots \\  \Abf_M \end{array} \right]$. Note that with this formulation, the vector $\overline{\gbf}$ belongs to the $Md$-dimensional subspace in $\Cbb^{MF}$ spanned by the basis matrix 
$$\overline{\Bbf} = \left[ \begin{array}{cccc} \Bbf & \mathbf{0} & \hdots & \mathbf{0} \\ \mathbf{0} &\Bbf & & \mathbf{0} \\ \vdots & & \ddots & \\ \mathbf{0} & \mathbf{0}& \hdots & \Bbf \end{array}\right] \in \Cbb^{MF \times Md},$$
where $\mathbf{0}$ is the $F\times d$ all zero matrix. Under this scenario, Ling and Strohmer~\cite{SelfCalibratBiconvex_LS2015} proposed the \emph{SparseLift} problem which casts the blind deconvolution problem as a convex sparse recovery program that estimates the sparse matrix $\Xbf = \overline{\gbf} \xbf^\H$ in the lifted space of the outer products of $\overline{\gbf}$ and $\xbf$. However, as the number of measurements and the dimensions of the parameters increase, the \emph{SparseLift} formulation can quickly become intractable to solve. Alternatively, Mansour et al.~\cite{MKLOBPA:2016} proposed a stochastic gradient descent approach for solving the nonconvex variation of~\eqref{eq:CombinedSystem} in the context of through-the-wall-radar-imaging. 
 
Other simplifications in the literature assume the operators $\Abf_m = \Abf$ to be fixed for all $m$. In this case, the measurements $\widetilde{\ybf}_m$ represent multiple measurements of a single vector $\zbf := \Abf\xbf$ observed through diverse channels $\widehat{\gbf}_m$, i.e., 
\begin{equation}\label{eq:BD_DiverseInputs}
	\widetilde{\ybf}_m = \Dbf_{\zbf}\widehat{\gbf}_m.
\end{equation}
Existing solutions to~\eqref{eq:BD_DiverseInputs} range from solving for a low rank matrix in the lifted space of the variables $\zbf$ and $\widehat{\gbf}_m$~\cite{BlindDeconvConvex_ARR2013,SelfCalibratBiconvex_LS2015,LiLeeBresler:2016,AD:2016}; reformulating the problem as a linear least squares problem~\cite{BN:2007,Balzano2008,GCD:2012,BPGL:2014,BlindDeconvLLS:2016,WC:2016}; or utilizing subspace identification techniques~\cite{LiuMunson:2011,LiuWieselMunson:2012,LiuWieselMunson:2013}.

While the simplifications described above can apply to special cases of autofocusing and self-calibration problems, they are not necessarily satisfied in general radar autofocus problems as we demonstrated in the previous section. We discuss next the image-domain blind deconvolution model and propose a reconstruction algorithm to solve the problem.

\subsubsection{Convolution in the image-domain}
A key contribution of our work is moving the convolution with the
shift kernel from the measurement domain to the image domain.  More
precisely, let $\hbf_m \in \Rbb^{N_h^2}, N_h \leq \min\{N_x,N_y\}$ be
a vectorized two-dimensional shift kernel of size $N_h \times
N_h$. Under the new model, the received signal of the antenna pair
indexed by $m$ is written as
\begin{equation}\label{eq:Model2}
	\widetilde{\ybf}_m = \Abf_m \left(\xbf * \hbf_m \right) + \nbf_m,
\end{equation}
where $*$ here denotes the two-dimensional convolution of the image
with the kernel.

We prove in Proposition~\ref{prop:ImageDomainConvolution} that when the transmitting and receiving antennas are affected by the same position ambiguity, the convolution kernel $\hbf_m$ is strictly a spatial shift kernel with a single nonzero entry equal to one. This situation is prevalent in systems where the transmitting and receiving antennas are collocated. The system in~\eqref{eq:Model2} may still be underdetermined with $F$ equations and $N_h^2 + N$ unknowns. However, given enough measurements, it should be possible to recover $\xbf$ and all shift kernels $\hbf_m$ by utilizing an appropriate regularization for each. In the next section, we demonstrate the appropriateness of the image-domain convolution in~\eqref{eq:Model2} compared to the measurement-domain convolution in~\eqref{eq:Model1} through an illustrative example.
\begin{proposition}\label{prop:ImageDomainConvolution}
Let $\widetilde{\ybf}_m := \widetilde{\Abf}_m \xbf,$ where $\xbf$ is a radar image defined over a spatial domain $\Omega$. Denote by $\ebm_m$ and $\ebm'_m$ the position ambiguities for the transmitter and receiver antenna pair indexed by $m$. 

If $\ebm'_m = \ebm_m$ and $\xbf$ is zero valued within a boundary of width $\ebm_m$ inside $\Omega$, then there exists a spatially shifted image $\widetilde{\xbm}(\lbm) = \delta(\lbm + \ebm_m)*\xbm(\lbm), \forall \lbm \in \Omega$ such that $\widetilde{\ybf}_m = \Abf_m \widetilde{\xbf},$ where $\delta(\lbm + \ebm_m)$ is the two dimensional shift kernel.

Otherwise, if $\ebm'_m = \ebm_m + \dbm_m$ with $\|\dbm_m\|_2 \leq \Delta$, then the approximation $\widetilde{\ybf}_m \approx \Abf_m \widetilde{\xbf}$ incurs a phase error bounded by $\e^{\pm i\omega \Delta/c}$ for each frequency $\omega$.
\end{proposition}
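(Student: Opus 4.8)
\noindent\emph{Proof sketch.} The plan is to expand the claimed identity entry by entry in the measurement frequency domain and reduce it to a change of variables over the spatial grid $\Omega$, then to handle the mismatched transmitter/receiver case as a small phase perturbation of the matched one.

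First I would write $\widetilde{\ybf}_m=\widetilde{\Abf}_m\xbf$ out coordinate-wise using \eqref{eq:RcvSig}--\eqref{eq:VecRcvSig}: its entry at angular frequency $\omega$ is
\begin{equation*}
\sum_{\lbm\in\Omega}P(\omega)\,a(\widetilde{\rbm}_m,\widetilde{\rbm}'_m,\lbm)\,\e^{-i\omega\tau_m(\lbm)/c}\,X(\lbm),
\end{equation*}
where $\widetilde{\rbm}_m=\rbm_m+\ebm_m$, $\widetilde{\rbm}'_m=\rbm'_m+\ebm'_m$, and $\tau_m(\lbm)=\|\widetilde{\rbm}_m-\lbm\|_2+\|\widetilde{\rbm}'_m-\lbm\|_2$ is the round-trip delay; the corresponding entry of $\Abf_m\xbf$ has the same form with $\rbm_m,\rbm'_m$ in place of the perturbed positions. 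I would also use the standard fact that the attenuation $a(\rbm,\rbm',\lbm)$ depends on its arguments only through the two ranges $\|\rbm-\lbm\|_2$ and $\|\rbm'-\lbm\|_2$, so that it (like each range) is invariant under a common translation of $\rbm,\rbm'$ and $\lbm$; in particular $\|\rbm+\ebm-\lbm\|_2=\|\rbm-(\lbm-\ebm)\|_2$.

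For the exact case $\ebm'_m=\ebm_m$, this turns $\tau_m(\lbm)$ into $\|\rbm_m-(\lbm-\ebm_m)\|_2+\|\rbm'_m-(\lbm-\ebm_m)\|_2$ and $a(\widetilde{\rbm}_m,\widetilde{\rbm}'_m,\lbm)$ into $a(\rbm_m,\rbm'_m,\lbm-\ebm_m)$, so the substitution $\mu=\lbm-\ebm_m$ rewrites the $\lbm$-th summand as the $(\omega,\mu)$ entry of $\Abf_m$ times $X(\mu+\ebm_m)$. Setting $\widetilde{\xbm}(\mu):=X(\mu+\ebm_m)=\big(\delta(\cdot+\ebm_m)*\xbm\big)(\mu)$ (extended by zero off $\Omega$), the sum becomes exactly $[\Abf_m\widetilde{\xbf}](\omega)$, which is the claim. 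The only delicate point is that the substitution reindexes the sum from $\Omega$ to $\Omega-\ebm_m$, and this is precisely where the hypothesis that $\xbm$ vanishes on the boundary layer of width $\ebm_m$ is used: any $\lbm\in\Omega$ with $\lbm-\ebm_m\notin\Omega$ lies in that layer, so $X(\lbm)=0$, while any $\mu\in\Omega$ with $\mu+\ebm_m\notin\Omega$ carries $\widetilde{\xbm}(\mu)=0$ by the zero extension; hence the two sums agree term by term. I expect this boundary bookkeeping to be the main thing to get right (and, on a discrete grid, one additionally reads $\delta(\cdot+\ebm_m)$ as the one-sparse kernel nearest the possibly off-grid shift $\ebm_m$).

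For the mismatched case $\ebm'_m=\ebm_m+\dbm_m$ with $\|\dbm_m\|_2\le\Delta$, I would compare $\tau_m(\lbm)$ against the delay $\tau_m^{\mathrm{sh}}(\lbm)$ of the shifted model above. The transmitter range is identical, and the receiver range differs by
\begin{equation*}
\big|\,\|\rbm'_m+\ebm_m+\dbm_m-\lbm\|_2-\|\rbm'_m+\ebm_m-\lbm\|_2\,\big|\;\le\;\|\dbm_m\|_2\;\le\;\Delta
\end{equation*}
by the reverse triangle inequality. Hence, for each $\omega$, the $\lbm$-th contribution to $\widetilde{\ybf}_m$ equals the corresponding contribution to $\Abf_m\widetilde{\xbf}$ multiplied by a unit-modulus factor $\e^{-i\omega(\tau_m(\lbm)-\tau_m^{\mathrm{sh}}(\lbm))/c}$ whose phase has magnitude at most $\omega\Delta/c$; i.e., the per-frequency error lies in the range $\e^{\pm i\omega\Delta/c}$ (the attendant mismatch in $a$ is $O(\Delta)$ and neglected). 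Summing over $\lbm$ gives $\widetilde{\ybf}_m\approx\Abf_m\widetilde{\xbf}$ with the stated bound, which completes the sketch.
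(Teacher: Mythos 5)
Your proposal is correct and follows essentially the same route as the paper's proof: expand each frequency-domain entry, use the translation invariance of the round-trip ranges under the common shift $\ebm_m$ to reindex the sum via $\widetilde{\lbm}=\lbm-\ebm_m$ with the zero boundary layer justifying the reindexing, and then apply the (reverse) triangle inequality to bound the extra delay by $\Delta$, yielding the $\e^{\pm i\omega\Delta/c}$ phase error. Your explicit treatment of the attenuation term and the on-grid reading of the shift kernel are minor refinements of the paper's argument, which simply ignores the amplitude attenuation.
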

\begin{proof}
Consider the received signal $\widetilde{\ybf}(\omega)$ at an arbitrary frequency $\omega$ in the bandwidth of the radar pulse. Ignoring the amplitude attenuation, the received signal at perturbed antenna positions $\widetilde{\rbm}, \widetilde{\rbm}'$ is given by
\begin{equation}
	\widetilde{\ybf}(\omega) = \widetilde{\Abf}(\omega)\xbf = \sum\limits_{\lbm \in \Omega} \e^{-i\omega \frac{\|\widetilde{\rbm} - \lbm\|_2 + \|\widetilde{\rbm}' - \lbm\|_2}{c}} \xbf(\lbm),
\end{equation}
where $\widetilde{\Abf}_m(\omega)$ is the row of $\widetilde{\Abf}$ corresponding to the frequency $\omega$.
Recall that $\widetilde{\rbm} = \rbm + \ebm,$ $\widetilde{\rbm}' = \rbm' + \ebm'$. When $\ebm = \ebm'$, we have
$$
\begin{array}{ll}
	\|\widetilde{\rbm} - \lbm\|_2 + \|\widetilde{\rbm}' - \lbm\|_2 &= \|\rbm + \ebm - \lbm\|_2 + \|\rbm' + \ebm' - \lbm\|_2 \\
										&= \|\rbm - \widetilde{\lbm}\|_2 + \|\rbm' - \widetilde{\lbm}\|_2,
\end{array}
$$
where $\widetilde{\lbm} = \lbm - \ebm$. Then the received signal $\widetilde{\ybf}(\omega)$ satisfies
\begin{equation}
\begin{array}{ll}
	\widetilde{\ybf}(\omega) & = \sum\limits_{\lbm \in \Omega} \e^{-i\omega \frac{\|\rbm - \widetilde{\lbm}\|_2 + \|\rbm' - \widetilde{\lbm}\|_2}{c}} \xbf(\lbm) \\
				& = \sum\limits_{\widetilde{\lbm} \in \widetilde{\Omega}} \e^{-i\omega \frac{\|\rbm - \widetilde{\lbm}\|_2 + \|\rbm' - \widetilde{\lbm}\|_2}{c}} \xbf(\widetilde{\lbm} + \ebm) \\
				& = \sum\limits_{\widetilde{\lbm} \in \widetilde{\Omega}} \e^{-i\omega \frac{\|\rbm - \widetilde{\lbm}\|_2 + \|\rbm' - \widetilde{\lbm}\|_2}{c}} \left(\delta(\widetilde{\lbm} + \ebm) *\xbf(\widetilde{\lbm} ) \right),
\end{array}
\end{equation}
where $\widetilde{\Omega}$ is a displacement of the set $\Omega$ by $\ebm$. Denote by $\widetilde{\xbf}(\widetilde{\lbm}) = \delta(\widetilde{\lbm} + \ebm) *\xbf(\widetilde{\lbm})$ and since the support of $\xbf$ is surrounded by a zero valued boundary of width $\ebm$ inside $\Omega$, we get
\begin{equation}\label{eq:ytilde_Axtilde}
\begin{array}{ll}
	\widetilde{\ybf}(\omega) & = \sum\limits_{\widetilde{\lbm} \in \widetilde{\Omega}} \e^{-i\omega \frac{\|\rbm - \widetilde{\lbm}\|_2 + \|\rbm' - \widetilde{\lbm}\|_2}{c}} \widetilde{\xbf}(\widetilde{\lbm} ) \\
				& = \sum\limits_{\lbm \in \Omega} \e^{-i\omega \frac{\|\rbm - \lbm\|_2 + \|\rbm' - \lbm\|_2}{c}} \widetilde{\xbf}(\lbm) \\
				& = \Abf(\omega) \widetilde{\xbf}(\lbm ).
\end{array}
\end{equation}

If $\ebm' = \ebm + \dbm$ for some offset $\dbm, \|\dbm\|_2 \leq \Delta$, then 
$$
\begin{array}{l}
\|\rbm - \widetilde{\lbm}\|_2 + \|\rbm' - \widetilde{\lbm}\|_2 - \Delta \\ 
	\hspace{0.5in} \leq \|\widetilde{\rbm} - \lbm\|_2 + \|\widetilde{\rbm}' - \lbm\|_2 \leq \\
	\hspace{1in} \|\rbm - \widetilde{\lbm}\|_2 + \|\rbm' - \widetilde{\lbm}\|_2 + \Delta
\end{array}
$$
Consequently, the expression in \eqref{eq:ytilde_Axtilde} incurs a maximum phase error equal to $\e^{\pm i\omega \Delta/c}$ when $\ebm' \neq \ebm$.
\end{proof}

%%%%%%%
\subsubsection{An illustrative example}
We simulate a radar scene with three targets inside a region of interest $\Omega$ and generate measurements corresponding to three transmitter-receiver antennas as shown in Figure~\ref{fig:example_three_layout}. The blue crosses and red circles indicate the true positions and the assumed (erroneous) positions of the antennas, respectively.

Consider first the antennas lying inside the dashed ellipse. The time-domain measurements corresponding to each of the antenna positions are shown in the top plot of Figure~\ref{fig:example_single_time}. According to the model in~\eqref{eq:Model1}, there exists a convolutional kernel $\gbf$ shown in the bottom plot of Figure~\ref{fig:example_single_time} that maps the red curve corresponding to $\Fbf_1^\H \Abf_m \xbf$ to the blue curve corresponding to $\Fbf_1^\H \widetilde{\Abf}_m \xbf$, where $\Fbf_1$ is the one-dimensional Fourier transform. However, it is clear from the figure that $\gbf$ cannot be a simple shift kernel. On the other hand, consider the reflectivity images in Figure~\ref{fig:example_imaging_result} (a) and (b) obtained from multiplying the received measurement $\widetilde{\ybf}_m = \widetilde{\Abf}_m \xbf$ by the adjoint of each of the true imaging operator $\widetilde{\Abf}_m$ and the erroneous imaging operator $\Abf_m$. Notice that there does exist a simple two-dimensional shift kernel that can be applied to the true reflectivity image $\xbf$ to produce the measurements $\widetilde{\ybf}_m$. To better illustrate this fact, we apply the same shift to two other antennas as shown in Figure~\ref{fig:example_three_layout} and generate the reflectivity images in Figure~\ref{fig:example_imaging_result} (c) and (d) using each of the imaging operators $\widetilde{\Abf}_m$ and $\Abf_m$, respectively. The arcs visible in Figure~\ref{fig:example_imaging_result} (a) and (b) become focused on the shifted target locations since the resulting virtual array has a wider aperture, which in turn reduces the null space of the resulting imaging operator.

\begin{figure}[ht]
\centering
\includegraphics[width= 2.5in]{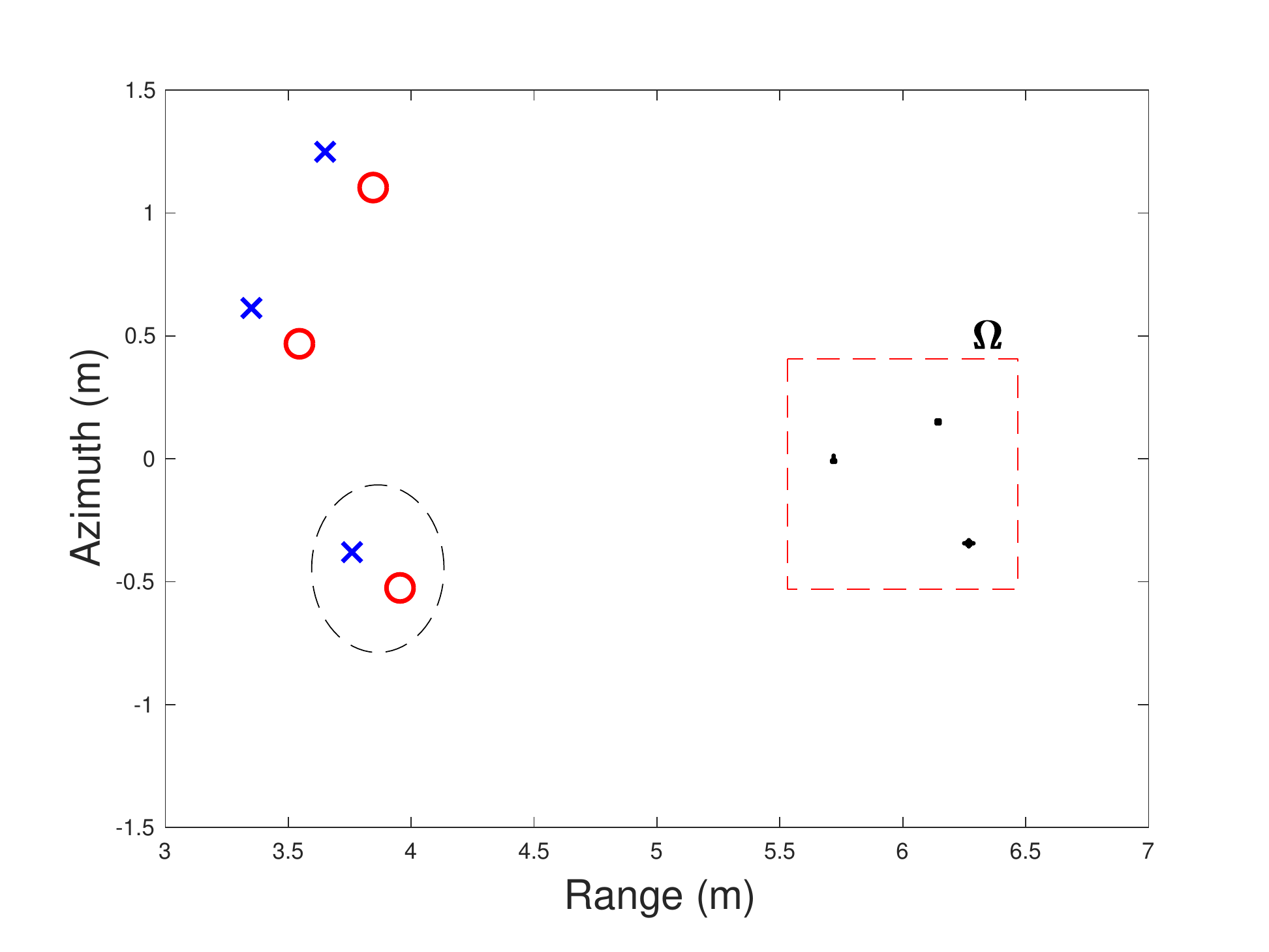}
\caption{\small Illustration of an example radar scene with three antennas at the true positions marked by the blue $\times$'s and the assumed positions marked by the red circles. Three targets are observed inside the region of interest bounded by the dashed red line.}\label{fig:example_three_layout}
\end{figure}

\begin{figure}[ht]
\centering
\includegraphics[width= 3in]{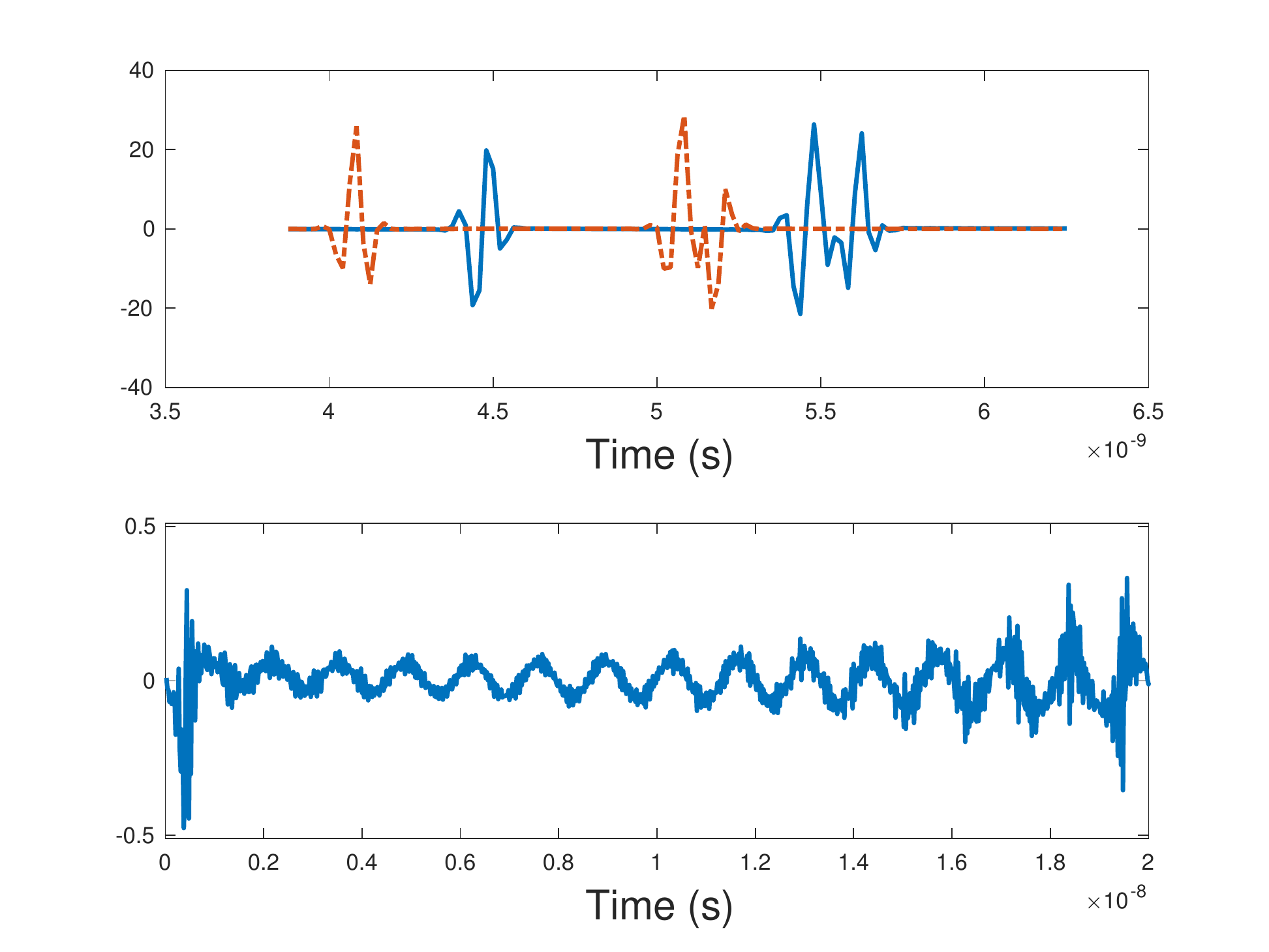}%{example_single_timedomain}
\caption{\small Top: time-domain signals observed by the third set of antennas shown in Figure~\ref{fig:example_three_layout} and highlighted by the dashed ellipse. The solid blue line corresponds to the signal observed by the location of the $\times$. The dashed red line corresponds to the signal observed by the location of the circle. Bottom: Convolution kernel that maps the red curve to the blue curve.}\label{fig:example_single_time}
\end{figure}

\begin{figure}[ht]
\centering
\mbox{
\begin{subfigure}[b]{0.22\textwidth}
                \centering
                \includegraphics[width=\textwidth]{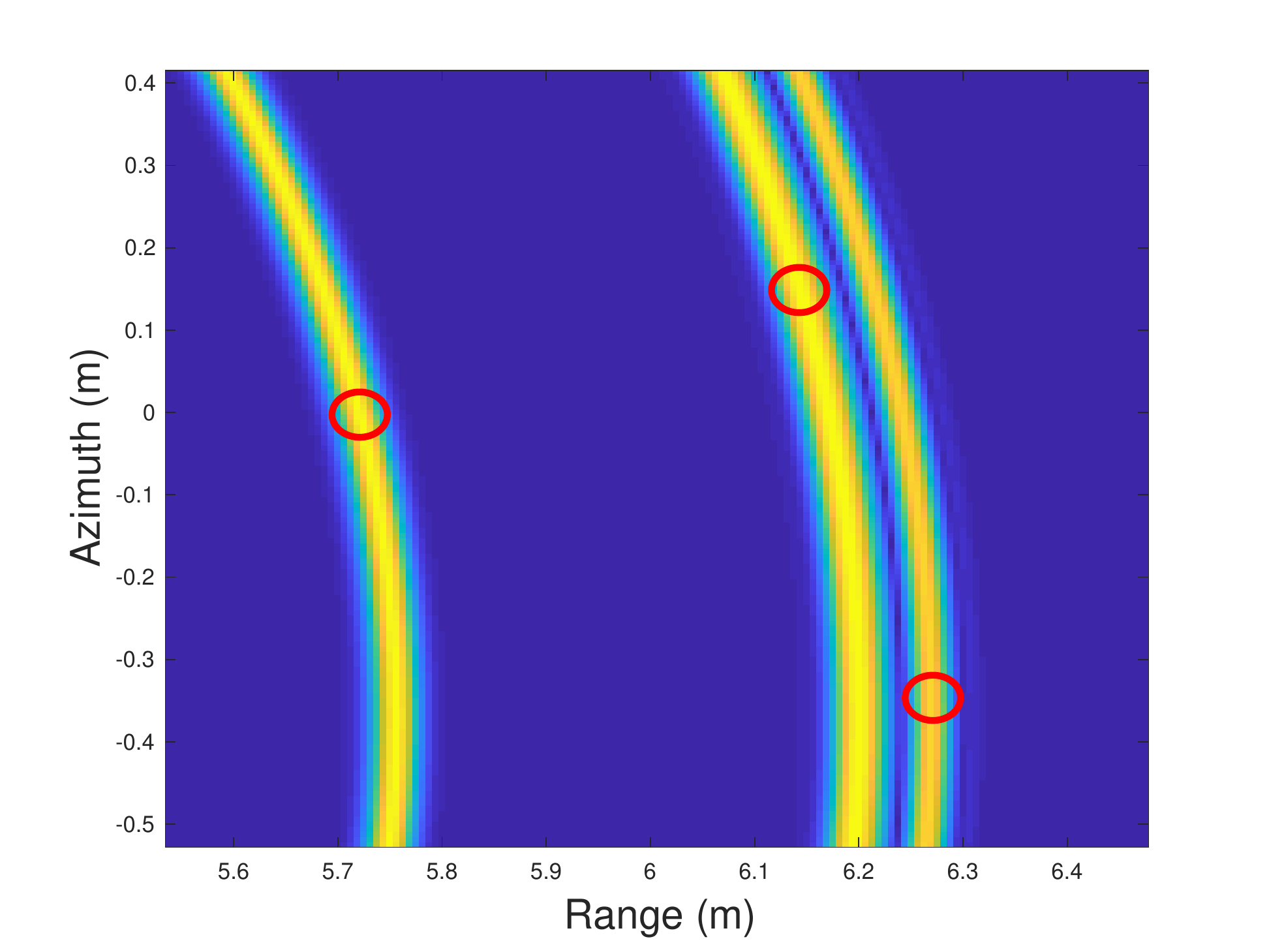}
                \caption{}
        \end{subfigure}%
        \begin{subfigure}[b]{0.22\textwidth}
                \centering
                \includegraphics[width=\textwidth]{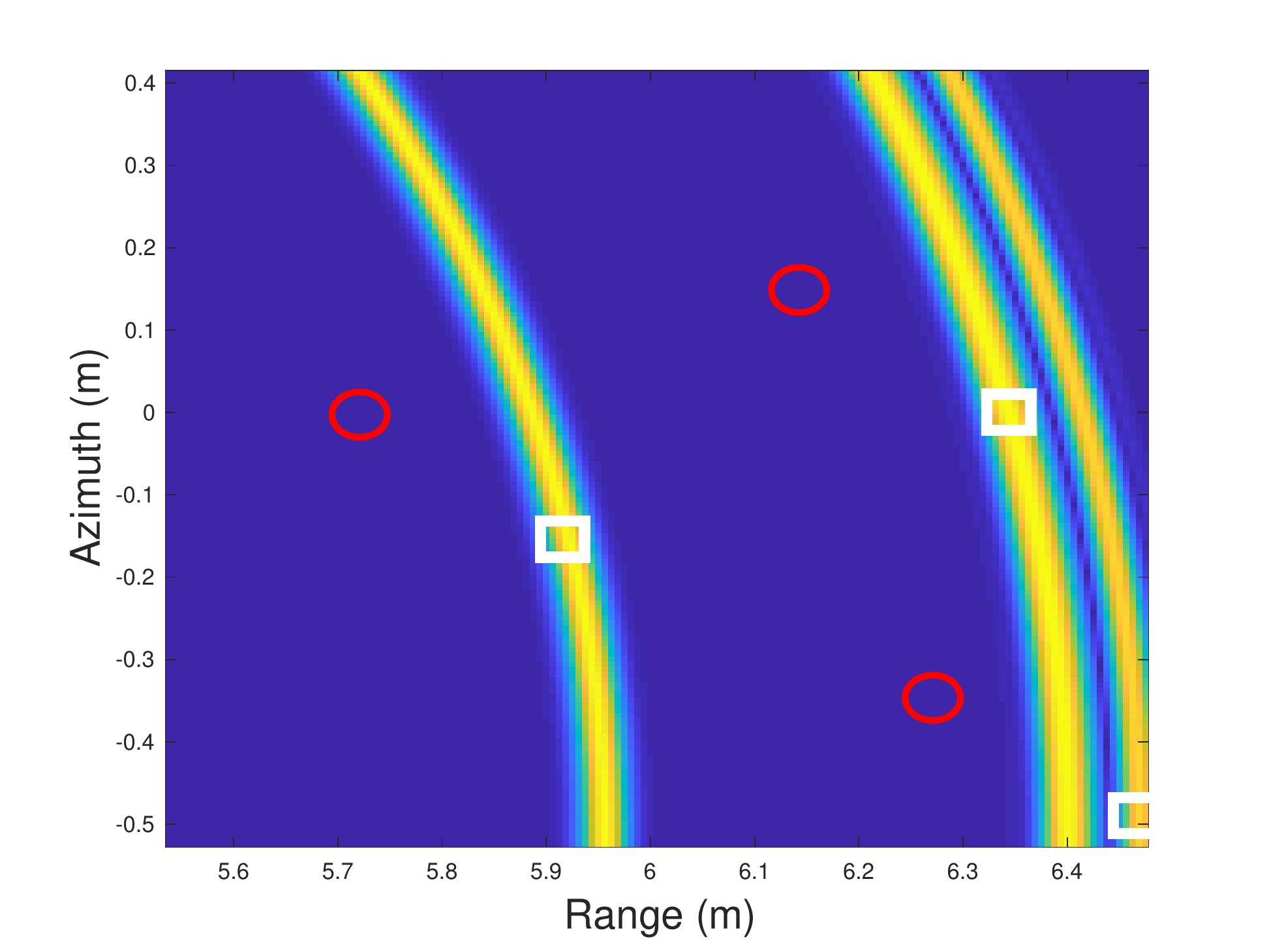}
                \caption{}
        \end{subfigure}%
}
\mbox{
\begin{subfigure}[b]{0.22\textwidth}
                \centering
                \includegraphics[width=\textwidth]{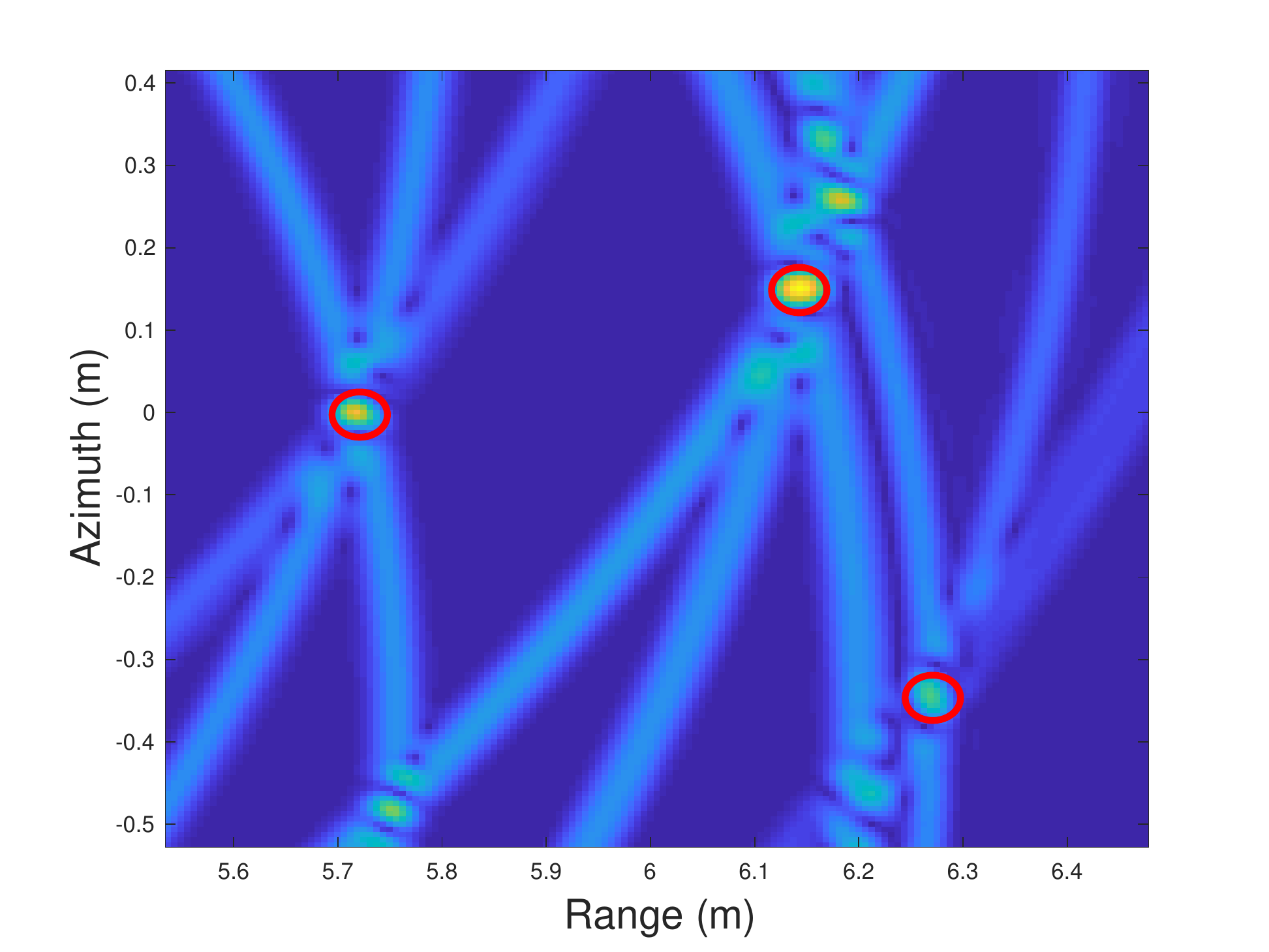}
                \caption{}
        \end{subfigure}%
        \begin{subfigure}[b]{0.22\textwidth}
                \centering
                \includegraphics[width=\textwidth]{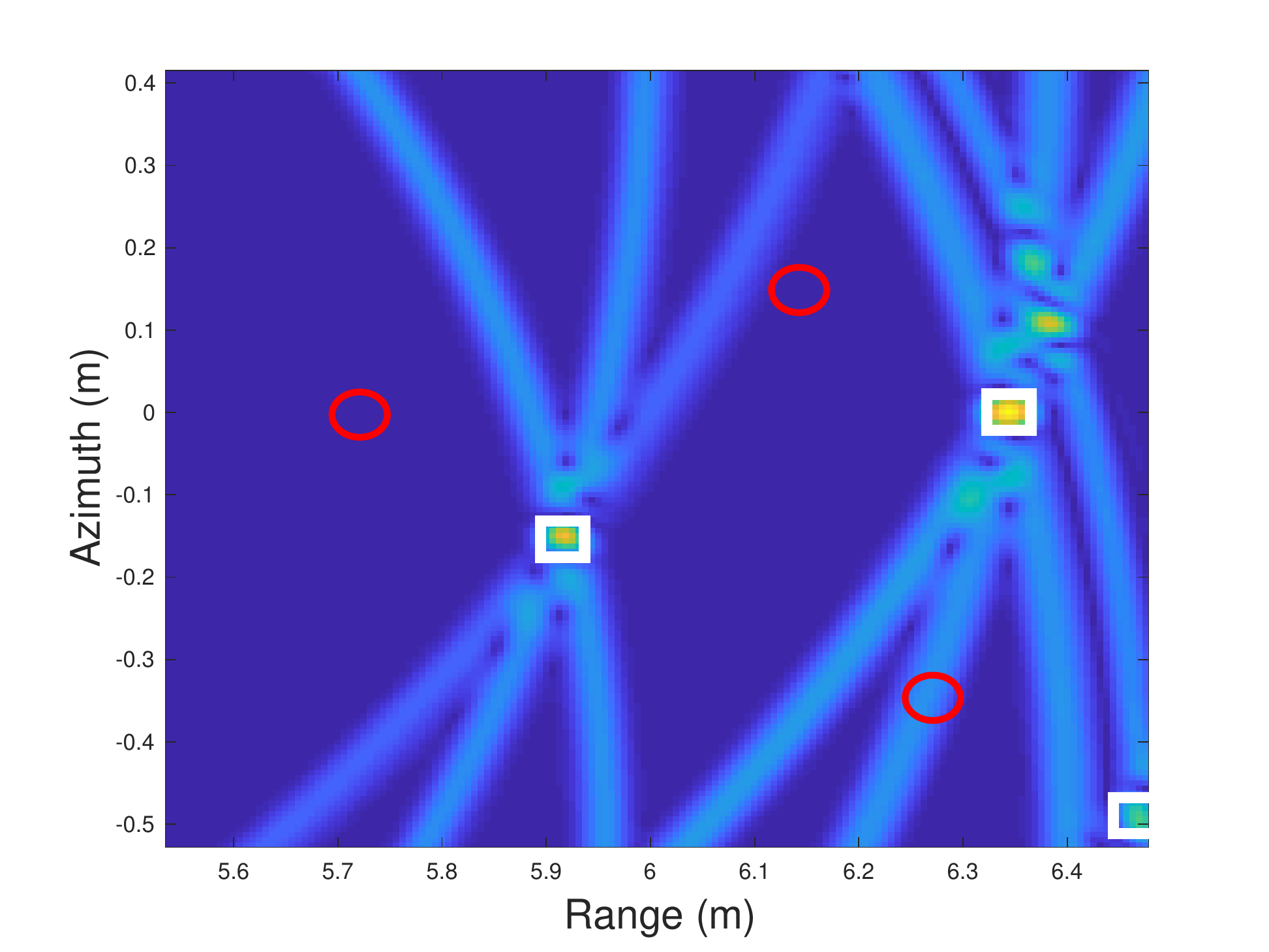}
                \caption{}
        \end{subfigure}%
}
\caption{\small Imaging results obtained by applying the adjoint of the true imaging operator $\widetilde{\Abf}$ in (a) and (c); and the wrong imaging operator $\Abf$ in (b) and (d) to the received measurements. (a) and (b) show the imaging result from the single antenna measurement marked by the $\times$ in the ellipse shown in Figure~\ref{fig:example_three_layout}. (c) and (d) show the imaging result from using the antenna positions marked by all three $\times$. The red circles and white squares indicate the true positions and the shifted positions of the targets, respectively.}\label{fig:example_imaging_result}
\end{figure}

Before ending this section, we consider again the image-domain convolution model described in~\eqref{eq:Model2} and expressed in the spatial Fourier domain below
\begin{equation}\label{eq:Model2_F}
\begin{array}{ll}
	\widetilde{\ybf}_m & = \Abf_m \left(\xbf * \hbf_m \right) + \nbf_m\\
			& = \Abf_m \Fbf_2^\H \Dbf_{\widehat{\hbf}_m} \widehat{\xbf} + \nbf_m, 
\end{array}
\end{equation}
where $\widehat{\hbf}_m = \Fbf_2 \hbf_m$ and $\widehat{\xbf} = \Fbf_2 \xbf$ denote the two-dimensional Fourier transforms of $\hbf_m$ and $\xbf$, respectively, and $\Dbf_{\widehat{\hbf}_m}$ is the diagonal matrix with $\widehat{\hbf}_m$ on the diagonal. Notice that the convolved signal $\left(\xbf * \hbf_m \right)$ in~\eqref{eq:Model2_F} is now observed through a linear operator $\Abf_m$ that has a large null space, making the variety of convex optimization methods in the literature developed for blind deconvolution inapplicable to this scenario. We may consider convexifying problem~\eqref{eq:Model2_F} by lifting into the variable representation $\Xbf_m = \hbf \xbf^\H,$ such that $\widetilde{\ybf}_m = \Abf_m \Cbf \ \mathrm{vec}(\Xbf_m),$ where the linear operator $\Cbf$ maps the vectorized outer-product matrix $\mathrm{vec}(\Xbf_m)$ to the output of the convolution $\left(\xbf * \hbf_m \right)$. However, the resulting linear operator $\Abf_m\Cbf$ contains repeated columns and lacks many of the properties required for successfully recovering the sparse and rank-one lifted variables $\Xbf_m$.

%%%%%%%%%%%%%%%%%%%%%%%%%%%%%
%% Propose Solution
%%%%%%%%%%%%%%%%%%%%%%%%%%%%%
\section{Proposed approach}\label{sec:ProposedSolution}

The approach we propose in this paper is based on using block
coordinate descent to compute the radar reflectivity image $\xbf$ and
the spatial convolution filters $\hbf_m$ from noisy measurements
$\widetilde{\ybf}_m$.

\subsection{The global model}

We first incorporate into the model in~\eqref{eq:Model2_F} the prior
information that the image $\xbf$ is sparse and piecewise continuous
and that the kernels $\hbf_m$ are two dimensional shift
operators. Therefore, we use a \emph{fused
  Lasso}\cite{fusedLasso:2005} penalty function $\Rbm_{\xbf}(\cdot)$
for $\xbf$, and an $\ell_1$ norm regularizer $\Rbm_{\hbf}(\hbf_m) =
\|\hbf_m\|_1$ for the convolution filters $\hbf_m$. The overall
optimization problem is described as follows
\begin{equation}\label{eq:FullOptimization}
\begin{array}{ll}
	\min\limits_{\scriptsize \begin{array}{l}\xbf \in \Cbb^N,\\ \hbf_m \in \Rbb_+^{N_h^2} \end{array}} &  \sum\limits_{m=1}^M \frac{1}{2}\|\widetilde{\ybf}_m - \Abf_m \Fbf_2^\H \Dbf_{ \widehat{\hbf}_m}  \widehat{\xbf}\|_2^2  + \mu\Rbm_{\hbf}(\hbf_m) \\ \\
	\text{subject to} & \Rbm_{\xbf}(\xbf) \leq \tau, \\ \\
	&  \mathbf{1}^T\hbf_m = 1, \forall \ m \in \{1\dots M\},
\end{array}
\end{equation}
where $\mathbf{1}$ is the all one vector, and as before, $\widehat{\hbf}_m = \Fbf_2 \hbf_m$ and $\widehat{\xbf} = \Fbf_2 \xbf$. We use the regularization parameter $\mu$ to control the tradeoff between the sparse prior and the data mismatch cost. We also use an upper bound $\tau$ to constrain the penalty function $\Rbm_{\xbf}(\xbf)$. We describe the detailed procedure for computing $\tau$ later in this section.

The \emph{fused Lasso} regularizer $\Rbm_{\xbf}(\xbf)$ combines the $\ell_1$ norm and the total variation (TV) norm of a signal: 
\begin{equation}\label{eq:RegX}
	\Rbm_{\xbf}(\xbf) = \|\xbf\|_1 +  \gamma\|\xbf\|_{TV}, 
\end{equation}
where the total variation norm $\|\xbf\|_{TV}$ is defined as the sum of the $\ell_2$ norms of groups of elements in the gradient vector $\sbf = \Ebf\xbf,$ where $\Ebf: \Cbb^N \rightarrow \Cbb^{2N}$ is the two dimensional finite difference operator, such that, the first $N$ entries of $\sbf$ contain the horizontal gradient coefficients, and the second $N$ entries contain the vertical gradient coefficients. Therefore, the total variation norm of $\xbf$ is expressed in terms of the $\ell_{2,1}$ mixed norm of $\sbf$ as follows:
\begin{equation}
	\|\xbf\|_{TV} := \|\sbf\|_{2,1} = \sum\limits_{j=1}^N \sqrt{\sbf^2(j) + \sbf^2(N+j)}.
\end{equation}

The minimization in~\eqref{eq:FullOptimization} is nonconvex and our
aim is to find a stationary point to the problem. Therefore, we
present in Algorithm~\ref{alg:BlockCoordDescent} a block coordinate
descent approach that alternates between descent steps for each of
$\xbf$ and $\hbf_m,$ for all $m$. The shift kernels $\hbf_m$ are all
initialized to the no-shift kernel $\hbf^0$, an $N_h\times N_h$
zero-valued matrix with the central entry set equal to one. For each
descent step, we apply a small number of iterations of the fast
iterative shrinkage/thresholding algorithm (FISTA)~\cite{FISTA:2009}
for updating $\hbf_m$, and a similar number of iterations of an
accelerated projected gradient descent algorithm (FPGD) inspired by
FISTA and~\cite{BergFriedlander:2008,BergFriedlander:2011} for
updating $\xbf$. The optimization subroutines shown in
Algorithm~\ref{alg:BlockCoordDescent} use the data fidelity cost
function
\begin{equation}\label{eq:Dcal}
\Dbm(\u) := \sum\limits_{m=1}^M \frac{1}{2}\|\widetilde{\ybf}_m - \Acal^m \u\|_2^2,
\end{equation} 
where $\u$ refers to either the image $\xbf$ or the sequence of convolution kernels $\hbf_m$. The forward operator with the respect to $\xbf$ given the estimates of the kernels $\hbf_m^t$ at iteration $t$ is defined as 
\begin{equation}\label{eq:forwardX}
	\Acal^m_{\xbf}(\hbf_m^t) := \Abf_m \Fbf_2^\H \Dbf_{ \Fbf_2 \hbf_m^t}  \Fbf_2.
\end{equation}
Similarly, the forward operator with respect to $\hbf_m$ given the estimate of the image $\xbf^t$ at iteration $t$ is defined as
\begin{equation}\label{eq:forwardH}
	\Acal^m_{\hbf}(\xbf^t) := \Abf_m \Fbf_2^\H \Dbf_{ \Fbf_2 \xbf^t}  \Fbf_2.
\end{equation}
Moreover, every descent step of $\hbf_m,$ produces an estimate $\widetilde{\hbf}_m$ which does not necessarily satisfy the shift kernel properties. Therefore, we use a projector $\bm{P}(\widetilde{\hbf}_m)$ onto the space of shift kernels which sparsifies $\widetilde{\hbf}_m$ by setting to one its largest entry and setting the remaining entries to zero. When the largest value is shared among more than one entry, we choose the one that is closest to the center of the kernel and set the remaining entries to zero.

%%%%%%%%%
\begin{algorithm}[ht]
\caption{ Block coordinate descent for solving~\eqref{eq:FullOptimization}}\label{alg:BlockCoordDescent}
\begin{algorithmic}[1]
\Statex{\textbf{input: }measurements $\{\widetilde{\ybf}_m\}_{m=1}^M$, initial guess $\xbf^0, \hbf^0$, maximum subroutine iterations $T$, and parameters $\tau, \mu$.}
\Statex{\textbf{set: } $j \leftarrow 1$; $\widetilde{\hbf}_m^0, \hbf_m^0 \leftarrow \hbf^0$ for all $m$}
	\Repeat
	\State $\Acal^m_{\xbf} \leftarrow \Acal^m_{\xbf}(\hbf_m^{j-1})$ for all $m$
	\State Compute $\tau^j$ according to \eqref{eq:tau_update} or \eqref{eq:tau_update2}
	\State $\xbf^j \leftarrow \texttt{fpgd}(\{\Acal^m_{\xbf}\}_{m=1}^M,  \Rbm_{\xbf}, \{\widetilde{\ybf}_m\}_{m=1}^M, \tau^j, \xbf^{j-1},T)$
	\For{$m \gets 1 \textrm{ to } M$}
	\State $\Acal^m_{\hbf} \leftarrow \Acal^m_{\hbf}(\xbf^j)$
	\State $\widetilde{\hbf}_m^j \leftarrow \texttt{fista}(\Acal^m_{\hbf},  \mu\Rbm_{\hbf}, \widetilde{\ybf}_m, \widetilde{\hbf}_m^{j-1},T)$
	\State $\hbf_m^j \leftarrow \bm{P}(\widetilde{\hbf}_m^j)$
	\EndFor
	\State $j \leftarrow j+1$
	\Until{stopping criterion}
	\Statex{\textbf{return:} estimate of the radar image $\xbf^j$}
\end{algorithmic}
\end{algorithm}
%%%%%%%%%

%%
\subsection{\texttt{fista} subroutine for updating $\hbf_m$}

In general, FISTA can be used to solve convex optimization problems of the form
\begin{equation}\label{eq:FISTA_OptimProb}
	\min\limits_{\u \in \Scal} \Dbm(\u) + \lambda \Rbm(\u),
\end{equation}
where $\Dbm(\u)$ is a smooth data fidelity cost function and $\Rbm$ is a penalty function which can be non-smooth. The iterative procedure involves a proximal gradient update with a Lipschitz step size, in addition to a momentum term. The proximal operator is defined as
\begin{equation}\label{eq:GeneralProx}
	\textrm{prox}_{\eta \Rbm} (\z) := \arg\min\limits_{\u \in \Scal} \left\{ \frac{1}{2} \|\u - \z\|_2^2 + \eta \Rbm(\u)\right\}.
\end{equation}
More recently, it was shown in~\cite{KMW:2017} that the proximal operator can be replaced by a general nonexpansive denoiser without affecting the convergence performance of the FISTA routine.

Note that the expression for $\Dbm$ in~\eqref{eq:Dcal} is separable in $\hbf_m$ for every $m$. Therefore, the FISTA subroutine for updating $\hbf_m$ reduces to a standard non-negative sparse recovery problem described in Algorithm~\ref{alg:fistaH}. The function $\mathcal{T}_+\left( \z; \eta \right)$ in step 3 of the algorithm is the element-wise non-negative soft-thresholding operator induced by the $\ell_1$ proximal shrinkage function that modifies the entries $\z(j), \forall j \in \{1\dots J\}$ of a vector $\z \in \Rbb^J$ as follows:
\begin{equation}\label{eq:NNsoft-thresholding}
	\mathcal{T}_+\left( \z; \beta \right) = \left\{
	\begin{array}{ll}
		\z(j) - \beta, & \textrm{if} \ \z(j) > \beta \\
		0,& \textrm{otherwise.}
	\end{array}
	\right.
\end{equation}
Finally, we enforce the unit sum constraint by scaling the vectors $\u^t$ as shown in step 5 of the algorithm.
%%%%%%%%%
\begin{algorithm}[ht]
\caption{\texttt{fista} subroutine for updating $\hbf_m$}\label{alg:fistaH}
%\setstretch{1.5}
\begin{algorithmic}[1]
\Statex{\textbf{input: } $\Acal^m_{\hbf},  \mu\Rbm_{\hbf}, \widetilde{\ybf}_m, \widetilde{\hbf}_m^{j-1},T$.}
\Statex{\textbf{set: } $q_0 = 1$, $\u^0 = \s^0 = \widetilde{\hbf}_m^{j-1}$}
	\State $\alpha \leftarrow$ inverse of maximum eigenvalue of $\Acal^{m \H}_{\hbf} \Acal^m_{\hbf}$
	\For{$t \gets 1 \textrm{ to } T$}
	\State $\z^t \leftarrow s^{t-1} + \alpha \Acal^{m \H}_{\hbf}\left(  \widetilde{\ybf}_m - \Acal^m_{\hbf} \s^{t-1} \right)$
	\State $\u^t \leftarrow \mathcal{T}_+\left( \z^t; \alpha\mu \right)$
	\State $\u^t \leftarrow \frac{1}{\mathbf{1}^T\u^t} \u^t$
	\State $q_t \leftarrow \frac{1 + \sqrt{1 + 4q_{t-1}^2}}{2}$
	\State $\s^t \leftarrow \u^t + \frac{q_{t-1} - 1}{q_t}(\u^t - \u^{t-1})$
	\EndFor
	\Statex{\textbf{return:} $\u^T$}
\end{algorithmic}
\end{algorithm}
%%%%%%%%%

%%
\subsection{\texttt{fpgd} subroutine for updating $\xbf_m$}
Next, we describe the rules for computing $\tau^j$ and the fast projected gradient descent subroutine \texttt{fpgd} for updating $\xbf$ in Algorithm~\ref{alg:fpgdX}. 

Given the updated $\hbf_m$ kernels, our aim at this stage is to find the image $\xbf$ that minimizes the \emph{fused Lasso} penalty term $\Rbm_{\xbf}$ subject to the data fidelity condition $\|\Dbm(\xbf)\|_2 \leq \sigma$, where $\sigma$ is an upper bound on the noise level, i.e., $\|\nbf\|_2 \leq \sigma$. In~\cite{BergFriedlander:2008,BergFriedlander:2011}, van den Berg and Friedlander developed a framework for general gauge function minimization with least squares constraints, where the problem is ``flipped" by solving a sequence of simpler least squares subproblems with gauge constraints using projected gradient descent. The sequence of subproblems is determined by specifying the upper bound $\tau$ on the gauge penalty as well as an update rule for computing the optimal $\tau$.

\subsubsection{Computing $\tau^j$}
We follow a similar approach to~\cite{BergFriedlander:2011} and define the following subproblem for updating $\xbf$
\begin{equation}\label{eq:x_subproblem}
	\min\limits_{\xbf} \sum\limits_m \frac{1}{2}\|\widetilde{\ybf}_m - \Acal^m_{\xbf} \xbf\|_2^2 \ \textrm{s.t.} \ \Rbm_{\xbf}(\xbf) \leq \tau^j,
\end{equation}
where the upper bound $\tau^j$ is computed in terms of the residual $\widetilde{\rbf}_m = \widetilde{\ybf}_m - \Acal^m_{\xbf} \xbf$ and the polar function $\Rbm_{\xbf}^o$ to the gauge as follows
\begin{equation}\label{eq:tau_update}
	\tau^j = \frac{\sum\limits_m\|\widetilde{\rbf}_m\|^2_2 - \sigma\sqrt{\sum\limits_m\|\widetilde{\rbf}_m\|^2_2}}{\Rbm_{\xbf}^o(\sum\limits_m \Acal^{m \H}_{\xbf}\widetilde{\rbf}_m)}.
\end{equation}
The polar function of the \emph{fused Lasso} penalty in \eqref{eq:RegX} is given by
\begin{equation}\label{eq:polar_fusedLasso}
	\Rbm_{\xbf}^o(\u) = \max\{\|\u\|_{\infty}, \gamma\|\s\|_{2, \infty}\},
\end{equation}
where $\s = \Ebf\u$ is the gradient vector of $\u$, the $\ell_{2, \infty}$ norm $\|\s\|_{2, \infty} = \max\limits_{j \in \{1\dots N\}} \sqrt{\s^2(j) + \s^2(N+j)}$, and the $\ell_{\infty}$ norm $\|\u\|_{\infty}$ of a vector $\u$ selects its maximum entry in absolute value.

We initialize the residual vector $\widetilde{\rbf}_m = \widetilde{\ybf}_m$ and fix it for all iterations $j$ of Algorithm~\ref{alg:BlockCoordDescent} until $\xbf$ converges to a stationary point $\xbf^*$ at some iteration $j^*$. After that, we update $\widetilde{\rbf}_m = \widetilde{\ybf}_m - \Acal^m_{\xbf} \xbf^*$ and the bound
\begin{equation}\label{eq:tau_update2}
	\tau^j = \tau^{j^*} + \frac{\sum\limits_m\|\widetilde{\rbf}_m\|^2_2 - \sigma\sqrt{\sum\limits_m\|\widetilde{\rbf}_m\|^2_2}}{\Rbm_{\xbf}^o(\sum\limits_m \Acal^{m \H}_{\xbf}\widetilde{\rbf}_m)}, \ \forall j > j^*
\end{equation}
and continue running the iterations of Algorithm~\ref{alg:BlockCoordDescent} until $\xbf$ converges to a new stationary point or until a maximum number of iterations has been reached, at which point we consider the exit condition has been met for our application. Alternatively, the iterative procedure of Algorithm~\ref{alg:BlockCoordDescent} may be continued until the sequence of $\tau^{j^*}$'s converges.

\subsubsection{Updating $\xbf$}
The fast projected gradient descent subroutine for updating $\xbf$ is summarized in Algorithm~\ref{alg:fpgdX}. The approach capitalizes on the momentum term used in FISTA to speed up convergence but differs from FISTA in that the proximal gradient update is replaced with a projected gradient update in order to satisfy the constraint $\Rbm_{\xbf}(\xbf) \leq \tau^j$. What remains is to specify the procedure for projecting onto the constrained \emph{fused Lasso} penalty.

Proposition~\ref{prop:projX} shows that the projection of a point $\z \in \Cbb^N$ onto $\Rbm_{\xbf}(\z) = \tau$ can be obtained using the proximal shrinkage of the \emph{fused Lasso} penalty for an appropriate regularization parameter $\lambda$ that can be computed using Newton's method.
\begin{proposition}\label{prop:projX}
Let $\z$ be any point in $\Cbb^N$, and denote by $\Rbm_{\xbf}(\z) = \|\z\|_1 + \gamma\|\z\|_{TV}$ the \emph{fused Lasso} penalty function, such that, $\Rbm_{\xbf}(\z) > \tau$ for some scalar $\tau$. Then the orthogonal projection of $\z$ onto the ball $\Fcal$ defined by $\{\u \in \Cbb^N : \Rbm_{\xbf}(\u) = \tau\}$ is obtained using the proximal shrinkage operation $\mathrm{prox}_{\lambda^* \Rbm_{\xbf}}(\z)$ defined as:
$$
	\u^* = \arg\min\limits_{\u} \frac{1}{2}\|\u - \z\|_2^2 + \eta^* \Rbm_{\xbf}(\u),
$$
where $\eta^*$ is a regularization parameter computed using Newton's root finding method.
\end{proposition}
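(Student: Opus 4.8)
\emph{Proof proposal.} The plan is to recognize the stated projection as a convex program with a single inequality constraint, write its KKT system, observe that the KKT stationarity condition is exactly the optimality condition of a proximal evaluation of $\Rbm_{\xbf}$, and then pin down the multiplier by a scalar monotone equation solvable by Newton's method.

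First I would rewrite the projection as $\u^* = \arg\min_{\u}\{\tfrac12\|\u-\z\|_2^2 : \Rbm_{\xbf}(\u)\le\tau\}$. Since $\Rbm_{\xbf}(\u)=\|\u\|_1+\gamma\|\u\|_{TV}$ is a norm on $\Cbb^N$ (the sum of the $\ell_1$ norm and the convex, positively homogeneous seminorm $\gamma\|\cdot\|_{TV}$), the feasible set is closed and convex and, for $\tau>0$, contains $0$ in its interior, so Slater's condition holds, the projection exists and is unique, and strong duality applies. Because $\Rbm_{\xbf}(\z)>\tau$, the point $\z$ is infeasible, hence the constraint is active at $\u^*$, i.e.\ $\Rbm_{\xbf}(\u^*)=\tau$; this is what justifies replacing the ball by its boundary $\Fcal$ in the statement. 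The KKT conditions then furnish a multiplier $\eta^*\ge 0$ with $\z-\u^*\in\eta^*\,\partial\Rbm_{\xbf}(\u^*)$, which is precisely the stationarity condition characterizing $\u^*=\mathrm{prox}_{\eta^*\Rbm_{\xbf}}(\z)$. Moreover $\eta^*>0$: if $\eta^*=0$ then $\u^*=\z$, contradicting infeasibility. So $\u^*=\mathrm{prox}_{\eta^*\Rbm_{\xbf}}(\z)$ with $\eta^*$ such that $\Rbm_{\xbf}(\mathrm{prox}_{\eta^*\Rbm_{\xbf}}(\z))=\tau$, which is the content of the proposition.

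Next I would define $\phi(\eta):=\Rbm_{\xbf}(\mathrm{prox}_{\eta\Rbm_{\xbf}}(\z))$ for $\eta\ge 0$ and establish: (i) $\phi$ is continuous, since the proximal map is nonexpansive in $\z$ and jointly continuous in $(\eta,\z)$ and $\Rbm_{\xbf}$ is Lipschitz; (ii) $\phi$ is nonincreasing, the standard monotonicity property of proximal maps of positively homogeneous penalties (a larger regularization weight cannot increase the penalty value of the minimizer); (iii) $\phi(0)=\Rbm_{\xbf}(\z)>\tau$ and $\phi(\eta)\to\Rbm_{\xbf}(0)=0$ as $\eta\to\infty$, since $\mathrm{prox}_{\eta\Rbm_{\xbf}}(\z)$ tends to the unique minimizer of $\Rbm_{\xbf}$. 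By the intermediate value theorem there is $\eta^*>0$ with $\phi(\eta^*)=\tau$, and any such $\eta^*$ yields the unique projection by the previous paragraph. For the numerical claim, I would observe that on each region where the support of $\u$ and the active groups of the total-variation term are locally constant the map $\z\mapsto\mathrm{prox}_{\eta\Rbm_{\xbf}}(\z)$ is affine, so $\phi$ is piecewise smooth with a closed-form (one-sided) derivative obtained from the generalized Jacobian of the fused-Lasso prox; applying Newton's method to $\phi(\eta)=\tau$, safeguarded by bisection across the finitely many breakpoints, converges to $\eta^*$.

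The \emph{main obstacle} I expect is the scalar-equation analysis rather than the KKT reduction: carefully justifying the monotonicity and limiting behaviour of $\phi$ for the fused-Lasso penalty on $\Cbb^N$ — so that the root is essentially unique and Newton's iteration is well posed — and handling the non-differentiability of $\mathrm{prox}_{\eta\Rbm_{\xbf}}$ (the breakpoints where the support or the active TV groups change), which requires some subdifferential calculus for $\|\cdot\|_1+\gamma\|\cdot\|_{TV}$ and care in defining the Newton step there.
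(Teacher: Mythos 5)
Your proposal is correct and takes essentially the same route as the paper: dualize the projection so that the minimizer is $\mathrm{prox}_{\eta\Rbm_{\xbf}}(\z)$ for a multiplier $\eta^*$ characterized as the root of $\eta \mapsto \Rbm_{\xbf}(\u_{\eta})-\tau$, which is then found by Newton's method. The only substantive difference is in the Newton step itself --- the paper derives an explicit generalized derivative $g=-\|\u_{\eta}\|_0-\gamma\|\Ebf\u_{\eta}\|_{2,0}$ from a variational expression for $\Rbm_{\xbf}(\u_{\eta})$, while you appeal to piecewise-affineness of the prox with a safeguarded iteration --- and your Slater/active-constraint/IVT details simply supply rigor that the paper's brief Lagrangian sketch omits.
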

\begin{proof}
The orthogonal projection of $\z$ onto $\Fcal$ is given by the solution to the problem
\begin{equation}\label{eq:proj}
\begin{array}{ll}
	&\min\limits_{\u} \frac{1}{2}\|\u - \z\|_2^2 \ \textrm{subject to} \ \Rbm_{\xbf}(\u) = \tau \\
	\Leftrightarrow & \max\limits_{\eta} \min\limits_{\u} \frac{1}{2}\|\u - \z\|_2^2 + \eta (\Rbm_{\xbf}(\u) - \tau) \\
	\Leftrightarrow & \max\limits_{\eta} \ \lambda\Rbm_{\xbf}(\u_{\eta})- \tau\lambda,
\end{array}
\end{equation}
where $\u_{\eta} = \mathrm{prox}_{\eta\Rbm_{\xbf}}(\z)$. Then, we need to find the $\eta$ that achieves the root of the function $f(\eta) = \Rbm_{\xbf}(\u_{\eta}) - \tau$. However, the solution $\u_{\eta}$ does not have an analytic form. Therefore, we use the variational expression 
$$
\Rbm_{\xbf}(\u_{\eta}) = \|\z_{\Omega_{\eta}}\|_1 - \|\u_{\eta}\|_0 \eta + \gamma(\|(\Ebf \z)_{\Gamma_{\eta}}\|_{2,1} - \|\Ebf\u_{\eta}\|_{2,0}\eta),
$$ 
where $\Omega_{\eta}$ is the support set of $\u_{\eta}$, and $\Gamma_{\eta}$ is the support set of the row norm vector of $\Ebf\u_{\eta}$. Consequently, we can evaluate the gradient of $f(\eta)$ with respect to $\eta$ as $g = - \|\u_{\eta}\|_0 - \gamma \|\Ebf\u_{\eta}\|_{2,0}$. Finally, the parameter $\eta$ is updated using Newton's method as $\eta = \max\left\{0, \eta - \frac{f(\eta)}{g}\right\}$ until it converges to $\eta^*$.
\end{proof}

The proximal shrinkage operation of the combined $\ell_1$ norm and total variation regularizers of $\xbf$ is performed by splitting the proximal operators into the two stages shown in steps 2 and 3 of Algorithm~\ref{alg:projX}. In the first stage, the soft-thresholding operator $\mathcal{T}\left( \z^t; \alpha\eta \right)$ is used to sparsify the signal $\z^t$, where
\begin{equation}\label{eq:soft-thresholding}
	\mathcal{T} \left( \z; \beta \right) = \left\{
	\begin{array}{ll}
		\z(j) - \beta, & \textrm{if} \ \z(j) > \beta \\
		\z(j) + \beta, & \textrm{if} \ \z(j) < -\beta \\
		0,& \textrm{otherwise.}
	\end{array}
	\right.
\end{equation} 
A second proximal operator is then applied in step 3 of the algorithm to enforce the total variation regularization. We implement this proximal operator using the alternating direction method of multipliers (ADMM) algorithm~\cite{Douglas1956NumerSol,ADMM_Boyd:2011}.

%%%%%%%%%
\begin{algorithm}[ht]
\caption{\texttt{fpgd} subroutine for updating $\xbf$}\label{alg:fpgdX}
\begin{algorithmic}[1]
\Statex{\textbf{input: } $\{\Acal^m_{\xbf}\}_{m=1}^M,  \Rbm_{\xbf}, \{\widetilde{\ybf}_m\}_{m=1}^M, \tau^j, \xbf^{j-1},T$.}
\Statex{\textbf{set: } $q_0 = 1$, $\u^0 = \s^0 = \xbf^{j-1}$}
	\State $\alpha \leftarrow$ inverse of maximum eigenvalue of $\sum\limits_{m=1}^M\Acal^{m \H}_{\xbf} \Acal^m_{\xbf}$
	\For{$t \gets 1 \textrm{ to } T$}
	\State $\z^t \leftarrow s^{t-1} + \alpha \sum\limits_{m=1}^M\Acal^{m \H}_{\xbf}\left(  \widetilde{\ybf}_m - \Acal^m_{\xbf} \s^{t-1} \right)$
	\State $\u^t \leftarrow \textrm{proj}_{\Rbm_{\xbf}}(\z^t, \tau^j)$
%	\State $\v^t \leftarrow \mathcal{T}\left( \z^t; \alpha\lambda \right)$
%	\State $\u^t \leftarrow \arg\min\limits_{\u \in \Cbb^N} \left\{ \frac{1}{2} \|\u - \v^t\|_2^2 + \alpha\lambda\gamma \|\u\|_{TV})\right\}$
	\State $q_t \leftarrow \frac{1 + \sqrt{1 + 4q_{t-1}^2}}{2}$
	\State $\s^t \leftarrow \u^t + \frac{q_{t-1} - 1}{q_t}(\u^t - \u^{t-1})$
	\EndFor
	\Statex{\textbf{return:} $\u^T$}
\end{algorithmic}
\end{algorithm}
%%%%%%%%%

%%%%%%%%%
\begin{algorithm}[ht]
\caption{$\textrm{proj}_{\Rbm_{\xbf}}:$ constrained \emph{fused Lasso} projector}\label{alg:projX}
\begin{algorithmic}[1]
\Statex{\textbf{input: } $\z,  \textrm{finite difference operator } \Ebf,  \tau, \alpha$.}
\Statex{\textbf{set: } $\lambda = 0, \u = \z$}
	\While{$\Rbm_{\xbf}(\u) > \tau$}
	\Statex Compute proximal shrinkage
	\State $\v \leftarrow \mathcal{T}\left( \z; \alpha\lambda \right)$
	\State $\u \leftarrow \arg\min\limits_{\widetilde{\u} \in \Cbb^N} \left\{ \frac{1}{2} \|\widetilde{\u} - \v\|_2^2 + \alpha\lambda\gamma \|\Ebf \widetilde{\u}\|_{2,1})\right\}$
	\Statex Update $\lambda$
	\State $g \leftarrow -\|\u\|_0 - \gamma\|\Ebf\u\|_{2,0}$
	\State $\lambda \leftarrow \max\left\{0, \lambda - \frac{\Rbm_{\xbf}(\u) - \tau}{\alpha g}\right\}$
	\EndWhile
	\Statex{\textbf{return:} $\u$}
\end{algorithmic}
\end{algorithm}
%%%%%%%%%

%%%%%%%%%%%%%%%%%%%%%%%%%%%%%%%%%%%%%%%%%%%%%
%% Performance Evaluation
%%%%%%%%%%%%%%%%%%%%%%%%%%%%%%%%%%%%%%%%%%%%%
\section{Performance Evaluation}\label{sec:Results}
In this section we evaluate the performance of our approach using both
simulated data and real experimental radar data.

%%%%%%%
\subsection{Simulated data}
We simulate a radar scene acquired by 32 distributed antennas divided into four arrays as shown in Figure~\ref{fig:introLayout}. The true antenna positions are indicated by the $\times$'s whereas the erroneous assumed positions are indicated by the dots. The average absolute value of the position error of the antennas is around $2\lambda$ with a maximum error of $3.5\lambda$, where $\lambda$ is the wavelength of the center frequency of a differential Gaussian pulse centered at 6 GHz with a 9 GHz bandwidth. The received signals are contaminated with white Gaussian noise at 4dB, 6dB, 8dB, 10dB, 15dB and 20dB peak signal to noise ratio (PSNR) after matched-filtering with the transmitted pulse. 

\begin{figure}[ht]
%\centering
\hspace{-0.2in}\includegraphics[width=4.05in]{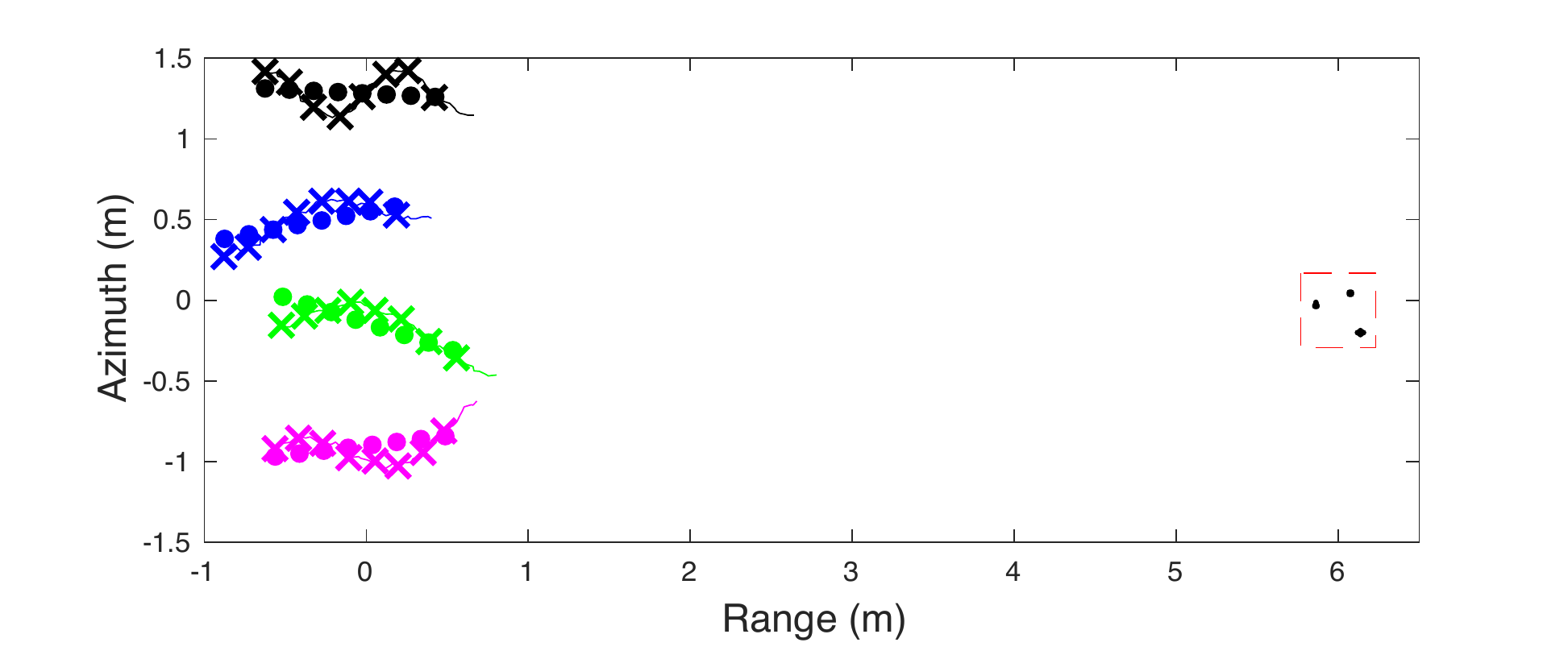}%{introLayout_v2}
\caption{\small A distributed radar acquisition system with position ambiguity. The round dots indicate the assumed but erroneous antenna positions, while the $\times$'s indicate the true positions.}\label{fig:introLayout}\vspace{-0.2in}
\end{figure}

We generate ten different simulation layouts by randomly perturbing the positions of the antennas in the four arrays and constructing the five different target layouts shown in Figure~\ref{fig:ImageingResults}. We compute the reconstructed images using the \emph{fused Lasso} regularized least squares problem for both the ground truth and erroneous antenna positions. We also generate the imaging result from the measurement-domain blind deconvolution model in~\eqref{eq:CombinedSystem} by implementing a solver for a sparsity regularized version of the linear least squares problem of~\cite{BN:2007,BlindDeconvLLS:2016}. Figure~\ref{fig:ImageingResults} shows the reconstructed images using the four methods above when the measurements are contaminated with 15dB PSNR noise. The figure illustrates that both the measurement-domain blind deconvolution method and our proposed image-domain blind deconvolution method are successful at recovering the target image. However, it can also be seen that our proposed method is much more robust to noise. For further validation, we present target detection receiver-operating-characteristic (ROC) curves in Figure~\ref{fig:Sims_ROC} that demonstrate the superior performance of our method under the different noise levels. 
\begin{figure*}
\centering
\mbox{
\includegraphics[width=0.18\textwidth]{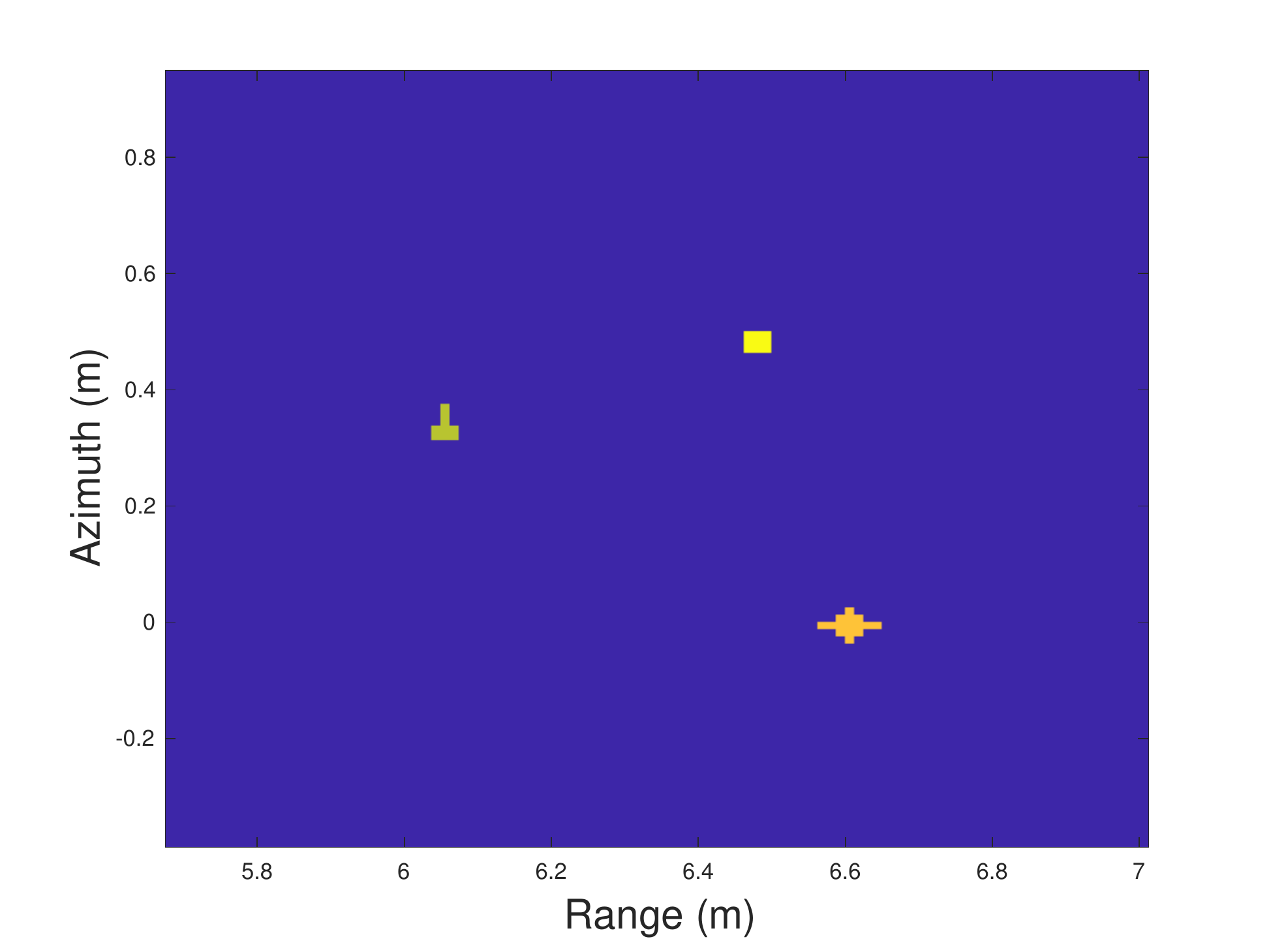}
\includegraphics[width=0.18\textwidth]{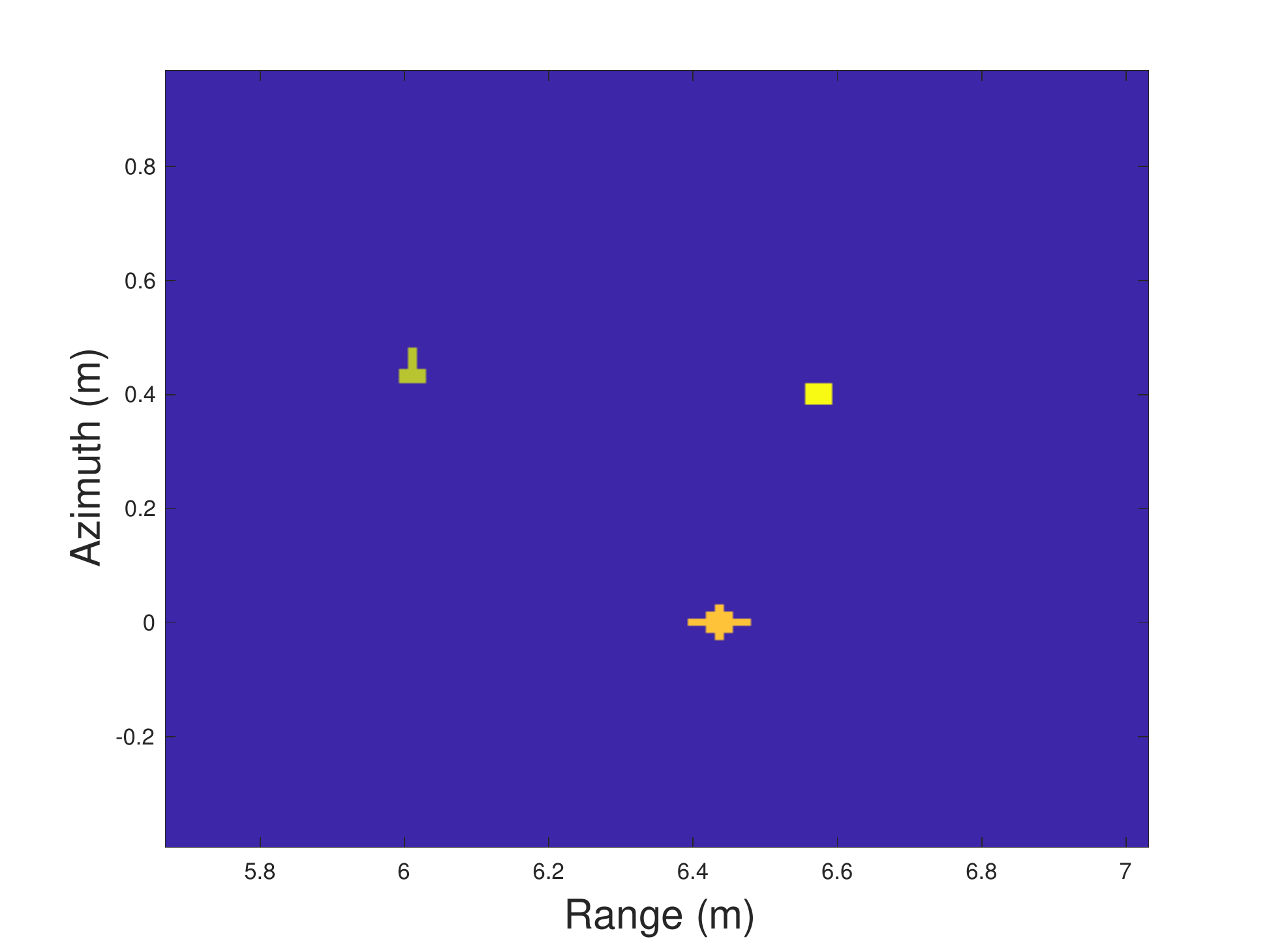}
\includegraphics[width=0.18\textwidth]{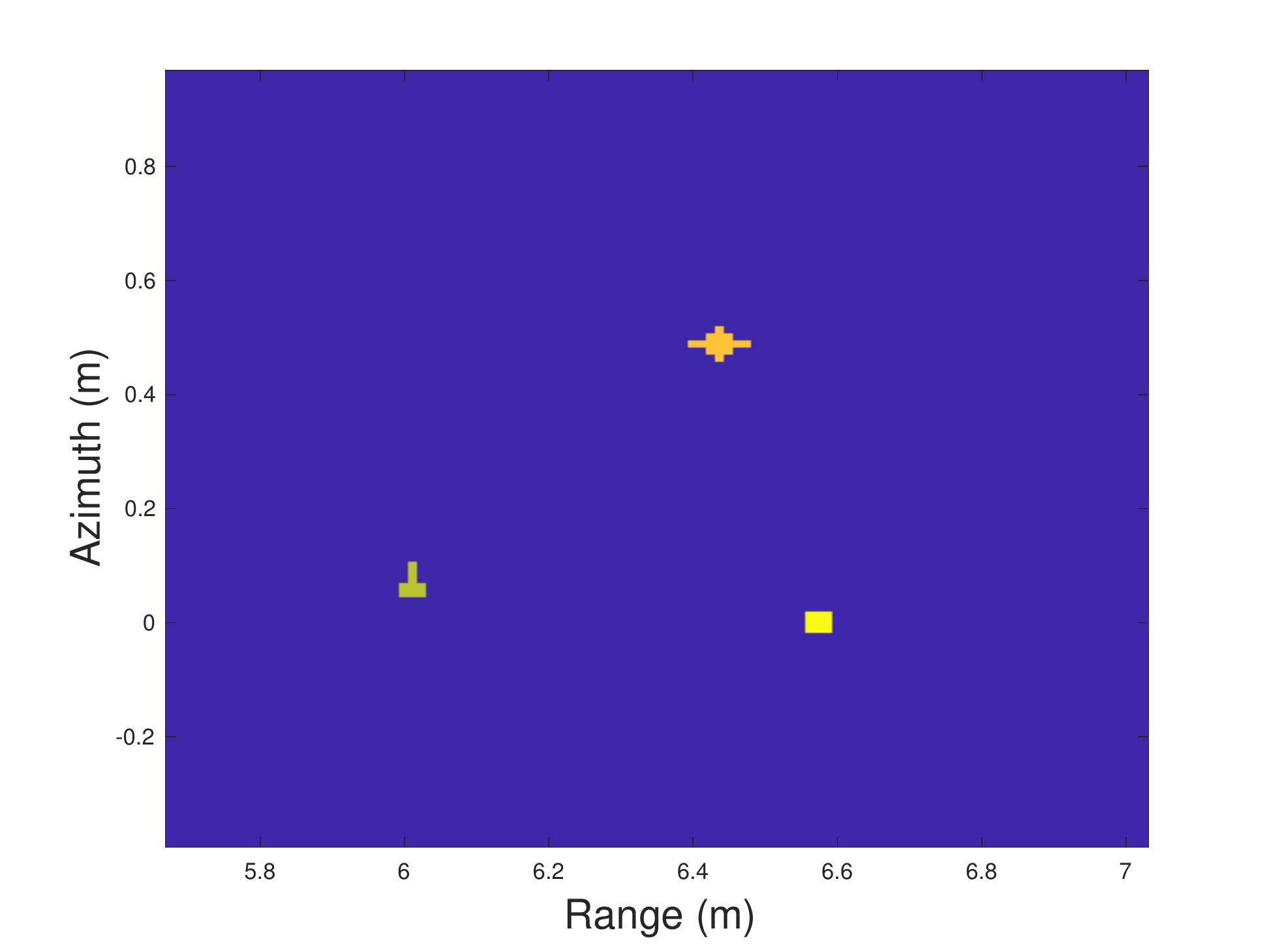}
\includegraphics[width=0.18\textwidth]{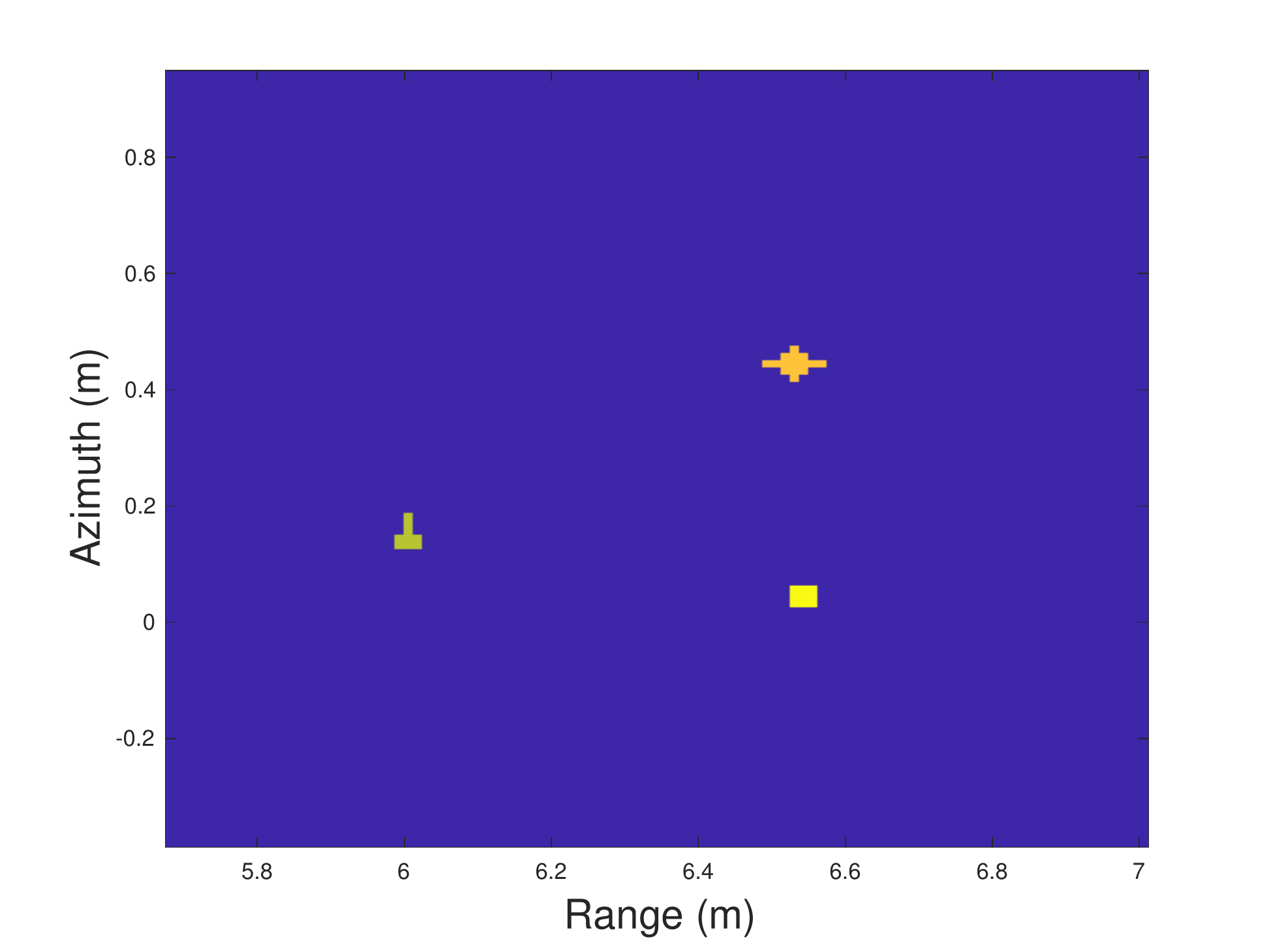}
\includegraphics[width=0.18\textwidth]{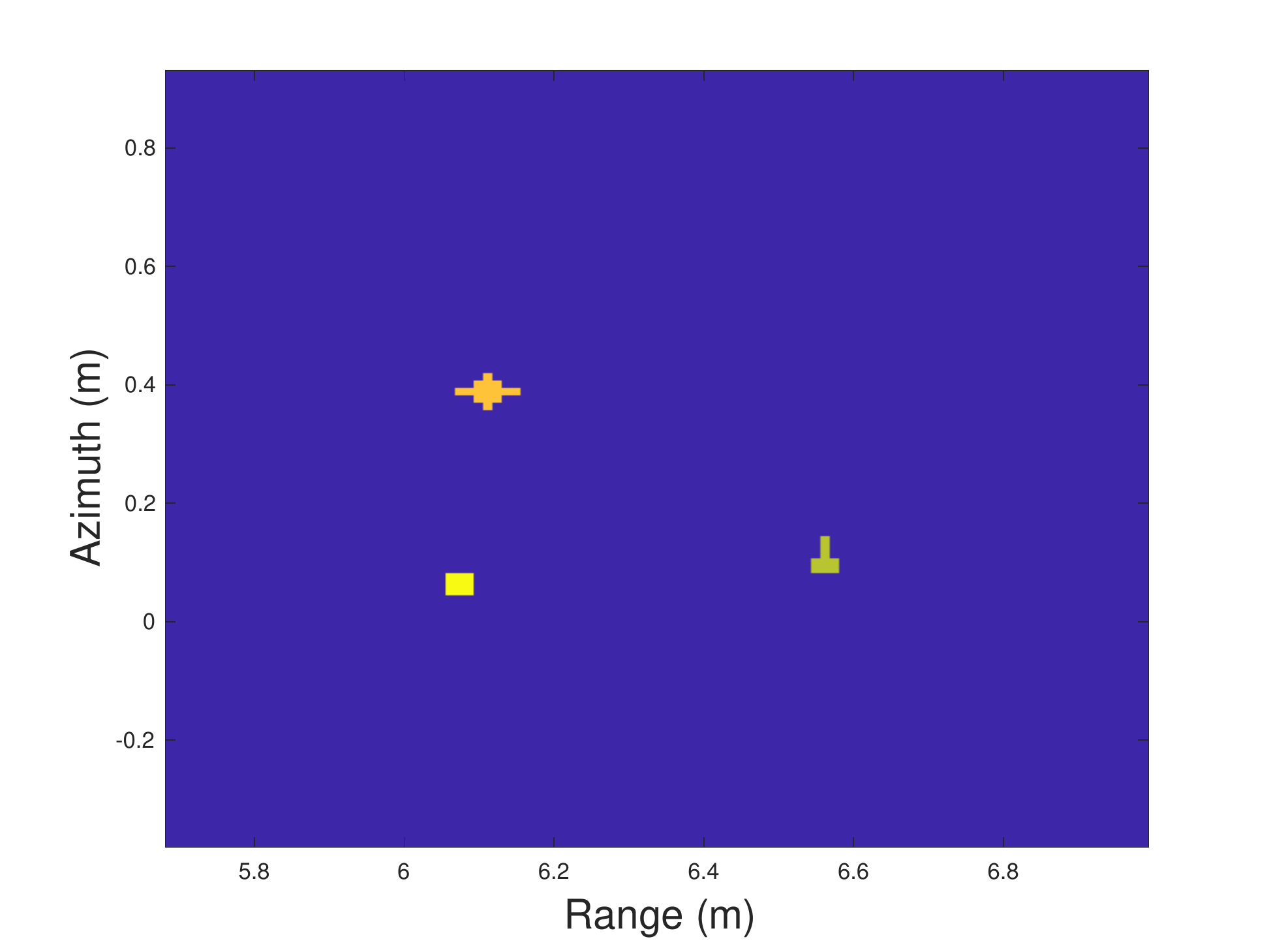}
}
%\caption{\small Illustration of the five different target layouts used in our simulations.}\label{fig:All_layouts}\vspace{-0.2in}
%\end{figure*}
%\begin{figure*}[ht]
%\centering
\mbox{
\includegraphics[width=0.18\textwidth]{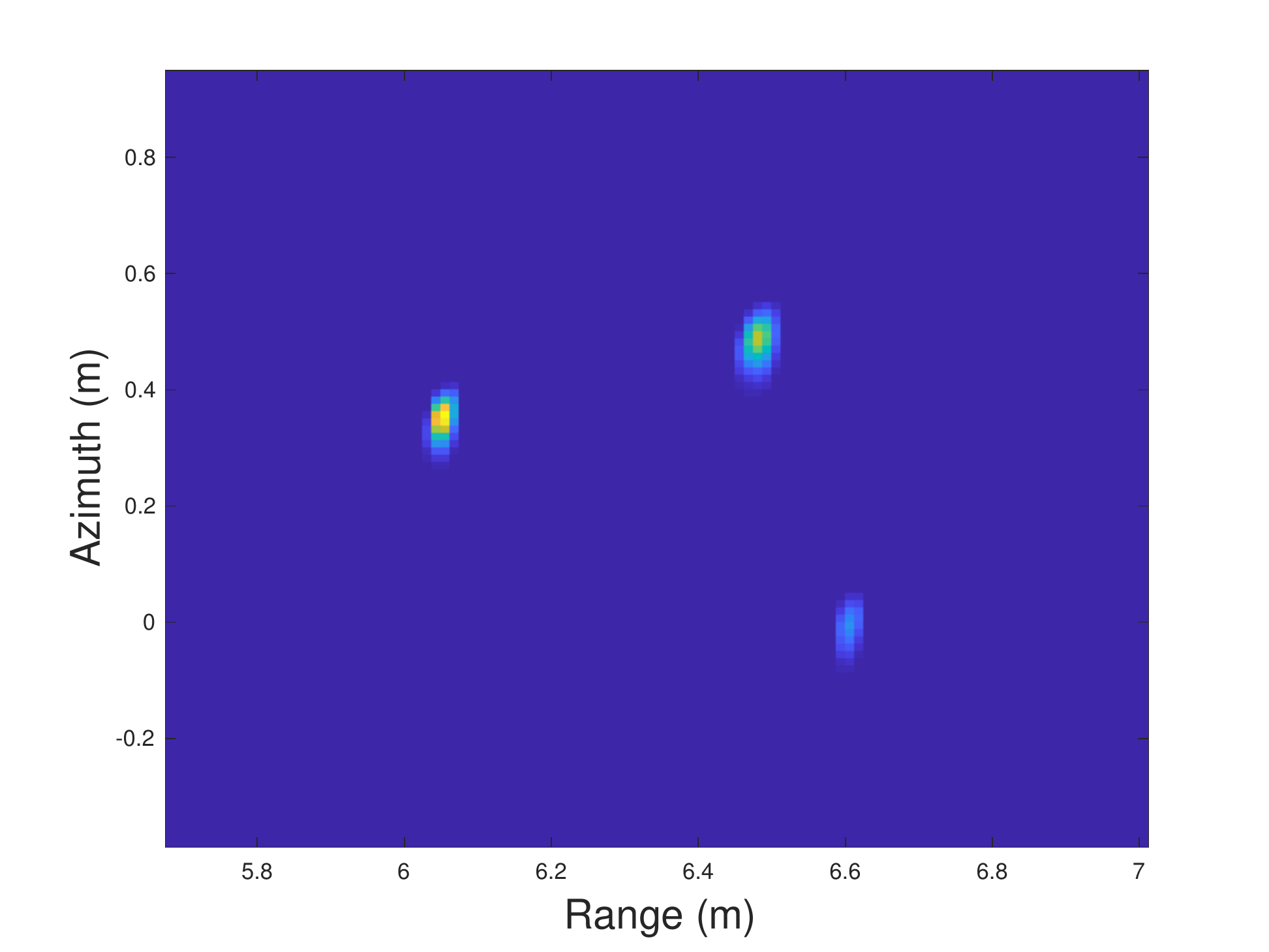}
\includegraphics[width=0.18\textwidth]{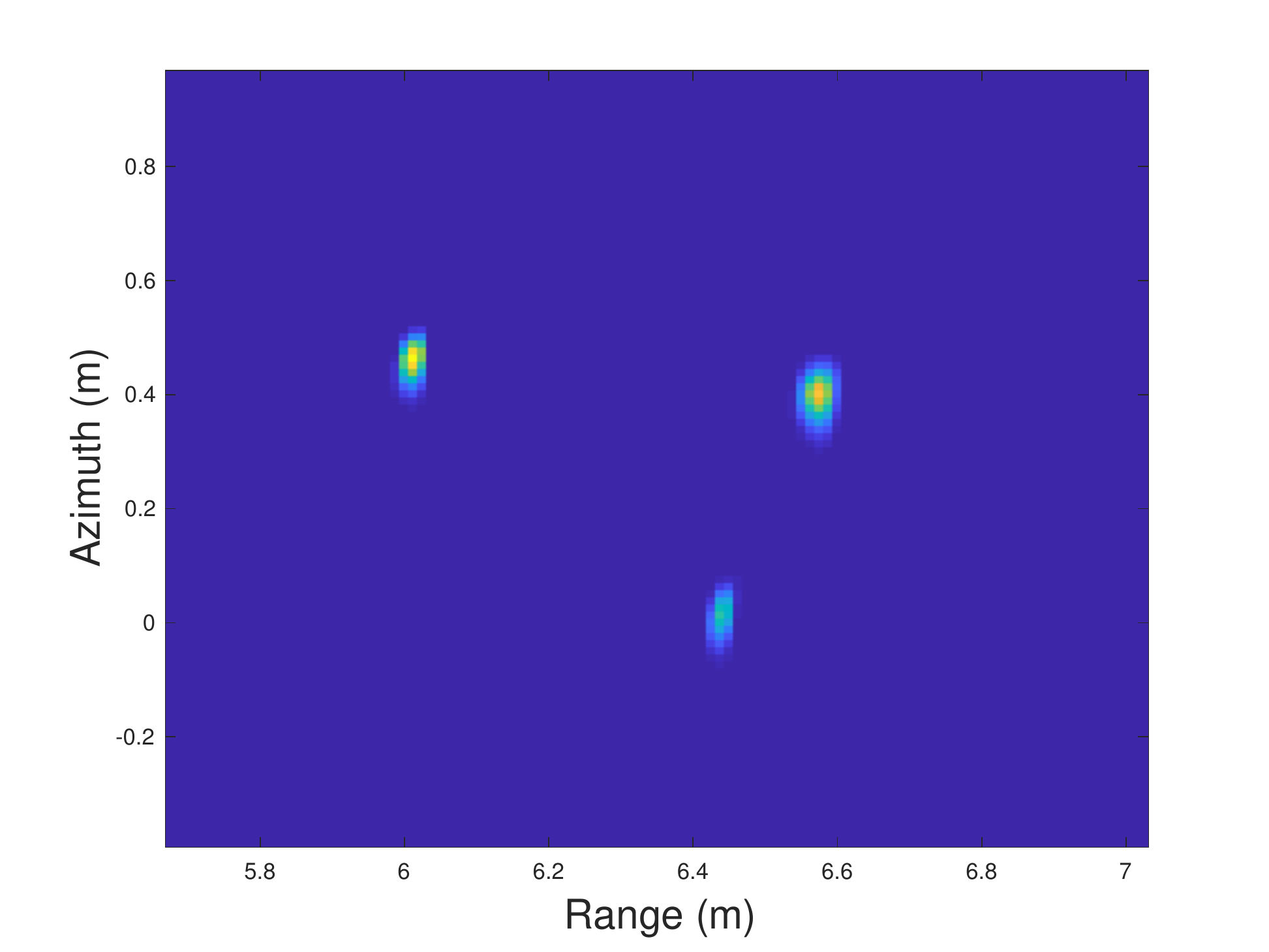}
\includegraphics[width=0.18\textwidth]{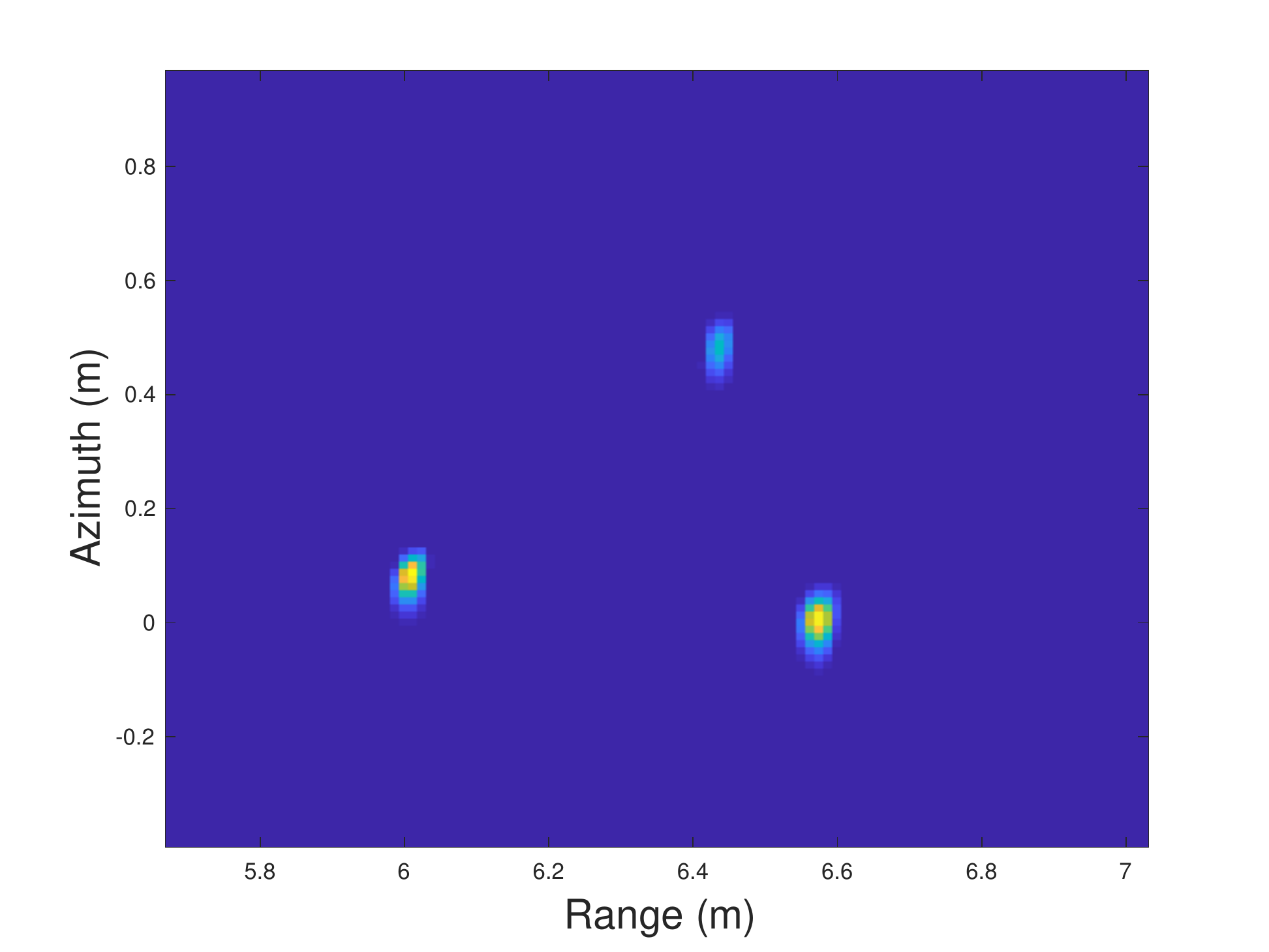}
\includegraphics[width=0.18\textwidth]{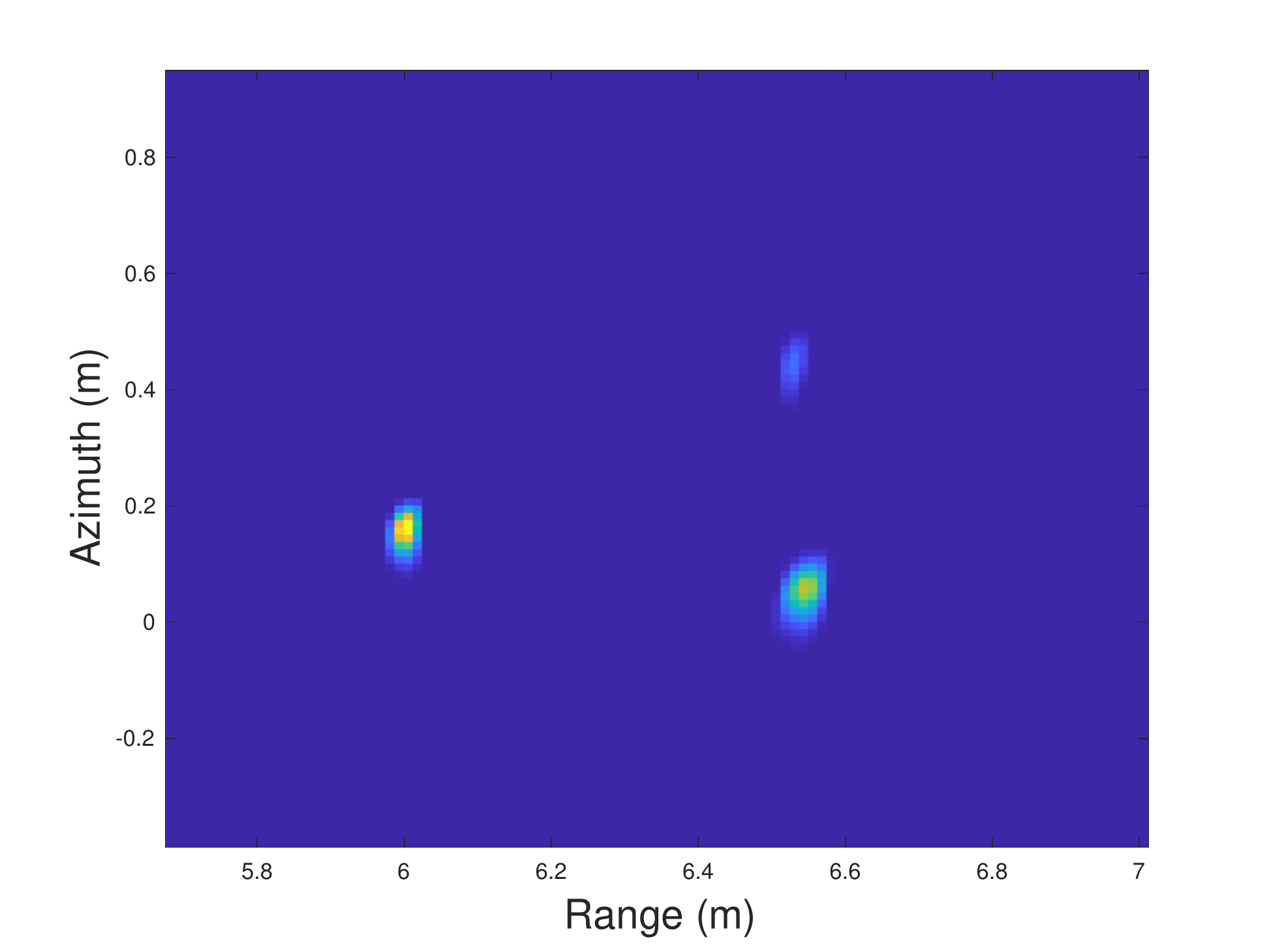}
\includegraphics[width=0.18\textwidth]{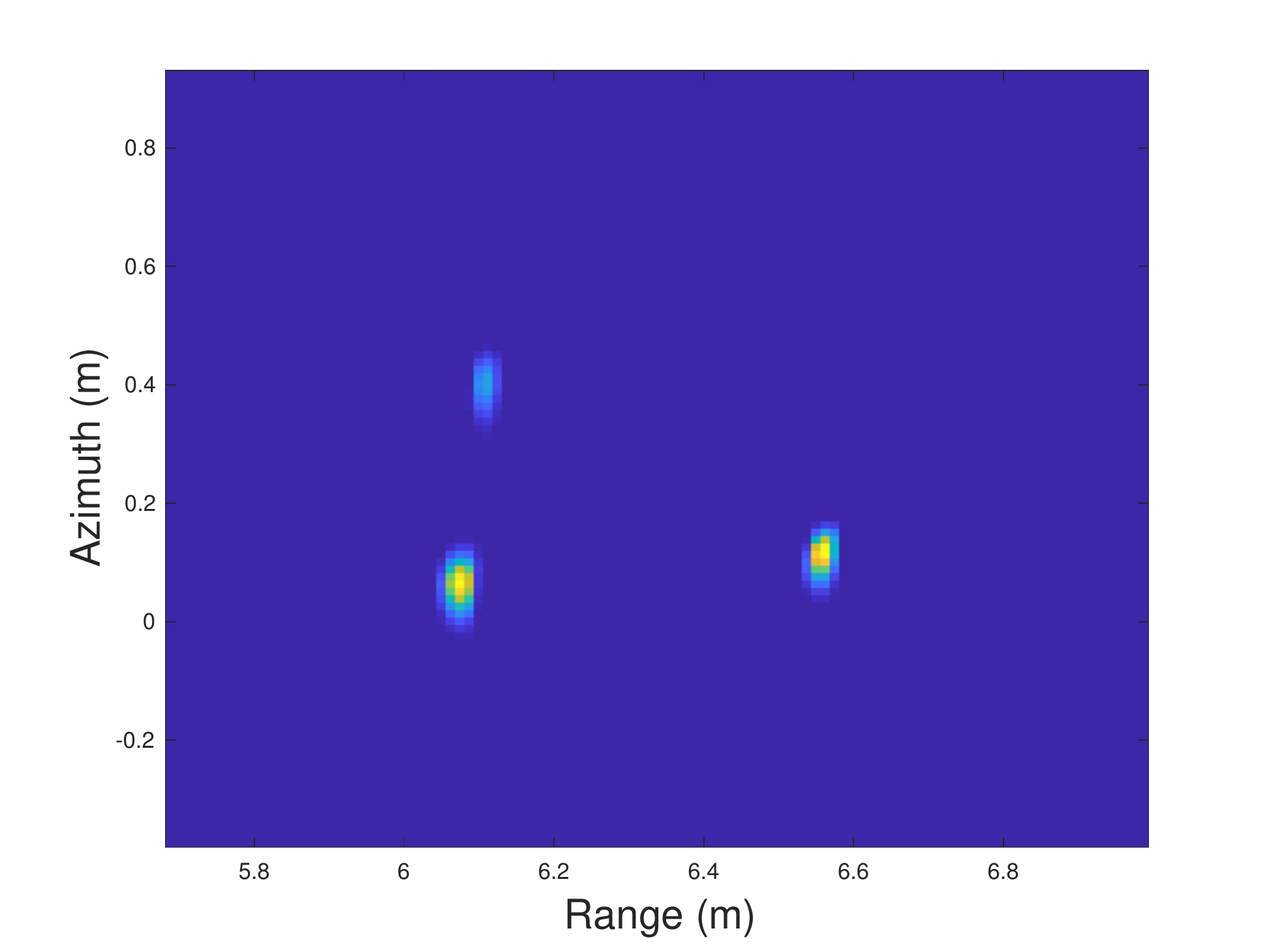}
}
\mbox{
\includegraphics[width=0.18\textwidth]{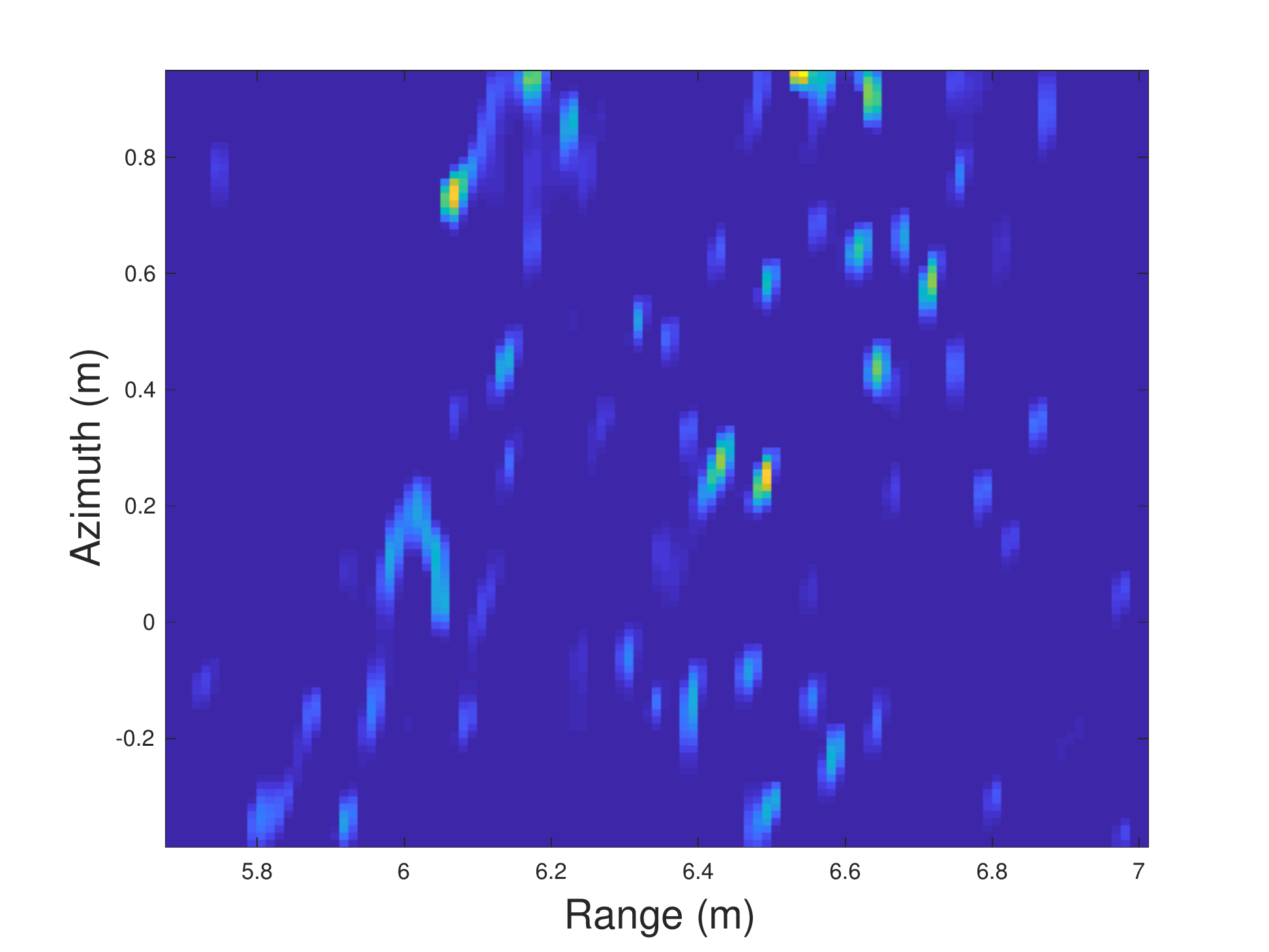}
\includegraphics[width=0.18\textwidth]{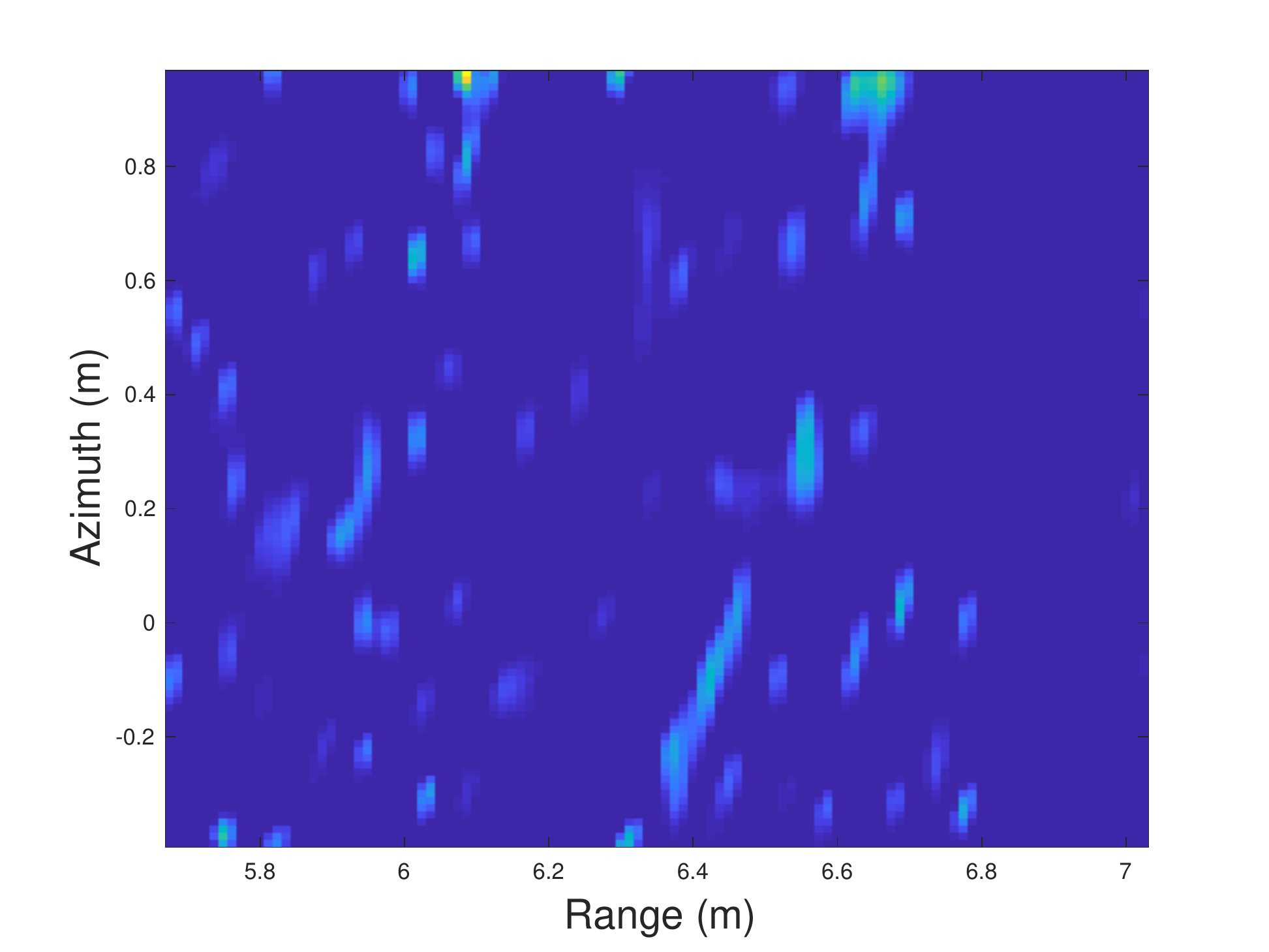}
\includegraphics[width=0.18\textwidth]{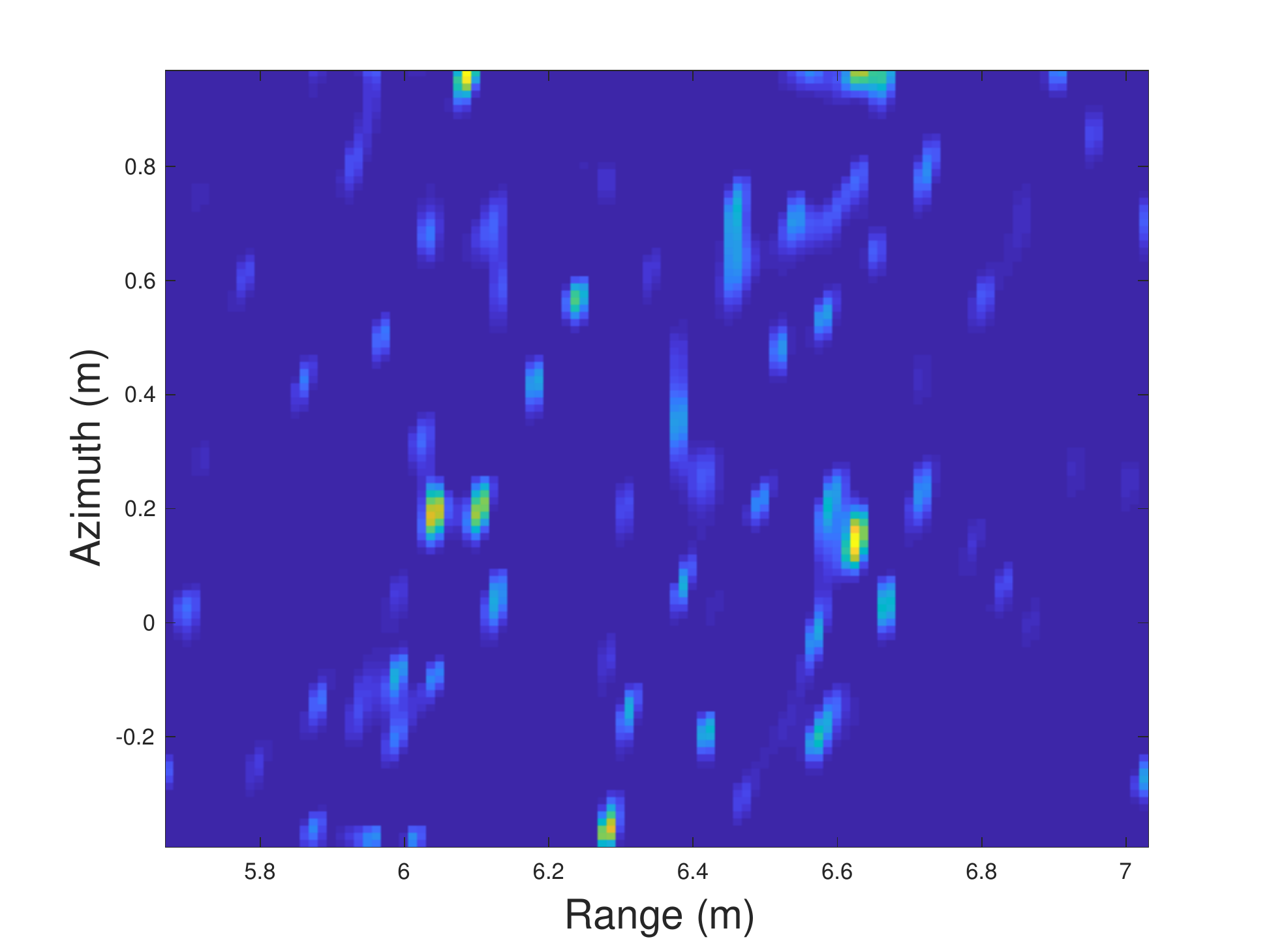}
\includegraphics[width=0.18\textwidth]{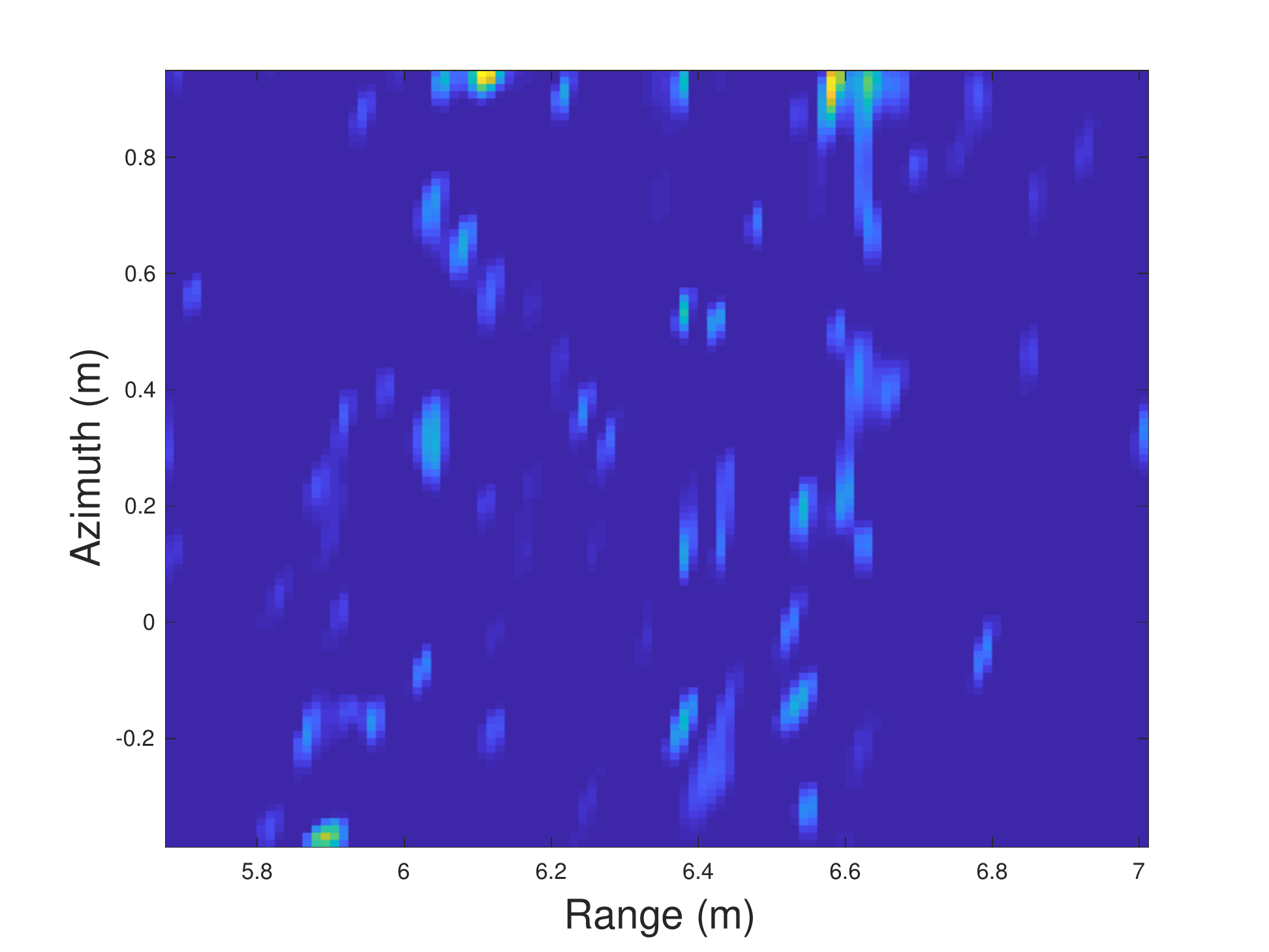}
\includegraphics[width=0.18\textwidth]{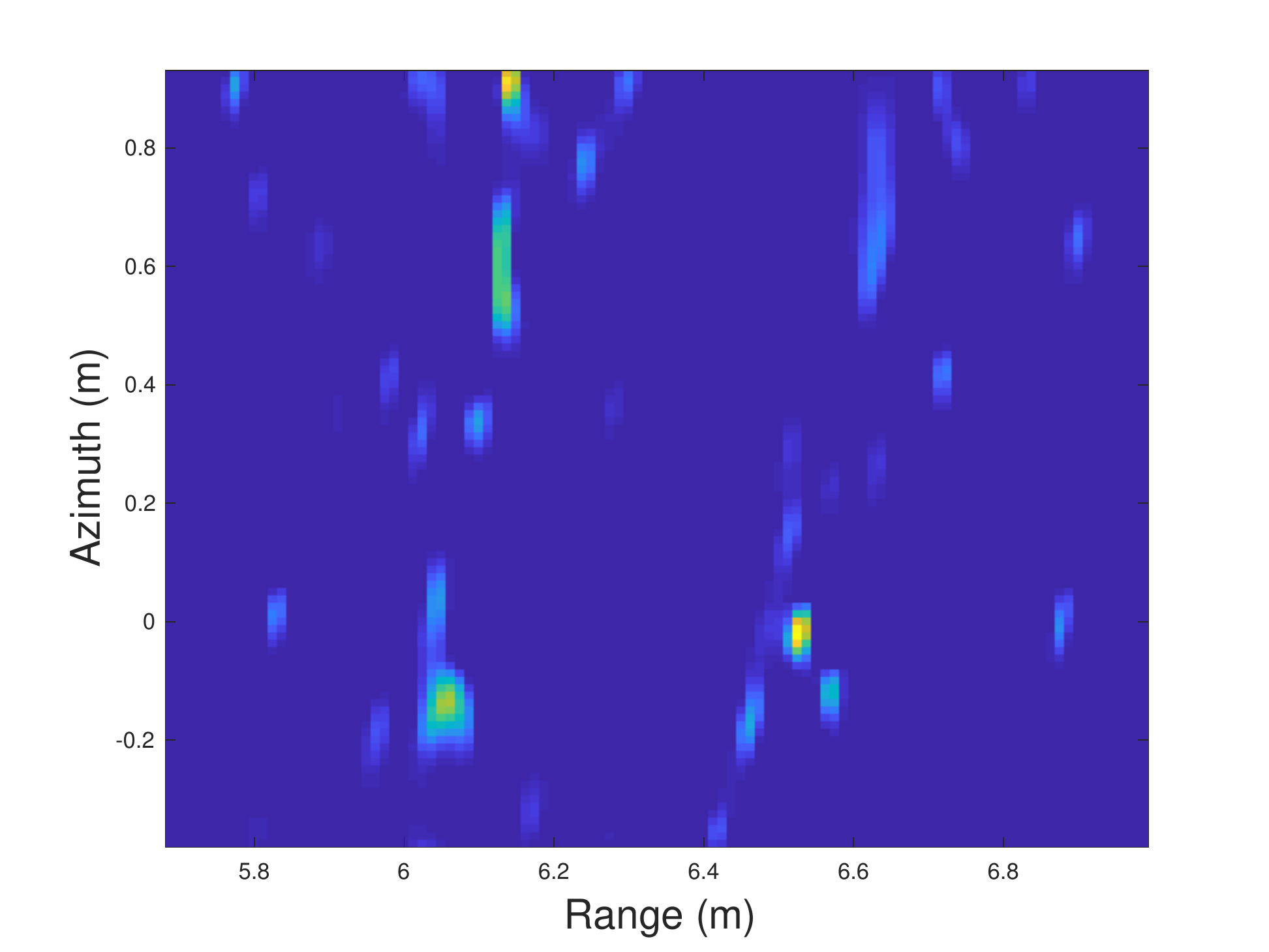}
}
\mbox{
\includegraphics[width=0.18\textwidth]{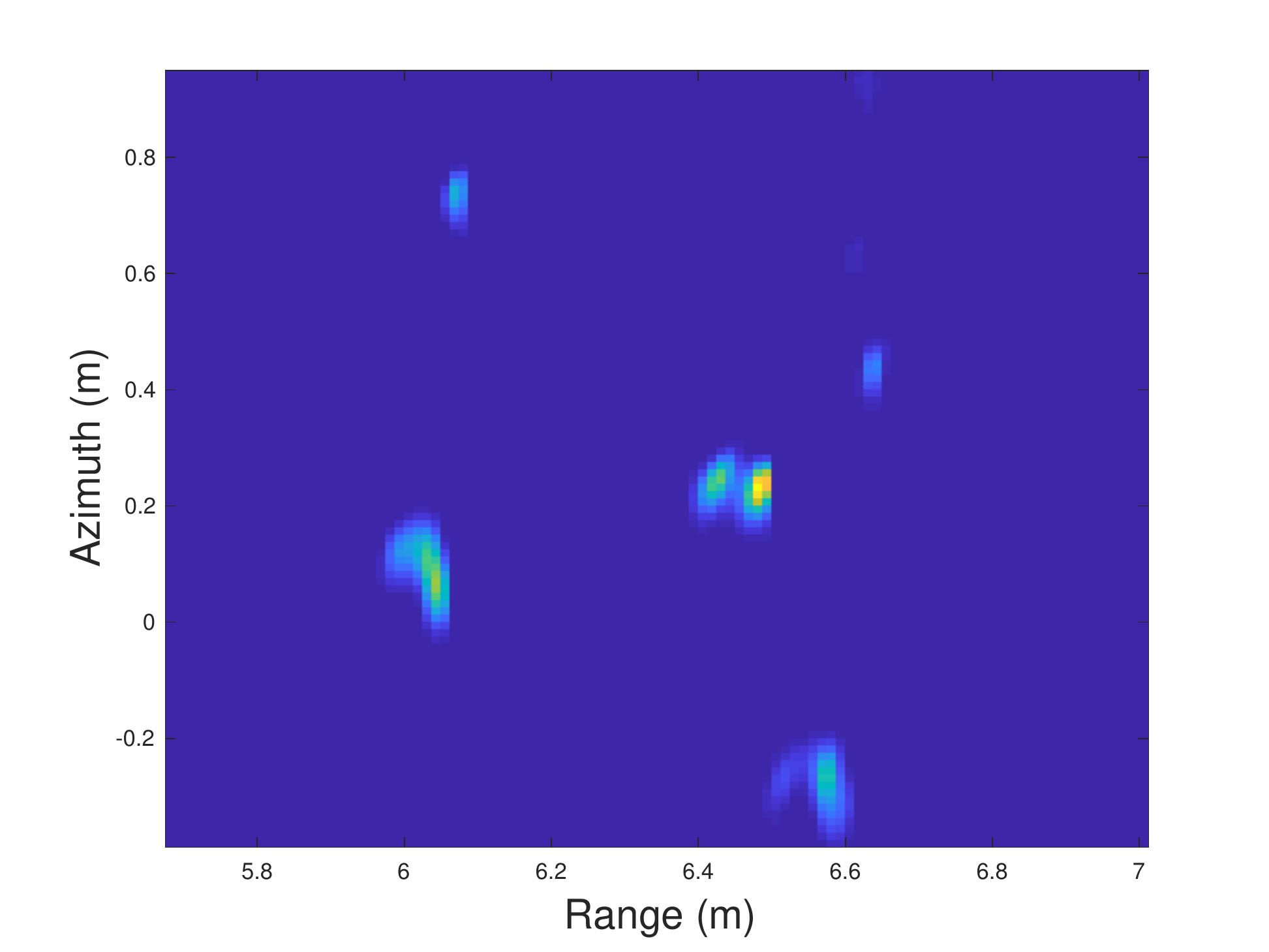}
\includegraphics[width=0.18\textwidth]{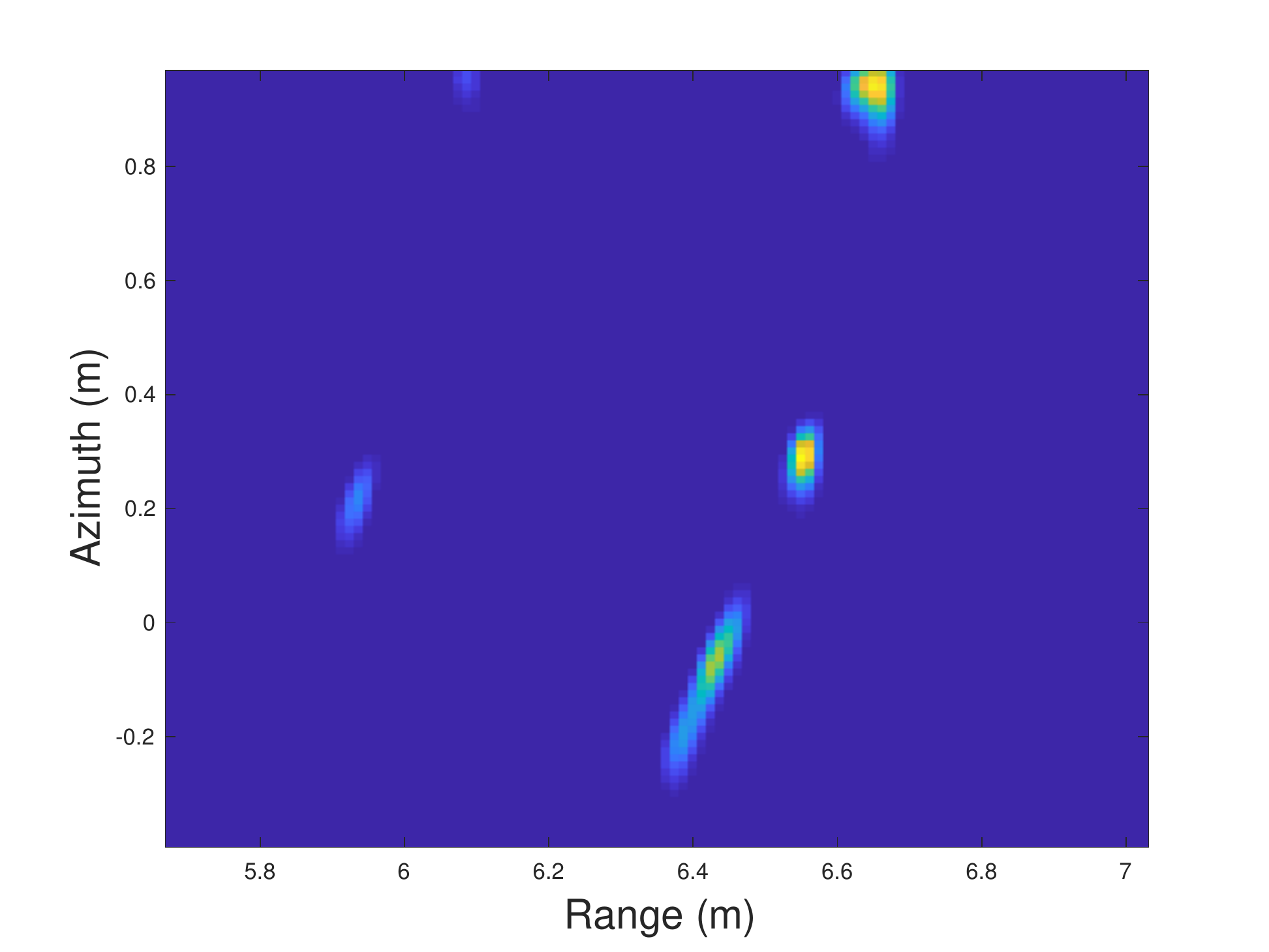}
\includegraphics[width=0.18\textwidth]{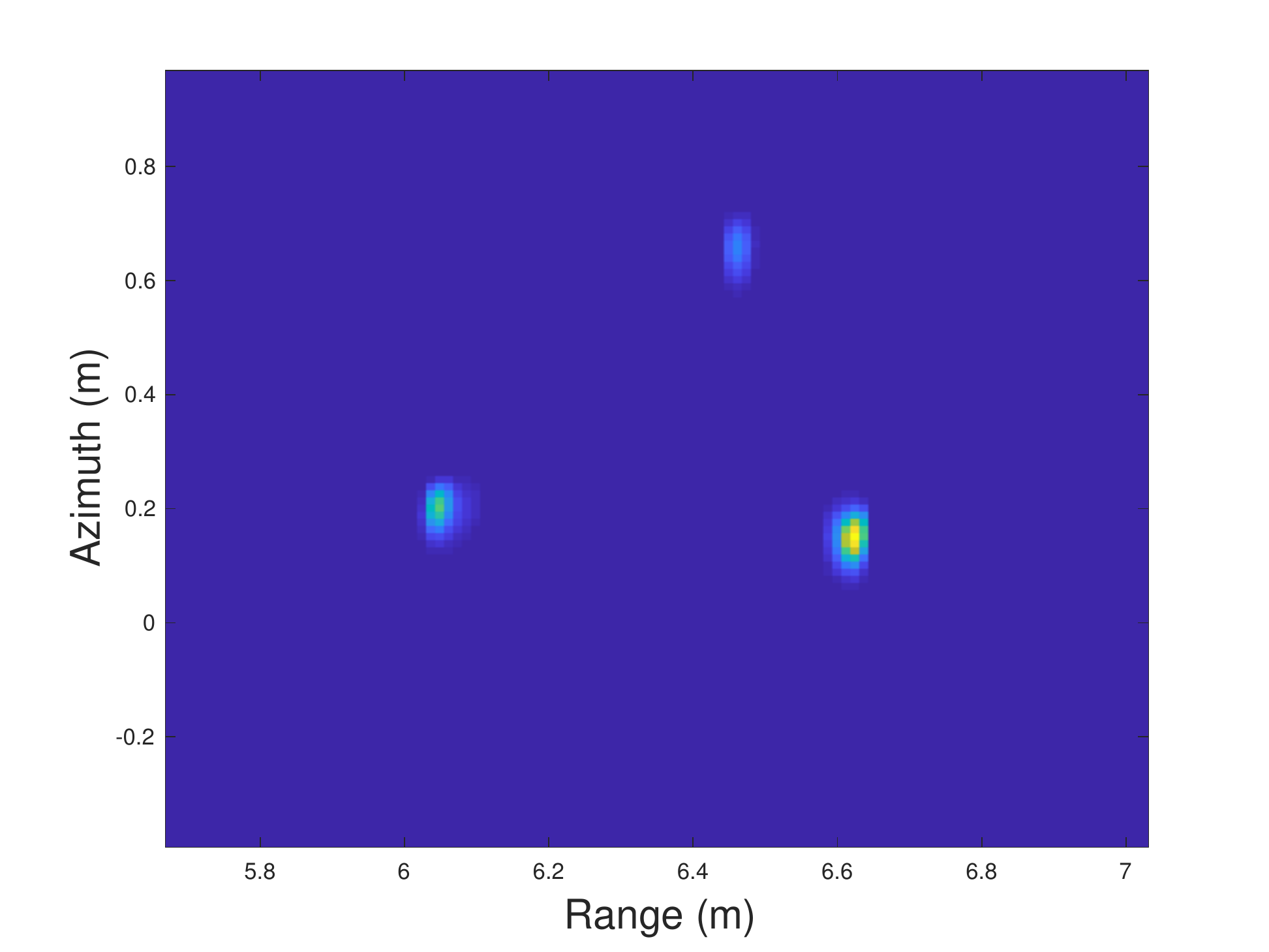}
\includegraphics[width=0.18\textwidth]{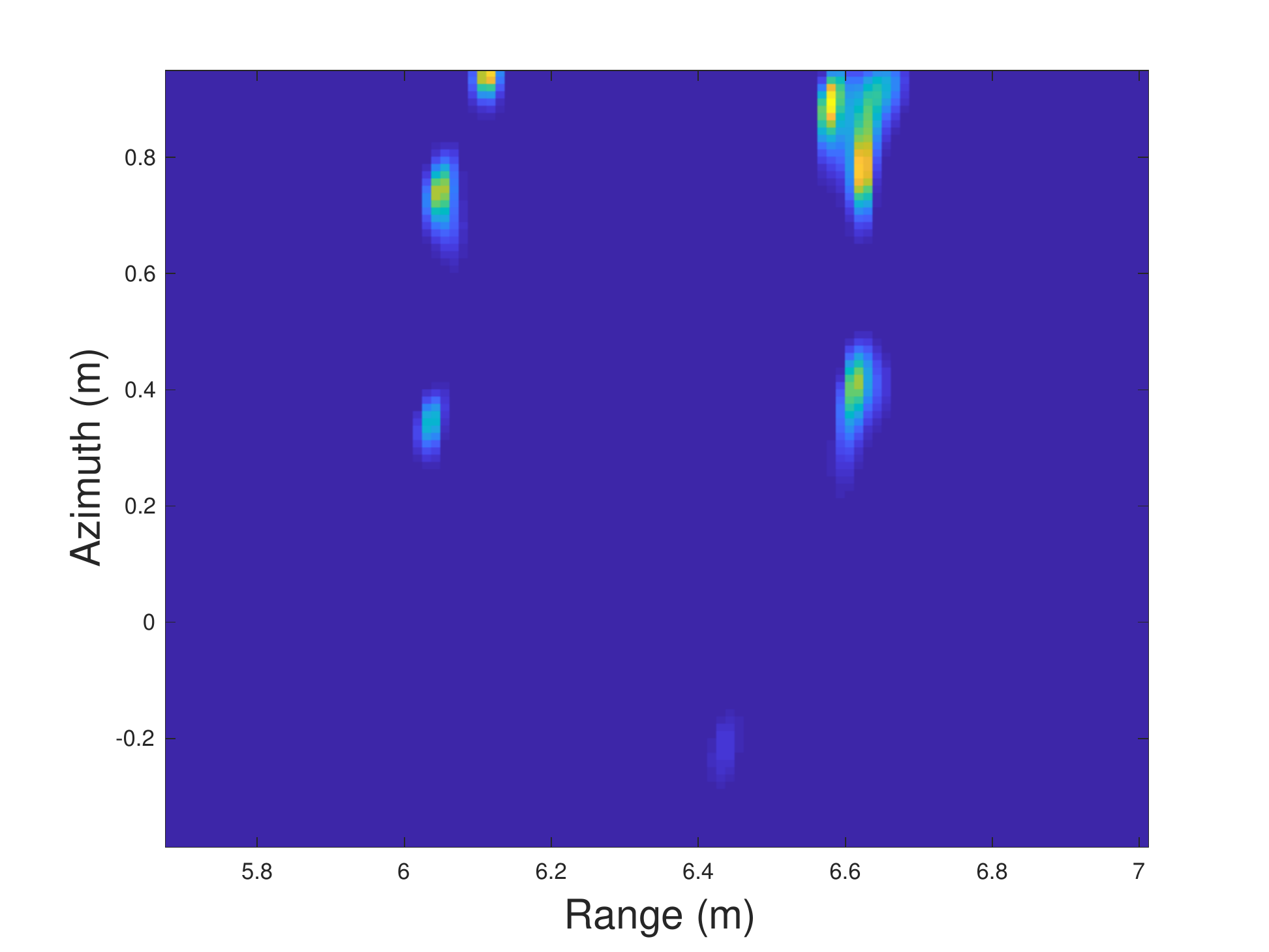}
\includegraphics[width=0.18\textwidth]{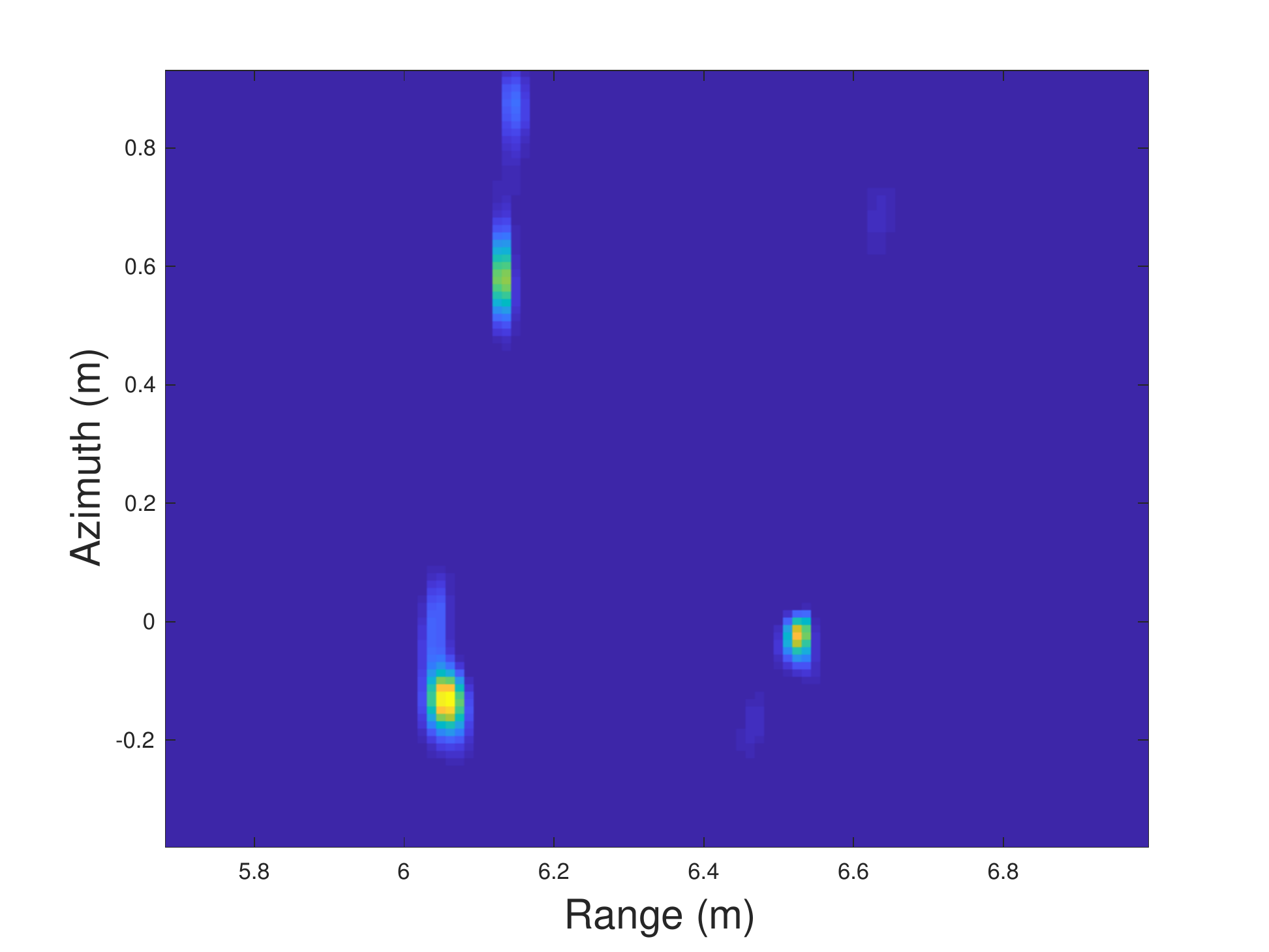}
}
\mbox{
\includegraphics[width=0.18\textwidth]{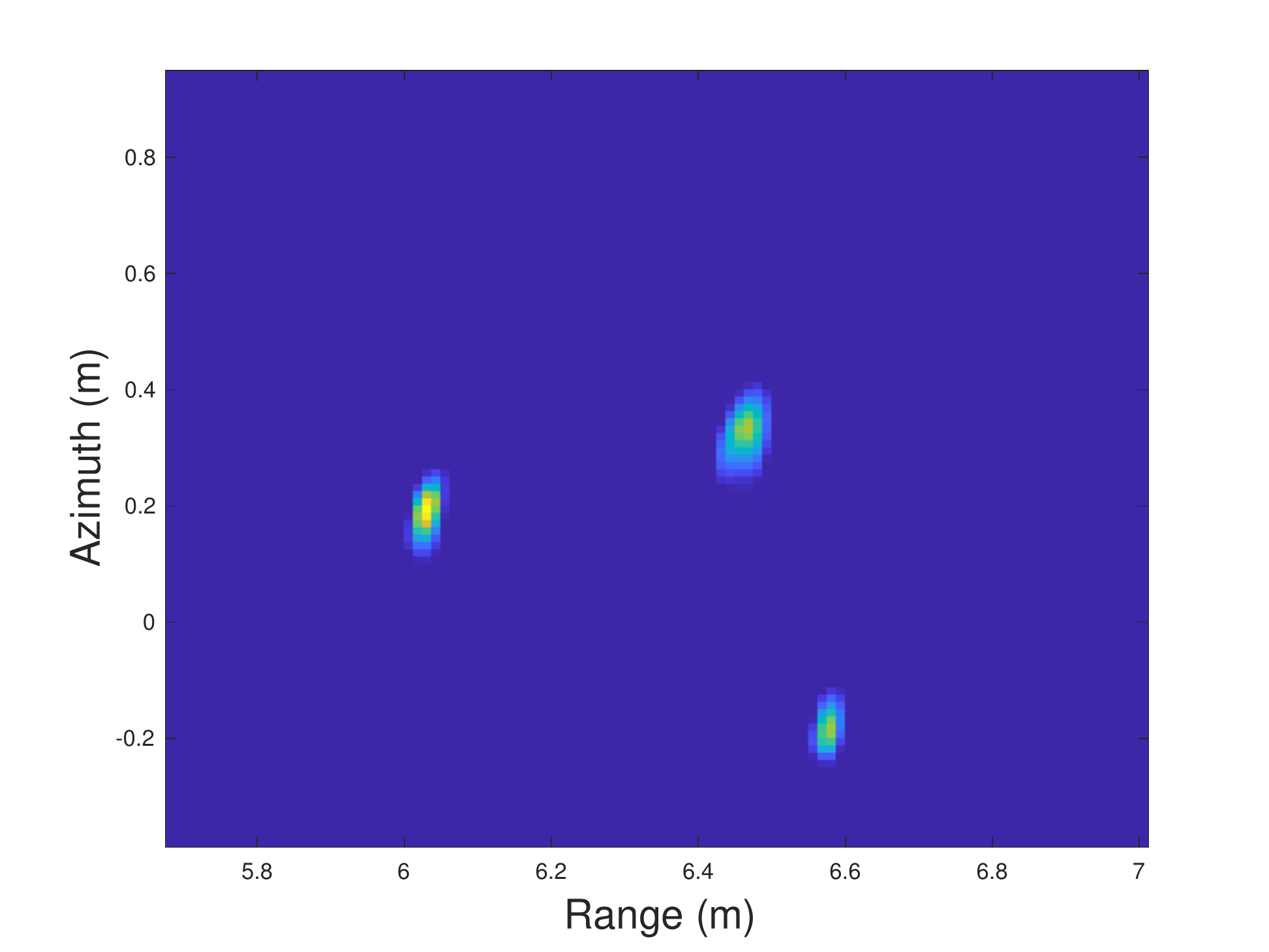}
\includegraphics[width=0.18\textwidth]{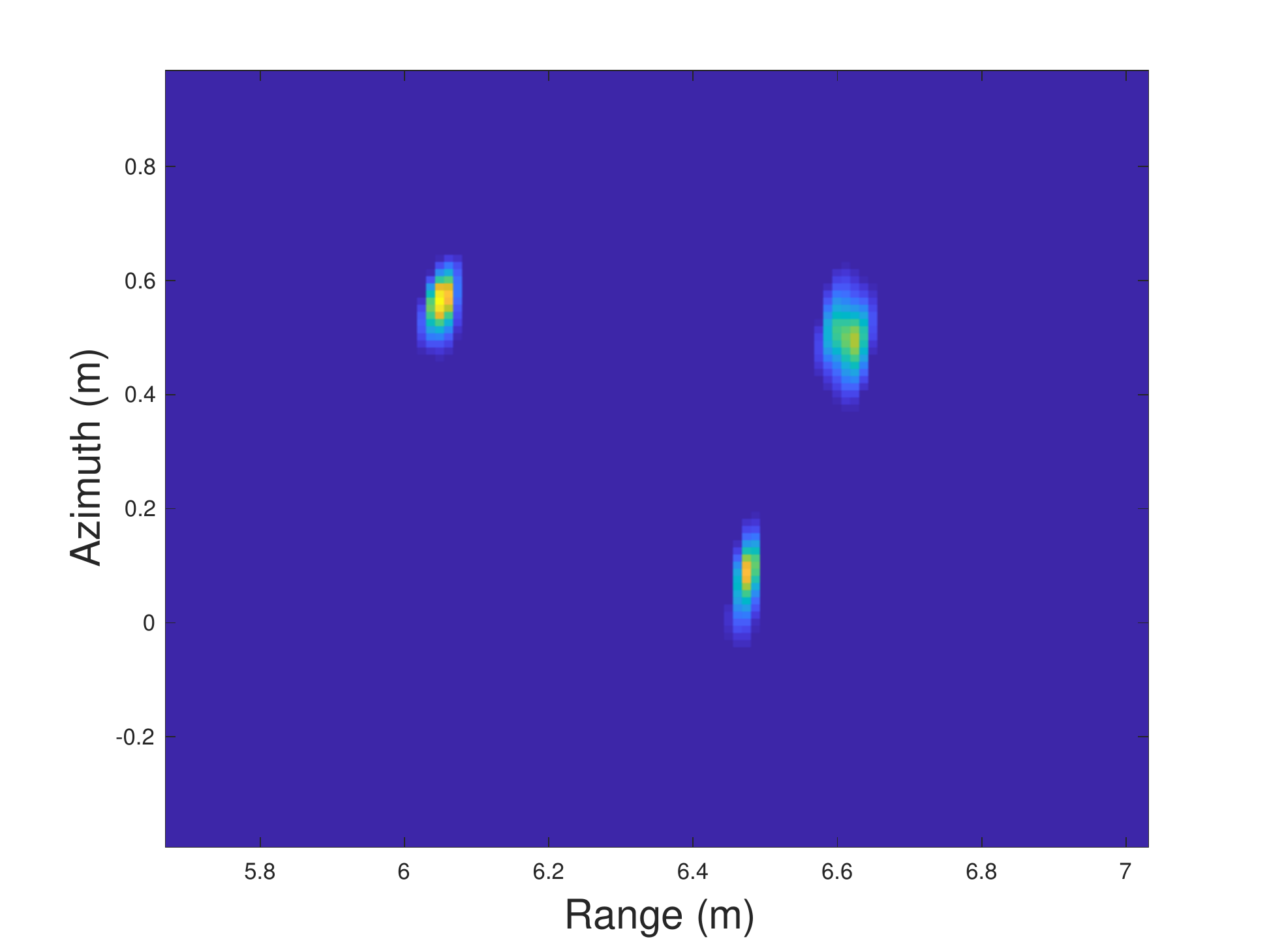}
\includegraphics[width=0.18\textwidth]{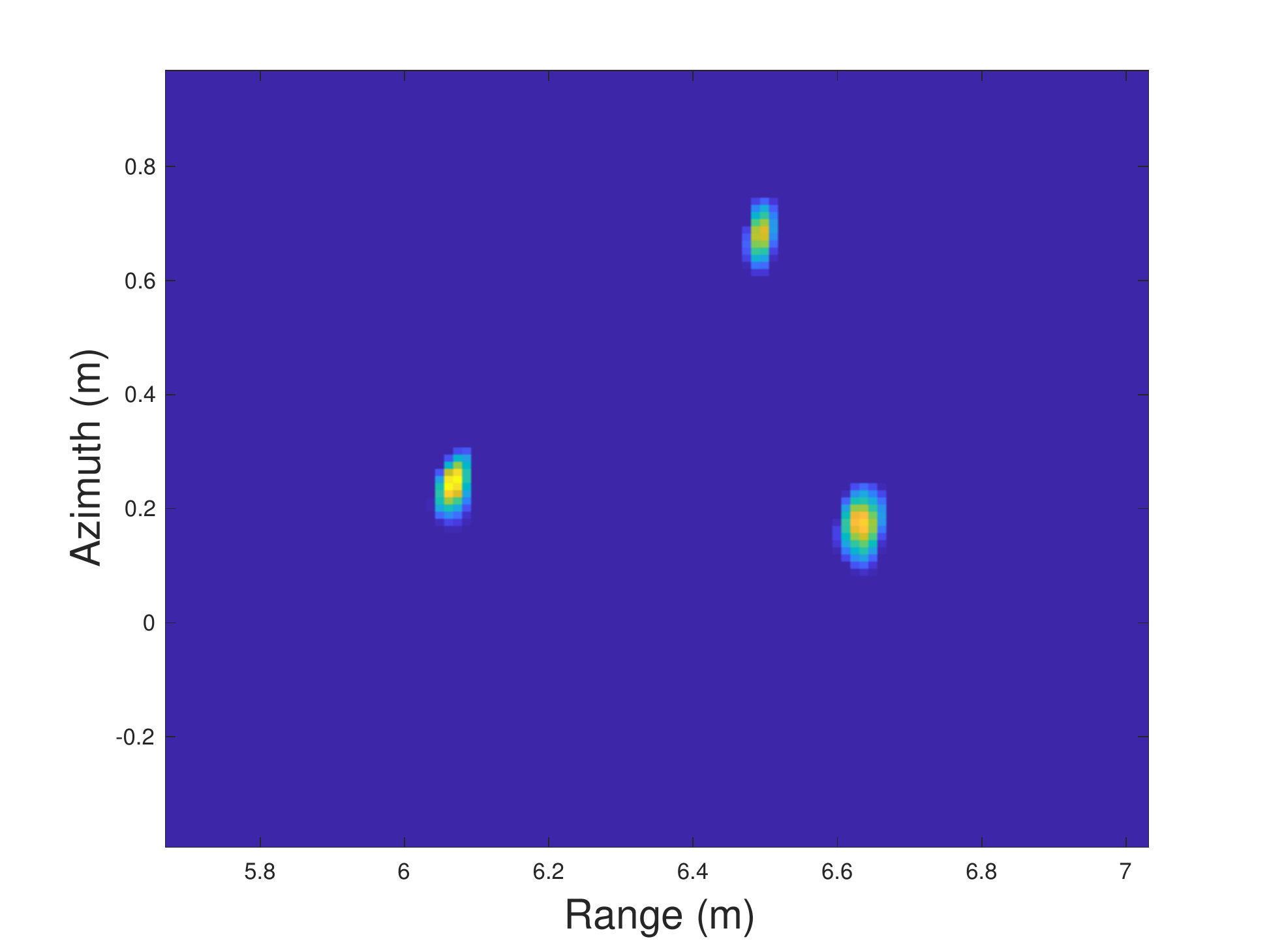}
\includegraphics[width=0.18\textwidth]{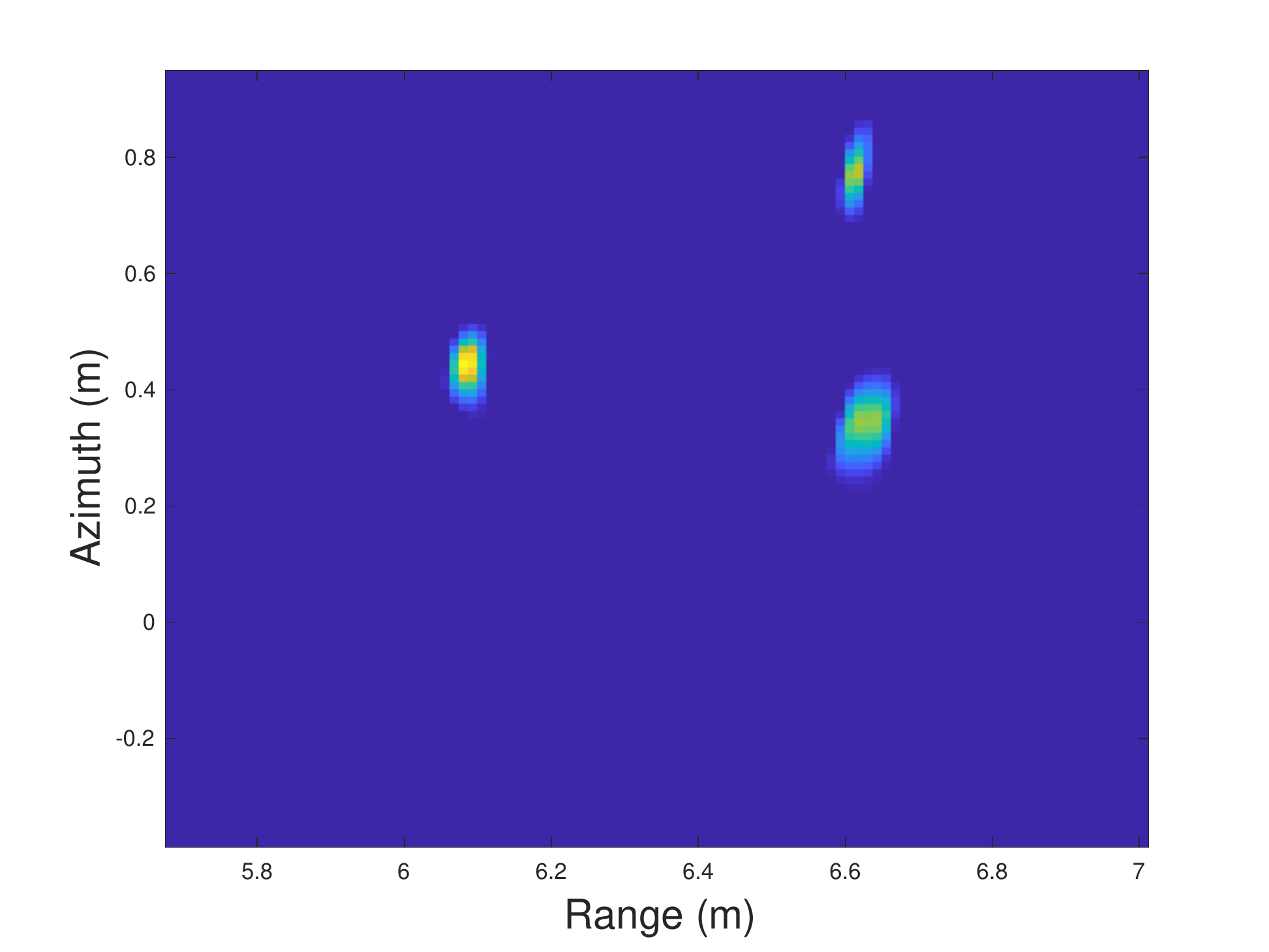}
\includegraphics[width=0.18\textwidth]{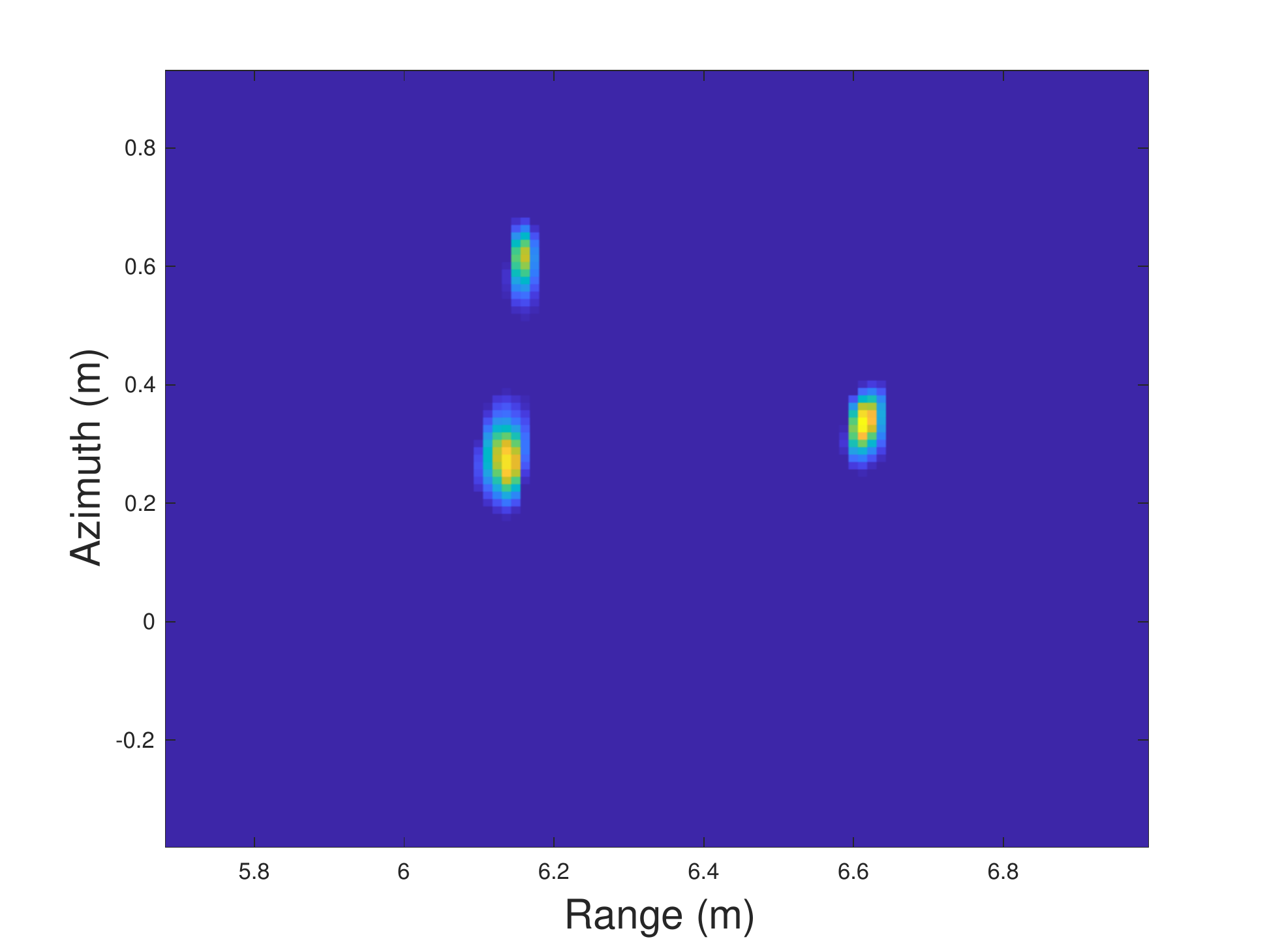}
}
\caption{\small Ground truth and reconstructed images from noisy measurements at 15dB PSNR. First row shows an illustration of the five different target layouts used in our simulations. Second row uses \emph{fused Lasso} reconstruction with the true antenna positions. Third row uses \emph{fused Lasso} reconstruction with wrong antenna positions. Fourth row is the solution to the measurement-domain blind deconvolution problem. Last row is the solution of our proposed method for the image-domain blind deconvolution problem.}\label{fig:ImageingResults}\vspace{-0.2in}
\end{figure*}
%\begin{figure*}
%\centering
%\hbox{
%\includegraphics[width=0.2\textwidth]{}
%\includegraphics[width=0.2\textwidth]{}
%\includegraphics[width=0.2\textwidth]{}
%\includegraphics[width=0.2\textwidth]{}
%\includegraphics[width=0.2\textwidth]{}
%}
%\hbox{
%\includegraphics[width=0.2\textwidth]{}
%\includegraphics[width=0.2\textwidth]{}
%\includegraphics[width=0.2\textwidth]{}
%\includegraphics[width=0.2\textwidth]{}
%\includegraphics[width=0.2\textwidth]{}
%}
%\hbox{
%\includegraphics[width=0.2\textwidth]{}
%\includegraphics[width=0.2\textwidth]{}
%\includegraphics[width=0.2\textwidth]{}
%\includegraphics[width=0.2\textwidth]{}
%\includegraphics[width=0.2\textwidth]{}
%}
%\hbox{
%\includegraphics[width=0.2\textwidth]{}
%\includegraphics[width=0.2\textwidth]{}
%\includegraphics[width=0.2\textwidth]{}
%\includegraphics[width=0.2\textwidth]{}
%\includegraphics[width=0.2\textwidth]{}
%}
%\caption{Recovery at 8dB PSNR}\label{fig:All_layouts}
%\end{figure*}
\begin{figure*}
\centering
\mbox{
\begin{subfigure}[b]{0.28\textwidth}
                \centering
                \includegraphics[width=\textwidth]{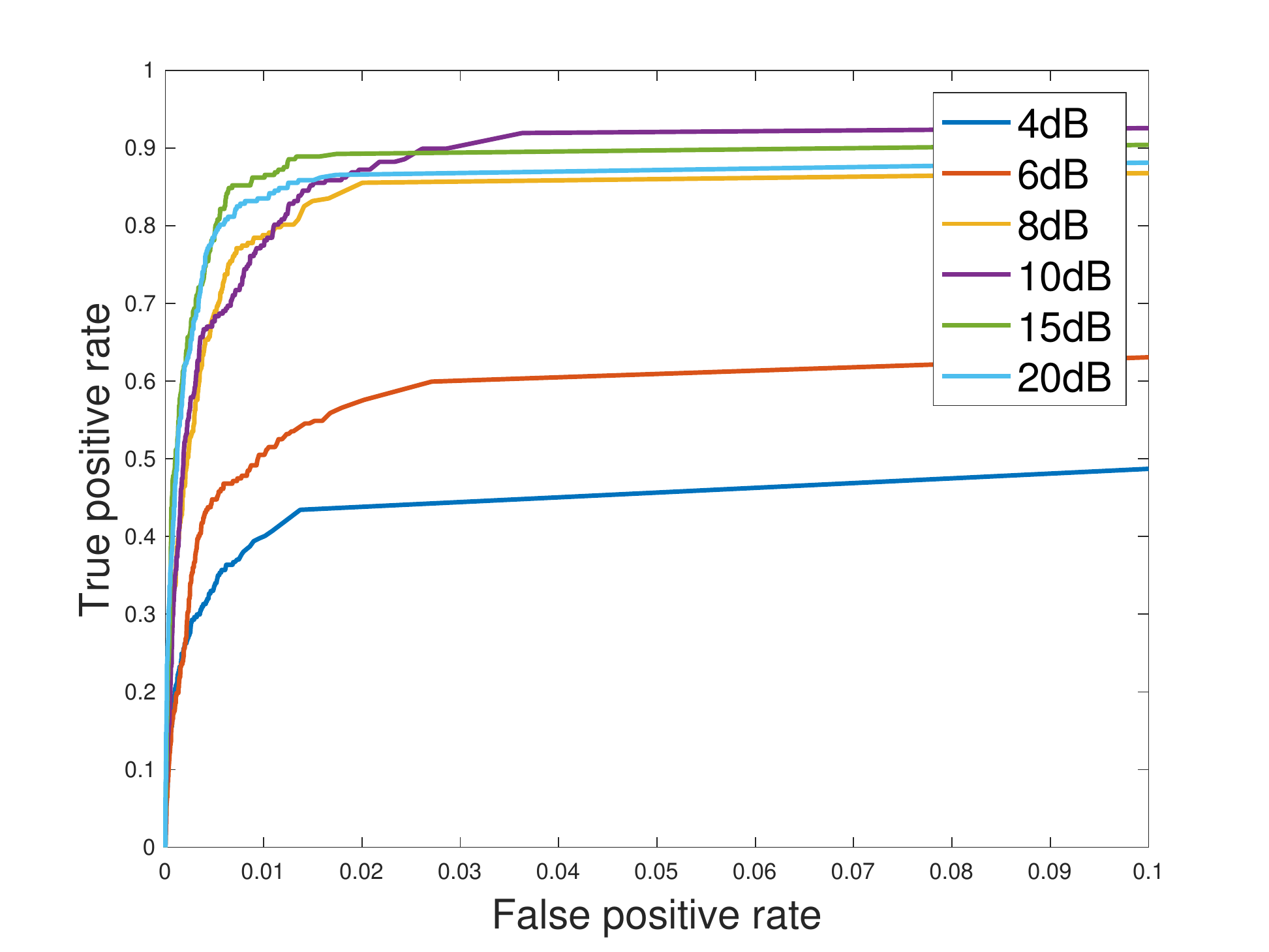}
                \caption{}
        \end{subfigure}%
        \begin{subfigure}[b]{0.28\textwidth}
                \centering
                \includegraphics[width=\textwidth]{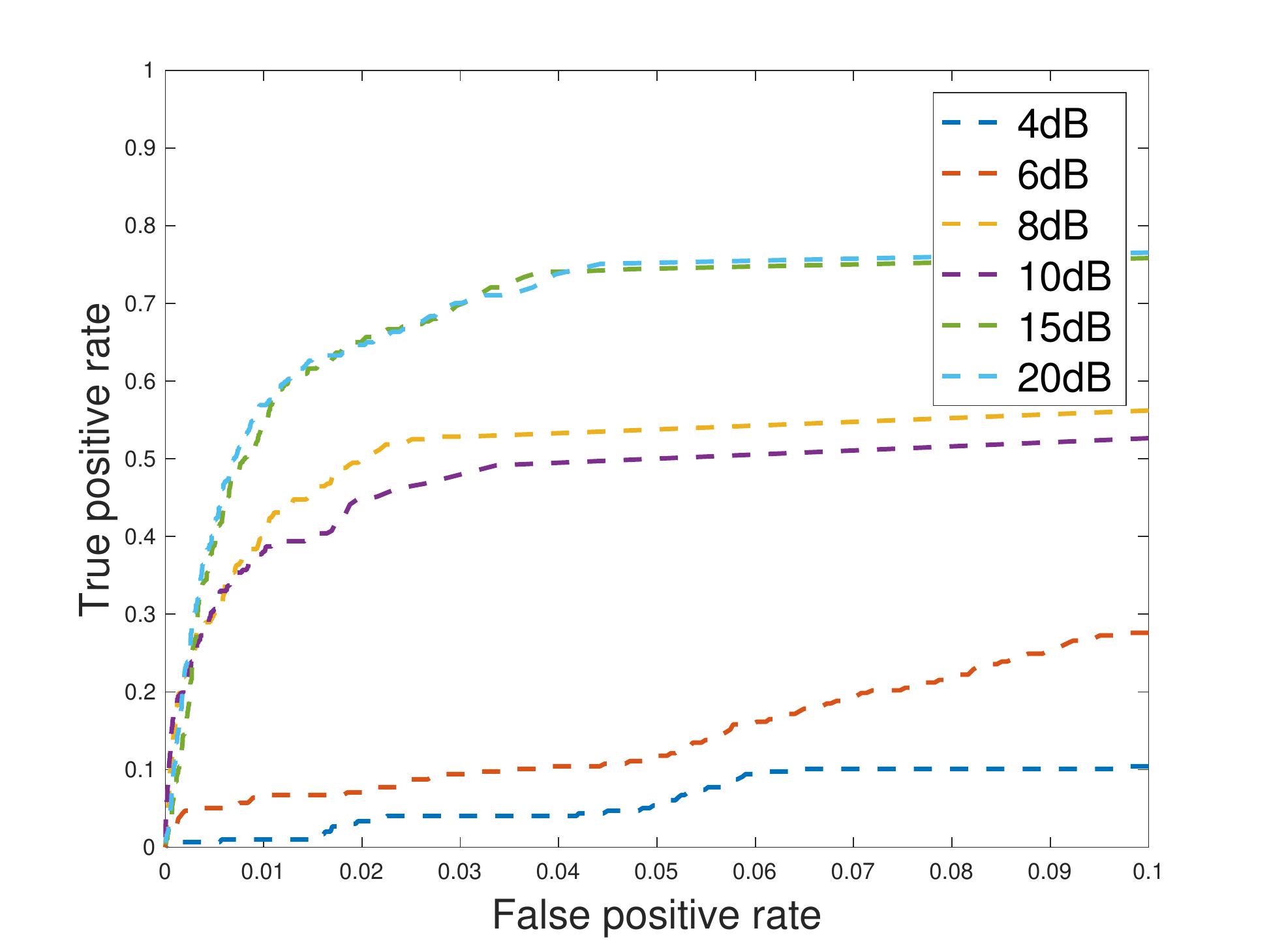}
                \caption{}
        \end{subfigure}%
        \begin{subfigure}[b]{0.28\textwidth}
                \centering
                \includegraphics[width=\textwidth]{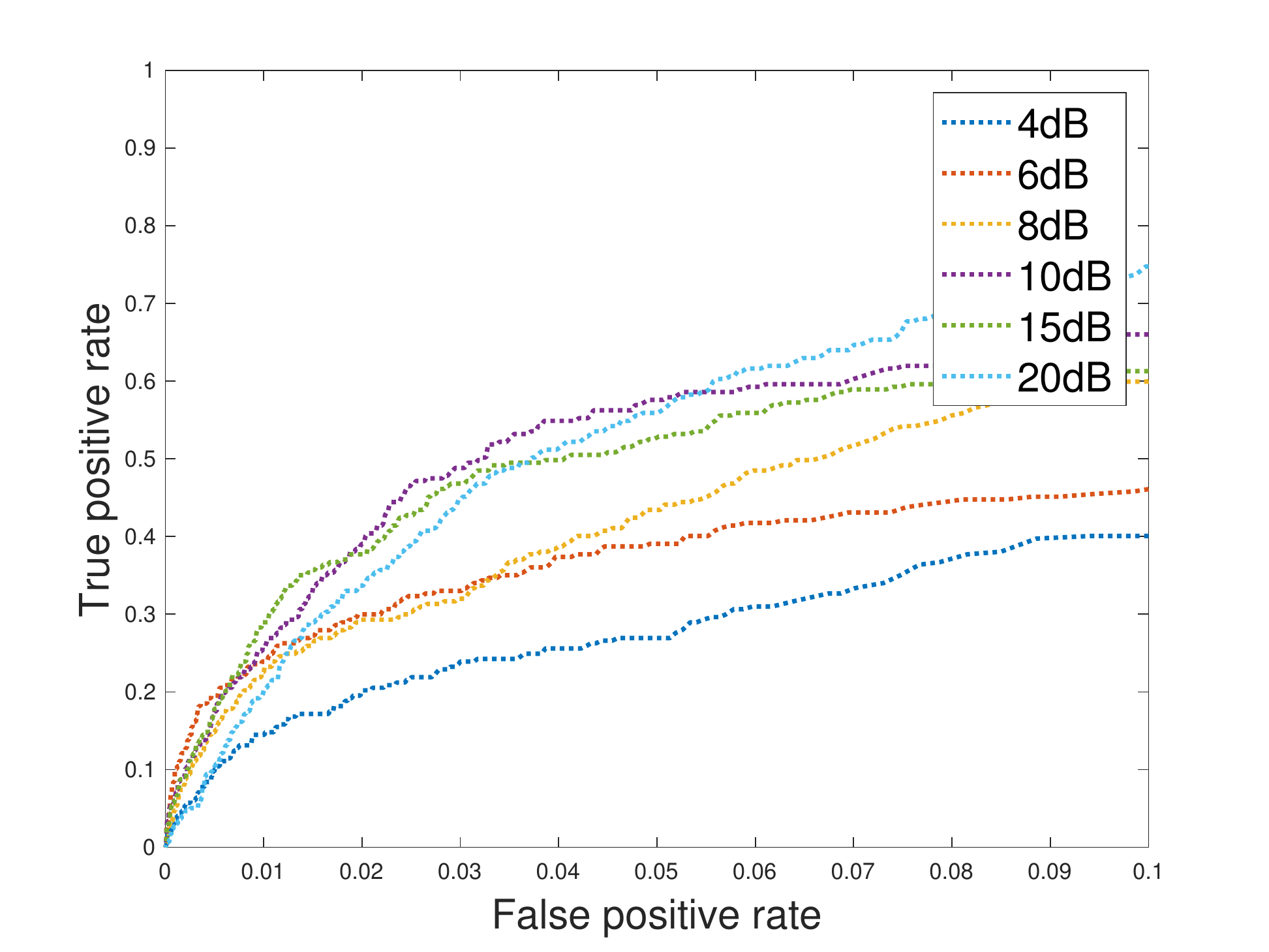}
                \caption{}
        \end{subfigure}%
}
\caption{\small Target detection ROC curves of the reconstructed images using (a) the proposed method, (b) solving the measurement-domain blind deconvolution problem, and (c) \emph{fused Lasso} recovery without compensating for position ambiguity.}\label{fig:Sims_ROC}
\end{figure*}

Next, we compare the robustness of our proposed approach to the state of the art iterative perturbation estimation scheme of~\cite{LKB:2016}. The method in~\cite{LKB:2016} leverages the sparsity of the radar scene as well as the proximity between consecutive antenna positions in order to estimate the antenna perturbations and consequently improve the reconstructed image quality. Since the scheme in~\cite{LKB:2016} relies on the antenna proximity to perform coherence analysis, we use additional measurements at antenna positions that interpolate the gaps between the $\times$'s for a total of 52 antenna positions per array. We compare the reconstruction performance in terms of the receiver operating characteristic (ROC) curves as shown in Figure~\ref{fig:ROC_52}. To generate the ROC curves, we simulate five different target positions as well as five different antenna perturbations and noise realizations. It can be seen from the figure that our proposed method is significantly more robust to measurement noise even at extreme noise levels. Finally, we recognize that the detection performance appears to be best for both methods at the 15dB PSNR level. We attribute this behavior to the limited number of noise realizations that have been used in our simulations which happened to be more favorable in the 15dB PSNR case.

\begin{figure}[h]
\centering
\mbox{
\begin{subfigure}[b]{0.25\textwidth}
                \centering
                \includegraphics[width=\textwidth]{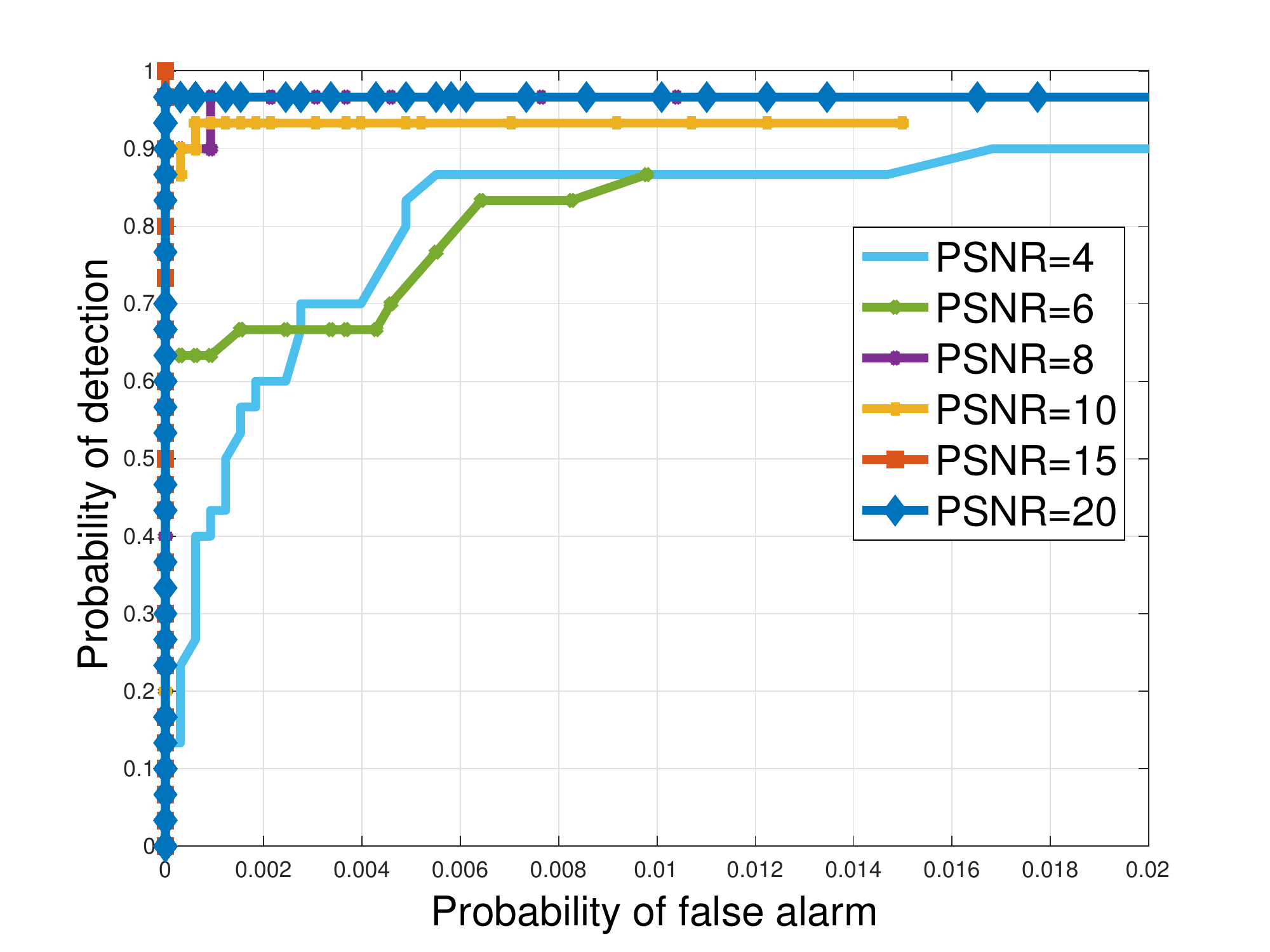}
                \caption{}
        \end{subfigure}%
        \begin{subfigure}[b]{0.25\textwidth}
                \centering
                \includegraphics[width=\textwidth]{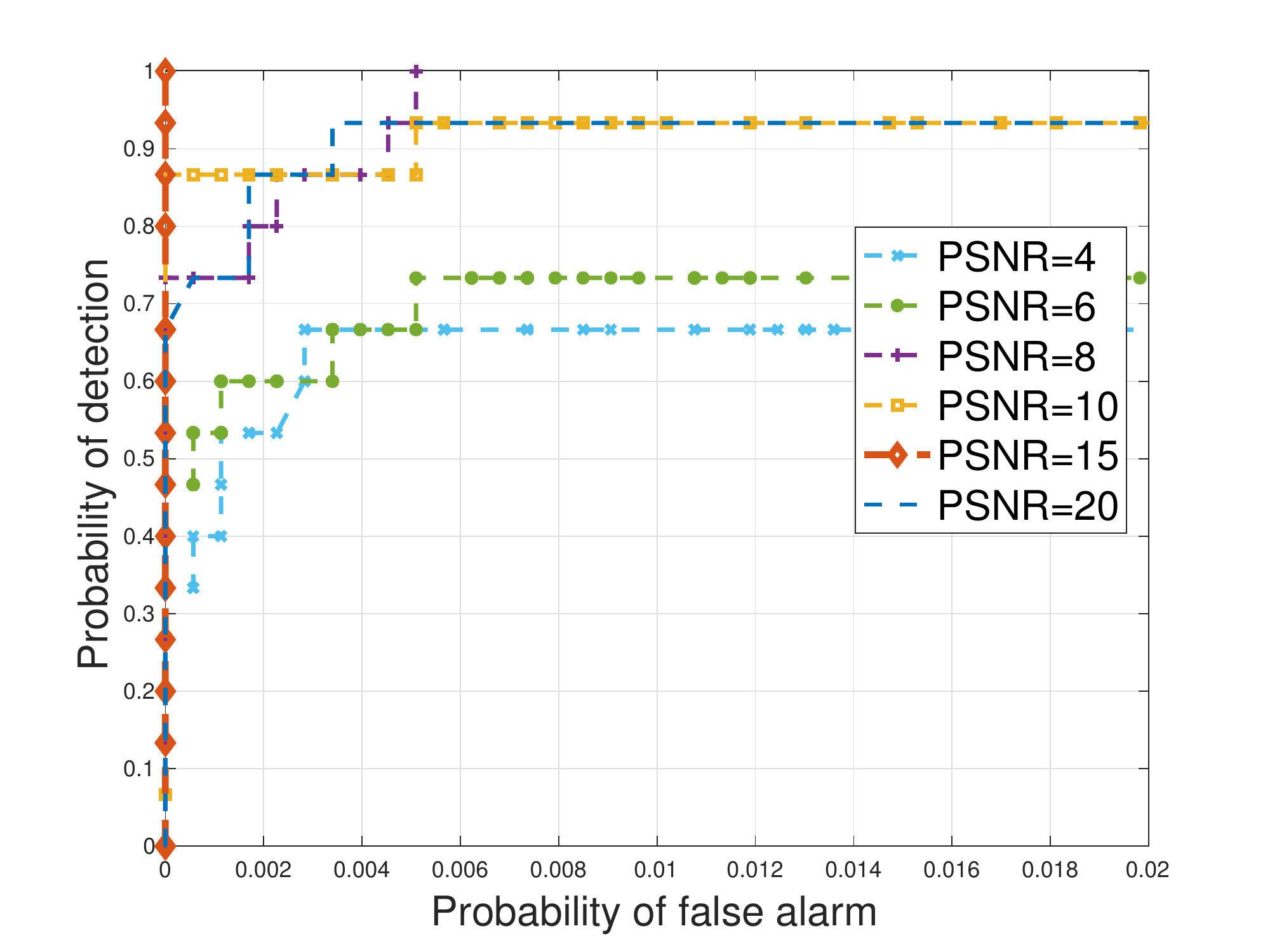}
                \caption{}
        \end{subfigure}%
}
\caption{\small ROC curves of the reconstructed images using (a) our proposed sparse blind deconvolution scheme, and (b) the iterative perturbation estimation scheme of~\cite{LKB:2016}.}\label{fig:ROC_52}
\end{figure}

\begin{figure}[h]
\centering
\mbox{
\begin{subfigure}[b]{0.25\textwidth}
                \centering
                \includegraphics[width=\textwidth]{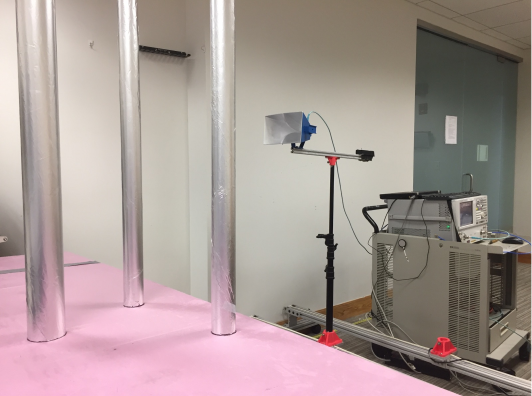}
                \caption{}
        \end{subfigure}%
 }
 \mbox{
        \begin{subfigure}[b]{0.3\textwidth}
                \centering
                \includegraphics[width=\textwidth]{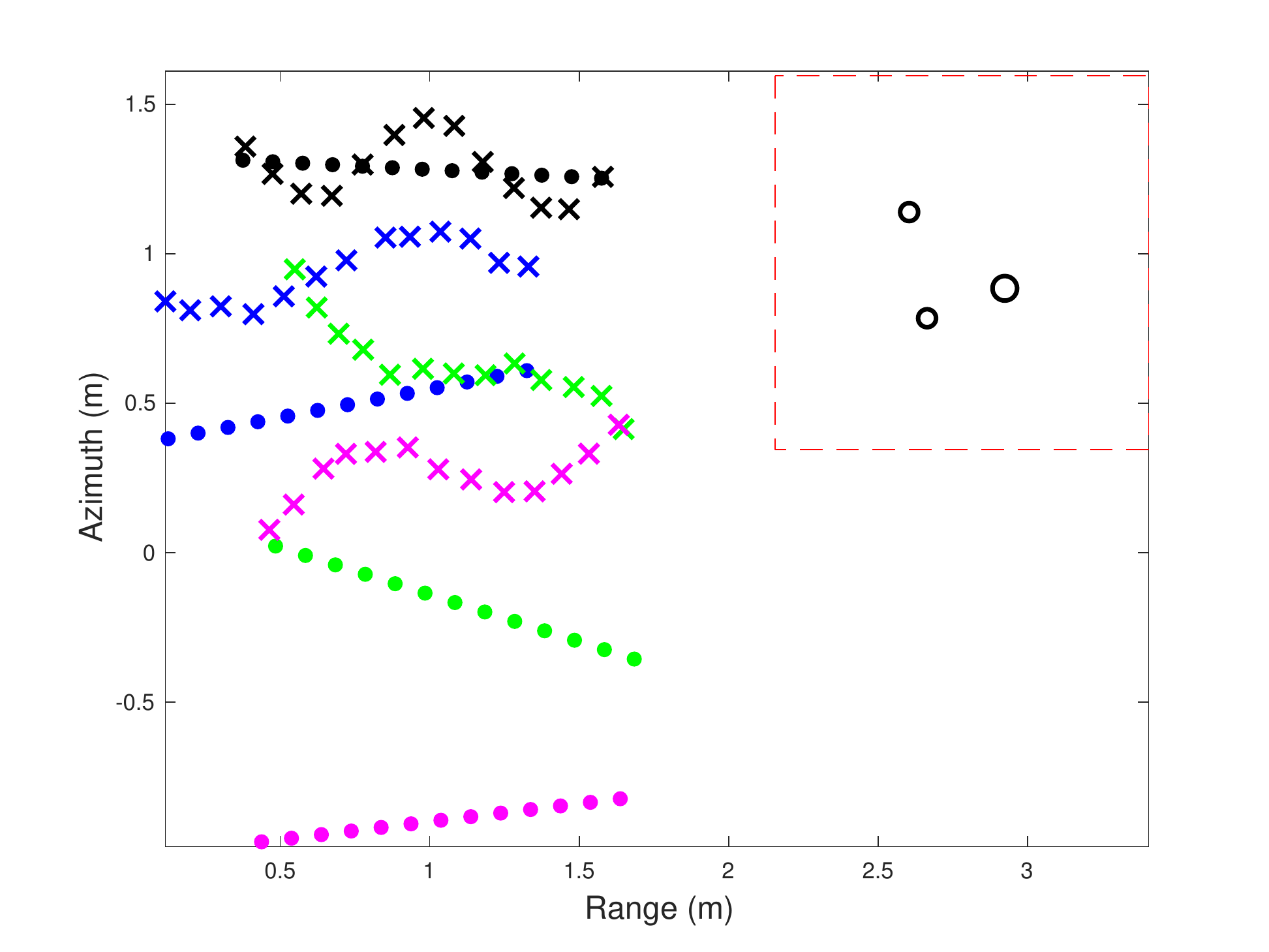}
                \caption{}
        \end{subfigure}%
        }
%\begin{figure}[ht]
%\centering
%\includegraphics[width=2.5in]{ExperimentPhoto}
%\caption{\small Photograph of the horn antenna and three cylindrical targets used in our experimental setup.}\label{fig:Exp_photo}
%\end{figure}
%\begin{figure}[h]
%\centering
%\includegraphics[width=2.5in]{}
\caption{\small (a) Photograph of the horn antenna and three cylindrical targets used in our experimental setup. (b) Layout of the experimental distributed radar acquisition setup. The round dots indicate the assumed but erroneous antenna positions, while the $\times$'s indicate the true positions. The back circles in the top right corner indicate the target positions.}\label{fig:Exp_layout}\vspace{-0.2in}
\end{figure}

%%%%%%%%
\subsection{Experimental data}
We built a radar setup using one horn antenna mounted on a platform that scans a scene that includes three cylindrical metal reflectors of diameter 6cm, 6cm, and 8.2 cm each, as shown in Figure~\ref{fig:Exp_layout} (a). The horn antenna was connected to a signal source, in this case an Agilent 2-Port PNA model 5230A, which measured the scattering parameters of the scene. The PNA was set to sweep over a frequency range from $1-10$ GHz with a 30 MHz frequency step and the port output power was set to 5 dBm. Over this range of frequencies the horn antennas have approximately a 40 degree main lobe beam width and a gain near 7 dBi. By moving the antenna positions and repeating the same experiment, we were able to collect radar measurements corresponding to four 13-element virtual arrays for a total of 52 antenna positions. A schematic of the experimental layout is shown in Figure~\ref{fig:Exp_layout} (b). The true antenna positions are indicated by the $\times$'s whereas the erroneous assumed positions are indicated by the dots. The horn of the antenna faces in the direction of the erroneous uniform linear array. With a 6 GHz center frequency and corresponding wavelength $\lambda_c = 5$cm, the maximum position error was equal to $1.28\lambda_c$ in the horizontal (range) direction and $3.43\lambda_c$ in the vertical (azimuth) direction for array 1. On the other hand, the position error for arrays 2, 3, and 4 was over $10.8\lambda_c$ in the vertical direction. The target locations are indicated by the black circles inside the region of interest marked by the dashed red line.

\begin{figure}[t]
\centering
\mbox{
\begin{subfigure}[b]{0.22\textwidth}
                \centering
                \includegraphics[width=\textwidth]{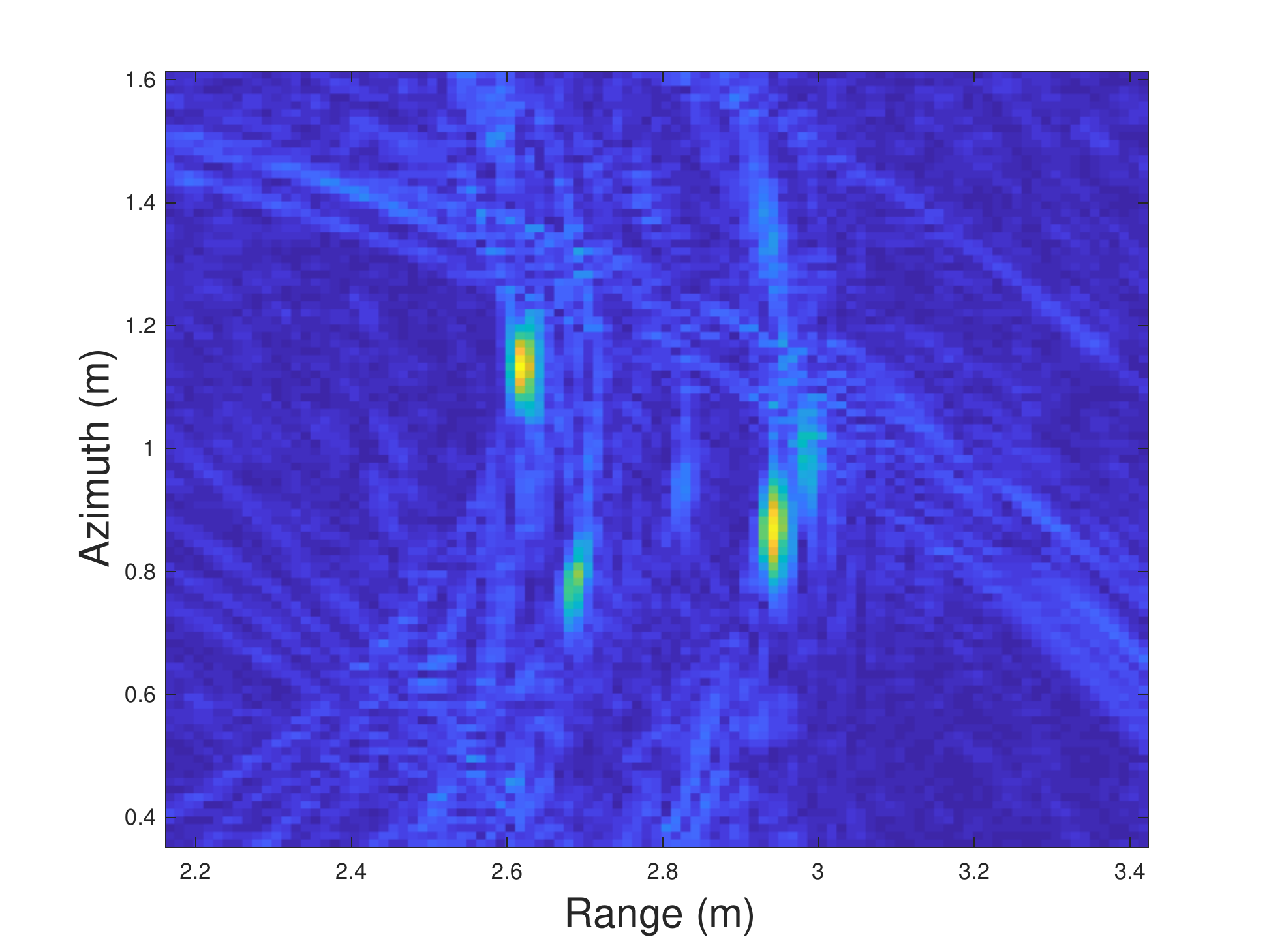}
                \caption{}
        \end{subfigure}%
        \begin{subfigure}[b]{0.22\textwidth}
                \centering
                \includegraphics[width=\textwidth]{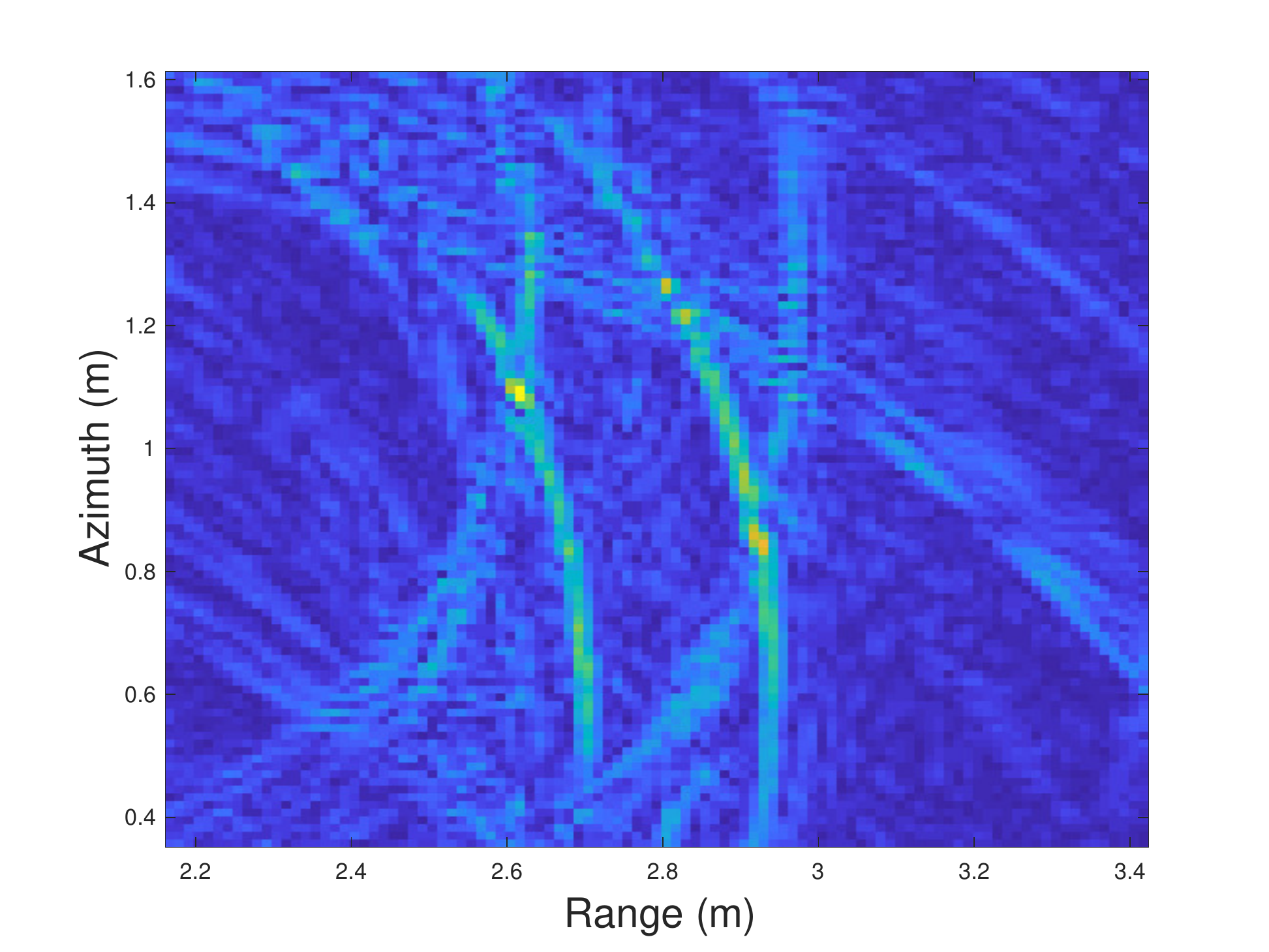}
                \caption{}
        \end{subfigure}%
}
\mbox{
\begin{subfigure}[b]{0.22\textwidth}
                \centering
                \includegraphics[width=\textwidth]{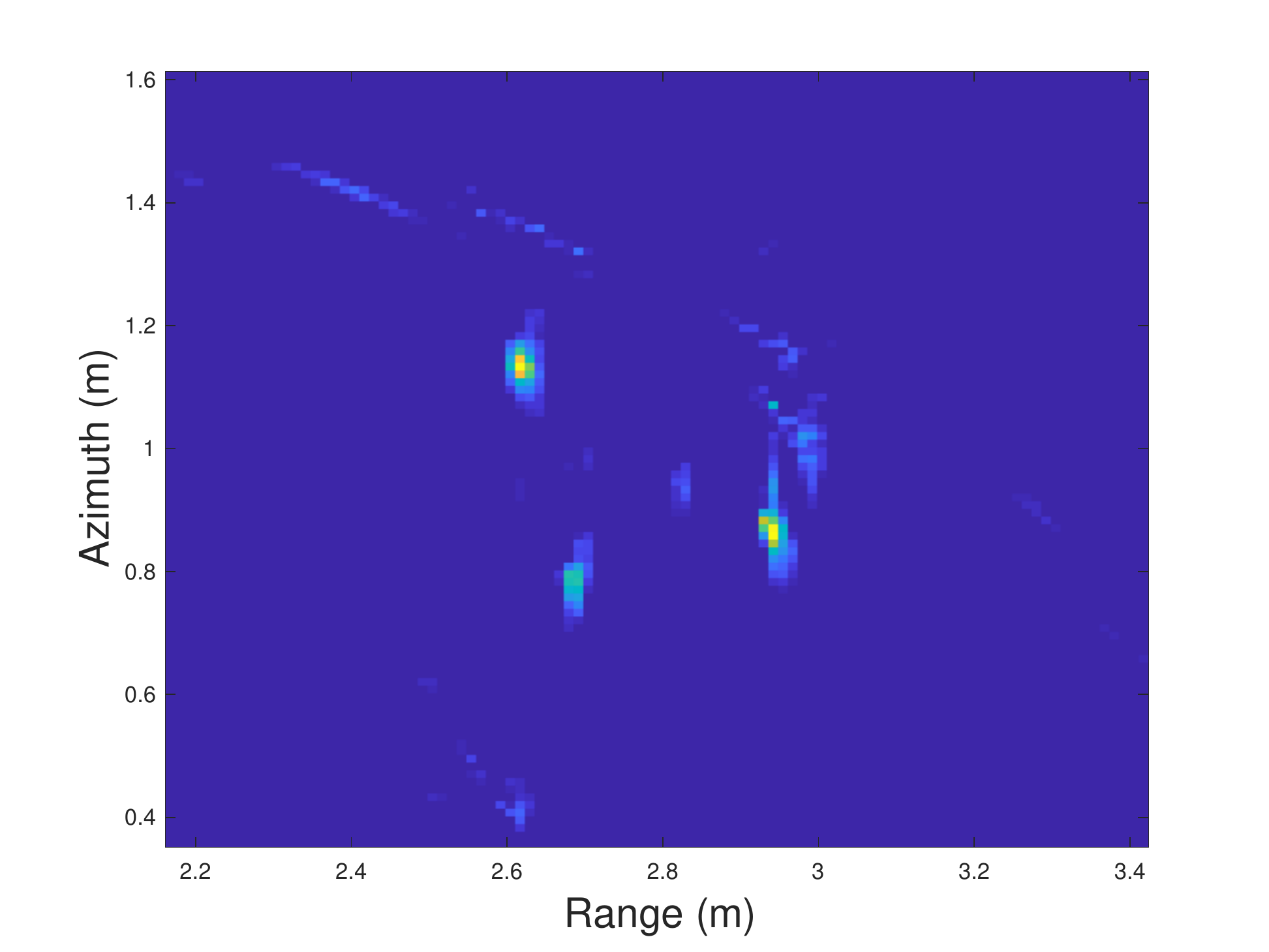}
                \caption{}
        \end{subfigure}%
        \begin{subfigure}[b]{0.22\textwidth}
                \centering
                \includegraphics[width=\textwidth]{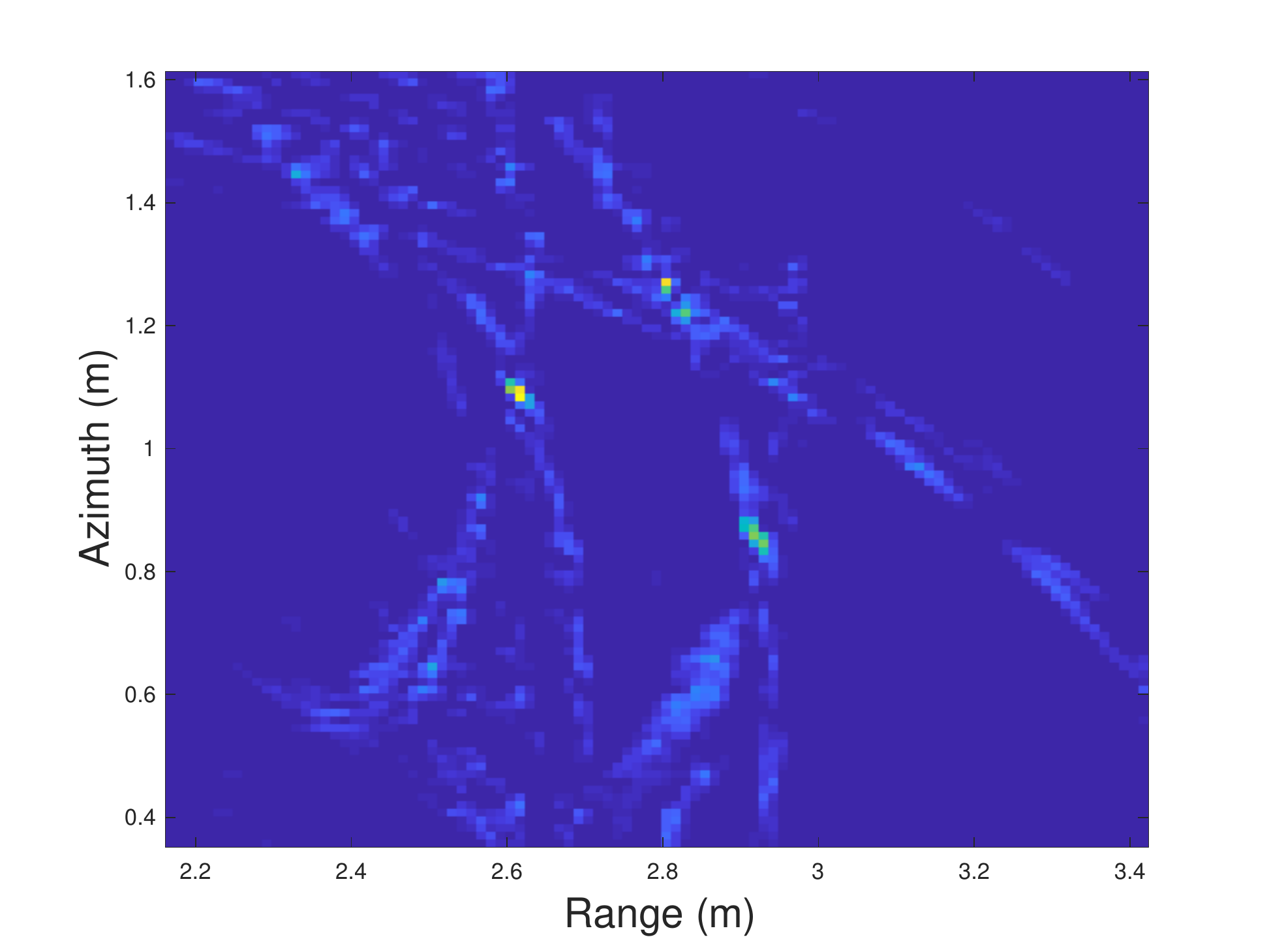}
                \caption{}
        \end{subfigure}%
}
\caption{\small Imaging results for the experimental data obtained from applying backprojection using (a) the correct imaging operator corresponding to the true antenna positions, and (b) the wrong imaging operator corresponding to the erroneous uniformly spaced positions. Results obtained using \emph{fused Lasso} regularized reconstruction with (c) the correct operator, and (d) the wrong operator.}\label{fig:Exp_conventional}\vspace{-0.2in}
\end{figure}
\begin{figure}[t]
\centering
\includegraphics[width=2.5in]{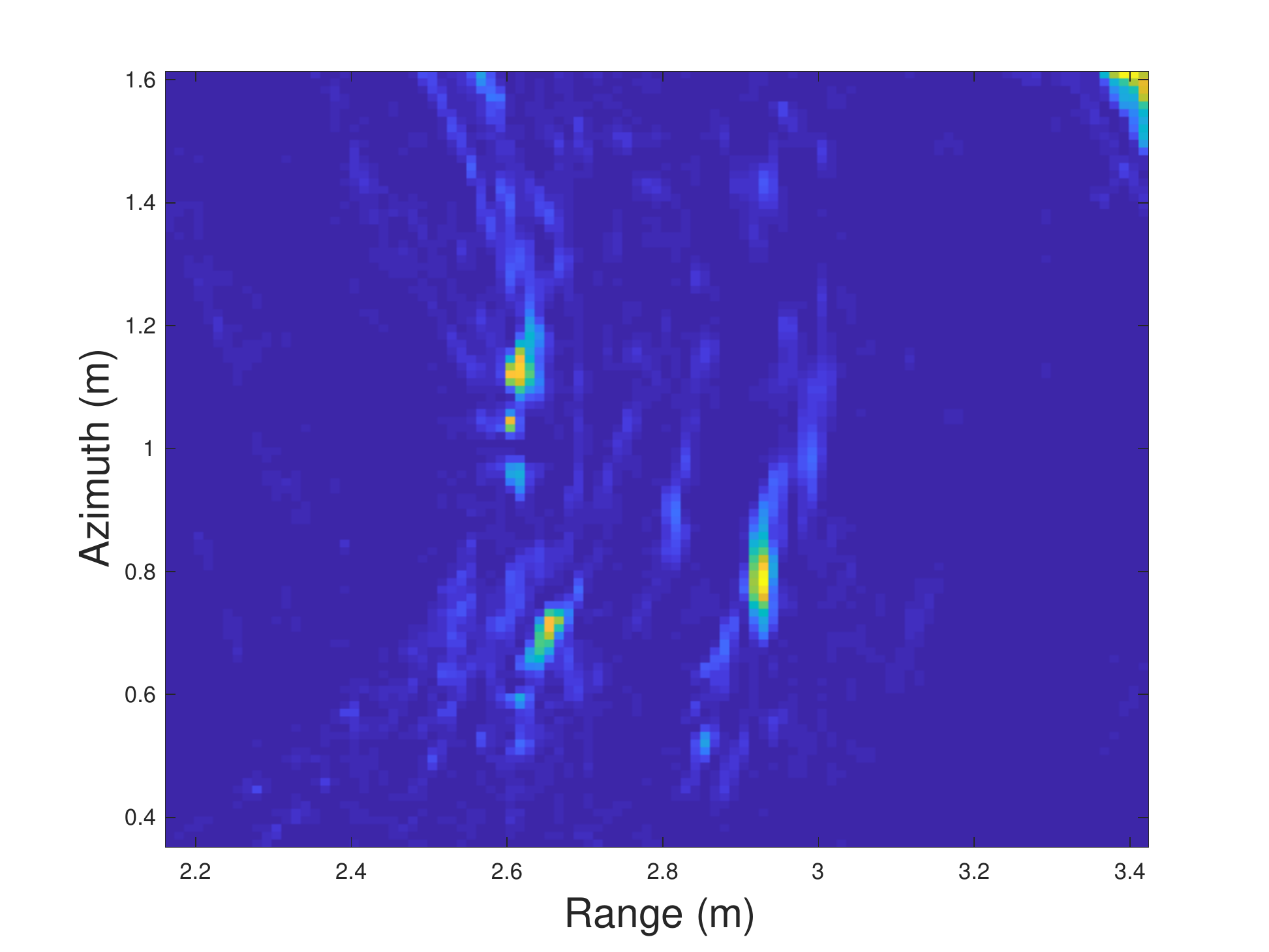}
\caption{\small Reconstructed image from the experimental data using the proposed blind deconvolution approach.}\label{fig:Exp_BlindDeconv}\vspace{-0.2in}
\end{figure}

\begin{figure*}[ht]
\centering
\includegraphics[width=6in]{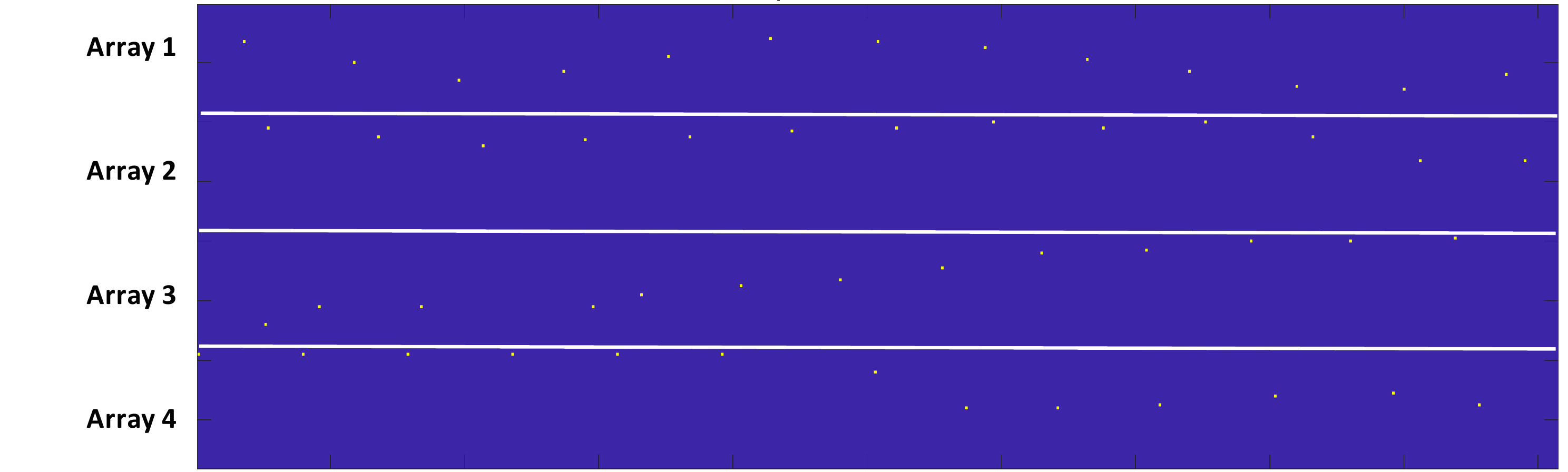}
\caption{\small Recovered convolution filters using the proposed blind deconvolution autofocus algorithm.}\label{fig:Exp_ConvFilters}\vspace{-0.25in}
\end{figure*}

We applied our blind deconvolution method to recover the radar image with parameters $\gamma = 0.5$, $\mu = 0.07$ and $\sigma = 1.2$. The radar image is discretized into a $101\times 101$ image with pixel resolution equal to $\lambda_c/4 = 1.25$cm. We also set the size of the convolution filters equal to $39\times 39$ pixels which is larger than the perturbations of array 1 but smaller than the perturbation of arrays 2, 3, and 4. Figure~\ref{fig:Exp_conventional} (a)--(d) shows the reconstructed images produced by the conventional backprojection scheme and a standard \emph{fused Lasso} regularized reconstruction which do not compensate for the position ambiguity. Notice that when the wrong antenna positions are used to build the radar operator, the quality of the reconstructed imaged degrades significantly. On the other hand, our proposed framework is capable of generating a focused radar image in Figure~\ref{fig:Exp_BlindDeconv} and compensates for the antenna perturbation by computing the spatial convolution filters shown in Figure~\ref{fig:Exp_ConvFilters}. It is quite remarkable how the shift kernels for arrays 1 and 2 are consistent with the true antenna perturbations. The shift kernels of arrays 3 and 4 are only mildly following the antenna perturbation due to the large perturbation in the true antenna position compared to the erroneous assumed positions.\vspace{-0.15in}

 %%%%%%%%%%%%%%%%%%%%%%%%%%%%%%%%%%%%%%%%%%%%%
%% Conclusion
%%%%%%%%%%%%%%%%%%%%%%%%%%%%%%%%%%%%%%%%%%%%%
\section{Conclusion}

We developed a novel image-domain blind deconvolution framework for
recovering focused images from radar measurements that suffer from
position ambiguity. Contrary to existing convex approaches to blind
deconvolution, the proposed image-domain convolution model places the
linear measurement operator after applying the convolution
operation. We showed that this resulting model is exact in the setting
when the transmitter and receiver antennas are affected by the same
position error. To recover the scene, in the context of this
framework, we also developed a block coordinate descent algorithm that
alternates between recovering the target image under \emph{fused
  Lasso} penalty constraints, and estimating the sparse convolution
kernels of the antennas. The performance of the proposed formulation
was shown to be superior to state-of-the-art methods that have
addressed the radar autofocus problem. Future efforts will address
deriving a convex formulation of the image-domain blind deconvolution
problem.\vspace{-0.1in}

%%%%%%%%%%%%%%%%%%%%%%%%%%%%%%%%%%%%%%%%%%%%%
%% References
%%%%%%%%%%%%%%%%%%%%%%%%%%%%%%%%%%%%%%%%%%%%%

\bibliographystyle{IEEEtran}
\bibliography{refs}

% Generated by IEEEtran.bst, version: 1.14 (2015/08/26)
\begin{thebibliography}{10}
\providecommand{\url}[1]{#1}
\csname url@samestyle\endcsname
\providecommand{\newblock}{\relax}
\providecommand{\bibinfo}[2]{#2}
\providecommand{\BIBentrySTDinterwordspacing}{\spaceskip=0pt\relax}
\providecommand{\BIBentryALTinterwordstretchfactor}{4}
\providecommand{\BIBentryALTinterwordspacing}{\spaceskip=\fontdimen2\font plus
\BIBentryALTinterwordstretchfactor\fontdimen3\font minus
  \fontdimen4\font\relax}
\providecommand{\BIBforeignlanguage}[2]{{%
\expandafter\ifx\csname l@#1\endcsname\relax
\typeout{** WARNING: IEEEtran.bst: No hyphenation pattern has been}%
\typeout{** loaded for the language `#1'. Using the pattern for}%
\typeout{** the default language instead.}%
\else
\language=\csname l@#1\endcsname
\fi
#2}}
\providecommand{\BIBdecl}{\relax}
\BIBdecl

\bibitem{MKLB:2018}
H.~Mansour, U.~S. Kamilov, D.~Liu, and P.~T. Boufounos, ``Radar autofocus using
  sparse blind deconvolution,'' in \emph{IEEE International Conference on
  Acoustics, Speech and Signal Processing (ICASSP)}, April 2018, p.~pp.

\bibitem{HermanStrohmer:2009}
M.~A. Herman and T.~Strohmer, ``High-resolution radar via compressed sensing,''
  \emph{IEEE Transactions on Signal Processing}, vol.~57, no.~6, pp.
  2275--2284, June 2009.

\bibitem{YPP:2010}
Y.~Yu, A.~P. Petropulu, and H.~V. Poor, ``Mimo radar using compressive
  sampling,'' \emph{IEEE Journal of Selected Topics in Signal Processing},
  vol.~4, no.~1, pp. 146--163, Feb 2010.

\bibitem{BergerMoura:2011}
C.~R. Berger and J.~M.~F. Moura, ``Noncoherent compressive sensing with
  application to distributed radar,'' in \emph{2011 45th Annual Conference on
  Information Sciences and Systems}, March 2011, pp. 1--6.

\bibitem{LKB:2015}
D.~Liu, U.~S. Kamilov, and P.~T. Boufounos, ``Sparsity-driven distributed array
  imaging,'' in \emph{2015 IEEE 6th International Workshop on Computational
  Advances in Multi-Sensor Adaptive Processing (CAMSAP)}, Dec 2015, pp.
  441--444.

\bibitem{WangYaziciYanik:2015}
\BIBentryALTinterwordspacing
L.~Wang, B.~Yaz{\i}ci, and H.~C. Yanik, ``Antenna motion errors in bistatic
  {SAR} imagery,'' \emph{Inverse Problems}, vol.~31, no.~6, p. 065001, 2015.
  [Online]. Available: \url{http://stacks.iop.org/0266-5611/31/i=6/a=065001}
\BIBentrySTDinterwordspacing

\bibitem{WEGJ:1994}
D.~E. Wahl, P.~H. Eichel, D.~C. Ghiglia, and C.~V. Jakowatz, ``Phase gradient
  autofocus-a robust tool for high resolution {SAR} phase correction,''
  \emph{IEEE Transactions on Aerospace and Electronic Systems}, vol.~30, no.~3,
  pp. 827--835, Jul 1994.

\bibitem{XGN:1999}
L.~Xi, L.~Guosui, and J.~Ni, ``Autofocusing of {ISAR} images based on entropy
  minimization,'' \emph{IEEE Transactions on Aerospace and Electronic Systems},
  vol.~35, no.~4, pp. 1240--1252, Oct 1999.

\bibitem{YYB:1999}
W.~Ye, T.~S. Yeo, and Z.~Bao, ``Weighted least-squares estimation of phase
  errors for {SAR/ISAR} autofocus,'' \emph{IEEE Transactions on Geoscience and
  Remote Sensing}, vol.~37, no.~5, pp. 2487--2494, Sep 1999.

\bibitem{ChoMunson:2008}
H.~J. Cho and D.~C. Munson, ``Overcoming polar-format issues in multichannel
  {SAR} autofocus,'' in \emph{2008 42nd Asilomar Conference on Signals, Systems
  and Computers}, Oct 2008, pp. 523--527.

\bibitem{LiuMunson:2011}
K.~H. Liu and D.~C. Munson, ``Fourier-domain multichannel autofocus for
  synthetic aperture radar,'' \emph{IEEE Transactions on Image Processing},
  vol.~20, no.~12, pp. 3544--3552, Dec 2011.

\bibitem{NA:2013}
M.~P. Nguyen and S.~B. Ammar, ``Second order motion compensation for squinted
  spotlight synthetic aperture radar,'' in \emph{Conference Proceedings of 2013
  Asia-Pacific Conference on Synthetic Aperture Radar (APSAR)}, Sept 2013, pp.
  202--205.

\bibitem{OnhonCetin:2012}
N.~O. Onhon and M.~Cetin, ``A sparsity-driven approach for joint {SAR} imaging
  and phase error correction,'' \emph{IEEE Transactions on Image Processing},
  vol.~21, no.~4, pp. 2075--2088, April 2012.

\bibitem{DDH:2013}
X.~Du, C.~Duan, and W.~Hu, ``Sparse representation based autofocusing technique
  for {ISAR} images,'' \emph{IEEE Transactions on Geoscience and Remote
  Sensing}, vol.~51, no.~3, pp. 1826--1835, March 2013.

\bibitem{KellyYaghoobiDavies:2014}
S.~Kelly, M.~Yaghoobi, and M.~Davies, ``Sparsity-based autofocus for
  undersampled synthetic aperture radar,'' \emph{IEEE Transactions on Aerospace
  and Electronic Systems}, vol.~50, no.~2, pp. 972--986, April 2014.

\bibitem{YHTJZ:2014}
J.~Yang, X.~Huang, J.~Thompson, T.~Jin, and Z.~Zhou, ``Compressed sensing radar
  imaging with compensation of observation position error,'' \emph{IEEE
  Transactions on Geoscience and Remote Sensing}, vol.~52, no.~8, pp.
  4608--4620, Aug 2014.

\bibitem{LKB:2016}
D.~Liu, U.~S. Kamilov, and P.~T. Boufounos, ``Coherent distributed array
  imaging under unknown position perturbations,'' in \emph{4th International
  Workshop on Compressed Sensing Theory and its Applications to Radar, Sonar
  and Remote Sensing (CoSeRa)}, Sept 2016, pp. 105--109.

\bibitem{ZHAOetal:2016}
\BIBentryALTinterwordspacing
L.~Zhao, L.~Wang, G.~Bi, S.~Li, L.~Yang, and H.~Zhang, ``Structured
  sparsity-driven autofocus algorithm for high-resolution radar imagery,''
  \emph{Signal Processing}, vol. 125, pp. 376 -- 388, 2016. [Online].
  Available:
  \url{http://www.sciencedirect.com/science/article/pii/S0165168416000426}
\BIBentrySTDinterwordspacing

\bibitem{Hasankhan2017}
\BIBentryALTinterwordspacing
M.~J. Hasankhan, S.~Samadi, and M.~{\c{C}}etin, ``Sparse representation-based
  algorithm for joint {SAR} image formation and autofocus,'' \emph{Signal,
  Image and Video Processing}, vol.~11, no.~4, pp. 589--596, May 2017.
  [Online]. Available: \url{https://doi.org/10.1007/s11760-016-0998-y}
\BIBentrySTDinterwordspacing

\bibitem{LiuWieselMunson:2012}
K.~H. Liu, A.~Wiesel, and D.~C. Munson, ``Synthetic aperture radar autofocus
  based on a bilinear model,'' \emph{IEEE Transactions on Image Processing},
  vol.~21, no.~5, pp. 2735--2746, May 2012.

\bibitem{ZXXetal:2013}
S.~X. Zhang, M.~D. Xing, X.~G. Xia, Y.~Y. Liu, R.~Guo, and Z.~Bao, ``A robust
  channel-calibration algorithm for multi-channel in azimuth {HRWS SAR} imaging
  based on local maximum-likelihood weighted minimum entropy,'' \emph{IEEE
  Transactions on Image Processing}, vol.~22, no.~12, pp. 5294--5305, Dec 2013.

\bibitem{UUR2015}
\BIBentryALTinterwordspacing
S.~U{{g}}ur, O.~Ar{\i}kan, and A.~C. G{\"{u}}rb{\"{u}}z, ``{SAR} image
  reconstruction by expectation maximization based matching pursuit,''
  \emph{Digital Signal Processing}, vol.~37, pp. 75 -- 84, 2015. [Online].
  Available:
  \url{http://www.sciencedirect.com/science/article/pii/S1051200414003182}
\BIBentrySTDinterwordspacing

\bibitem{EZJ:2017}
A.~Evers, N.~Zimmerman, and J.~A. Jackson, ``Semidefinite relaxation autofocus
  for bistatic backprojection {SAR},'' in \emph{2017 IEEE Radar Conference
  (RadarConf)}, May 2017, pp. 1332--1337.

\bibitem{MDM:2007}
R.~L. Morrison, M.~N. Do, and D.~C. Munson, ``{SAR} image autofocus by
  sharpness optimization: A theoretical study,'' \emph{IEEE Transactions on
  Image Processing}, vol.~16, no.~9, pp. 2309--2321, Sept 2007.

\bibitem{BNCC:2008}
F.~Berizzi, M.~Martorella, A.~Cacciamano, and A.~Capria, ``A contrast-based
  algorithm for synthetic range-profile motion compensation,'' \emph{IEEE
  Transactions on Geoscience and Remote Sensing}, vol.~46, no.~10, pp.
  3053--3062, Oct 2008.

\bibitem{ZYXLH:2016}
L.~t.~Zeng, Y.~Liang, M.~d.~Xing, Z.~y.~Li, and Y.~y.~Huai, ``Two-dimensional
  autofocus technique for high-resolution spotlight synthetic aperture radar,''
  \emph{IET Signal Processing}, vol.~10, no.~6, pp. 699--707, 2016.

\bibitem{LiuWieselMunson:2013}
K.~H. Liu, A.~Wiesel, and D.~C. Munson, ``Synthetic aperture radar autofocus
  via semidefinite relaxation,'' \emph{IEEE Transactions on Image Processing},
  vol.~22, no.~6, pp. 2317--2326, June 2013.

\bibitem{XLTK:1995}
G.~Xu, H.~Liu, L.~Tong, and T.~Kailath, ``A least-squares approach to blind
  channel identification,'' \emph{IEEE Transactions on Signal Processing},
  vol.~43, no.~12, pp. 2982--2993, Dec 1995.

\bibitem{MDCM:1995}
E.~Moulines, P.~Duhamel, J.~F. Cardoso, and S.~Mayrargue, ``Subspace methods
  for the blind identification of multichannel fir filters,'' \emph{IEEE
  Transactions on Signal Processing}, vol.~43, no.~2, pp. 516--525, Feb 1995.

\bibitem{GN:1995}
M.~I. Gurelli and C.~L. Nikias, ``Evam: an eigenvector-based algorithm for
  multichannel blind deconvolution of input colored signals,'' \emph{IEEE
  Transactions on Signal Processing}, vol.~43, no.~1, pp. 134--149, Jan 1995.

\bibitem{AQH:1997}
K.~Abed-Meraim, W.~Qiu, and Y.~Hua, ``Blind system identification,''
  \emph{Proceedings of the IEEE}, vol.~85, no.~8, pp. 1310--1322, Aug 1997.

\bibitem{TP:1998}
L.~Tong and S.~Perreau, ``Multichannel blind identification: from subspace to
  maximum likelihood methods,'' \emph{Proceedings of the IEEE}, vol.~86,
  no.~10, pp. 1951--1968, Oct 1998.

\bibitem{BlindDeconvConvex_ARR2013}
A.~Ahmed, B.~Recht, and J.~Romberg, ``Blind deconvolution using convex
  programming,'' \emph{IEEE Transactions on Information Theory}, vol.~60,
  no.~3, pp. 1711--1732, March 2014.

\bibitem{SelfCalibratBiconvex_LS2015}
\BIBentryALTinterwordspacing
S.~Ling and T.~Strohmer, ``Self-calibration and biconvex compressive sensing,''
  \emph{Inverse Problems}, vol.~31, no.~11, p. 115002, 2015. [Online].
  Available: \url{http://stacks.iop.org/0266-5611/31/i=11/a=115002}
\BIBentrySTDinterwordspacing

\bibitem{LiLeeBresler:2016}
Y.~Li, K.~Lee, and Y.~Bresler, ``Optimal sample complexity for blind gain and
  phase calibration,'' \emph{IEEE Transactions on Signal Processing}, vol.~64,
  no.~21, pp. 5549--5556, Nov 2016.

\bibitem{LiLeeBresler:2017}
------, ``Identifiability in bilinear inverse problems with applications to
  subspace or sparsity-constrained blind gain and phase calibration,''
  \emph{IEEE Transactions on Information Theory}, vol.~63, no.~2, pp. 822--842,
  Feb 2017.

\bibitem{AD:2016}
A.~Ahmed and L.~Demanet, ``Leveraging diversity and sparsity in blind
  deconvolution,'' \emph{arXiv:1610.06098}, 2016.

\bibitem{BN:2007}
L.~Balzano and R.~Nowak, ``Blind calibration of sensor networks,'' in
  \emph{2007 6th International Symposium on Information Processing in Sensor
  Networks}, April 2007, pp. 79--88.

\bibitem{Balzano2008}
\BIBentryALTinterwordspacing
------, \emph{Blind Calibration of Networks of Sensors: Theory and
  Algorithms}.\hskip 1em plus 0.5em minus 0.4em\relax Boston, MA: Springer US,
  2008, pp. 9--37. [Online]. Available:
  \url{https://doi.org/10.1007/978-0-387-68845-9_1}
\BIBentrySTDinterwordspacing

\bibitem{GCD:2012}
R.~Gribonval, G.~Chardon, and L.~Daudet, ``Blind calibration for compressed
  sensing by convex optimization,'' in \emph{2012 IEEE International Conference
  on Acoustics, Speech and Signal Processing (ICASSP)}, March 2012, pp.
  2713--2716.

\bibitem{BPGL:2014}
{\c{C}}.~Bilen, G.~Puy, R.~Gribonval, and L.~Daudet, ``Convex optimization
  approaches for blind sensor calibration using sparsity,'' \emph{IEEE
  Transactions on Signal Processing}, vol.~62, no.~18, pp. 4847--4856, Sept
  2014.

\bibitem{BlindDeconvLLS:2016}
S.~Ling and T.~Strohmer, ``Self-calibration and bilinear inverse problems via
  linear least squares,'' \emph{arXiv:1611.04196}, 2016.

\bibitem{WC:2016}
L.~Wang and Y.~Chi, ``Blind deconvolution from multiple sparse inputs,''
  \emph{IEEE Signal Processing Letters}, vol.~23, no.~10, pp. 1384--1388, Oct
  2016.

\bibitem{LLSW:2016}
X.~Li, S.~Ling, T.~Strohmer, and K.~Wei, ``Rapid, robust, and reliable blind
  deconvolution via nonconvex optimization,'' \emph{arXiv:1606.04933}, 2016.

\bibitem{LTR:2016}
K.~Lee, N.~Tian, and J.~Romberg, ``Fast and guaranteed blind multichannel
  deconvolution under a bilinear system model,'' \emph{arXiv:1610.06469}, 2016.

\bibitem{CJ:2016}
V.~Cambareri and L.~Jacques, ``A non-convex blind calibration method for
  randomised sensing strategies,'' in \emph{2016 4th International Workshop on
  Compressed Sensing Theory and its Applications to Radar, Sonar and Remote
  Sensing (CoSeRa)}, June 2016, pp. 16--20.

\bibitem{CJarxiv:2016}
------, ``Through the haze: A non-convex approach to blind calibration for
  linear random sensing models,'' \emph{arXiv:1610.09028}, 2016.

\bibitem{LKLCVJLKC:2005}
D.~Liu, G.~Kang, L.~Li, Y.~Chen, S.~Vasudevan, W.~Joines, Q.~H. Liu, J.~Krolik,
  and L.~Carin, ``Electromagnetic time-reversal imaging of a target in a
  cluttered environment,'' \emph{IEEE Transactions on Antennas and
  Propagation}, vol.~53, no.~9, pp. 3058--3066, Sept 2005.

\bibitem{MKLOBPA:2016}
H.~Mansour, U.~Kamilov, D.~Liu, P.~Orlik, P.~Boufounos, K.~Parsons, and
  A.~Vetro, ``Online blind deconvolution for sequential
  through-the-wall-radar-imaging,'' in \emph{2016 4th International Workshop on
  Compressed Sensing Theory and its Applications to Radar, Sonar and Remote
  Sensing (CoSeRa)}, Sept 2016, pp. 61--65.

\bibitem{fusedLasso:2005}
\BIBentryALTinterwordspacing
R.~Tibshirani, M.~Saunders, S.~Rosset, J.~Zhu, and K.~Knight, ``Sparsity and
  smoothness via the fused lasso,'' \emph{Journal of the Royal Statistical
  Society: Series B (Statistical Methodology)}, vol.~67, no.~1, pp. 91--108,
  2005. [Online]. Available:
  \url{http://dx.doi.org/10.1111/j.1467-9868.2005.00490.x}
\BIBentrySTDinterwordspacing

\bibitem{FISTA:2009}
\BIBentryALTinterwordspacing
A.~Beck and M.~Teboulle, ``A fast iterative shrinkage-thresholding algorithm
  for linear inverse problems,'' \emph{SIAM Journal on Imaging Sciences},
  vol.~2, no.~1, pp. 183--202, 2009. [Online]. Available:
  \url{https://doi.org/10.1137/080716542}
\BIBentrySTDinterwordspacing

\bibitem{BergFriedlander:2008}
\BIBentryALTinterwordspacing
E.~van~den Berg and M.~P. Friedlander, ``Probing the pareto frontier for basis
  pursuit solutions,'' \emph{SIAM Journal on Scientific Computing}, vol.~31,
  no.~2, pp. 890--912, 2008. [Online]. Available:
  \url{http://link.aip.org/link/?SCE/31/890}
\BIBentrySTDinterwordspacing

\bibitem{BergFriedlander:2011}
------, ``Sparse optimization with least-squares constraints,'' \emph{SIAM J.
  Optimization}, vol.~21, no.~4, pp. 1201--1229, 2011.

\bibitem{KMW:2017}
U.~S. Kamilov, H.~Mansour, and B.~Wohlberg, ``A plug-and-play priors approach
  for solving nonlinear imaging inverse problems,'' \emph{IEEE Signal
  Processing Letters}, vol.~24, no.~12, pp. 1872--1876, Dec 2017.

\bibitem{Douglas1956NumerSol}
J.~Douglas and H.~H. Rachford, ``{On the numerical solution of heat conduction
  problems in two and three space variables},'' \emph{Transaction of the
  American Mathematical Society}, vol.~82, pp. 421--489, 1956.

\bibitem{ADMM_Boyd:2011}
\BIBentryALTinterwordspacing
S.~Boyd, N.~Parikh, E.~Chu, B.~Peleato, and J.~Eckstein, ``Distributed
  optimization and statistical learning via the alternating direction method of
  multipliers,'' \emph{Found. Trends Mach. Learn.}, vol.~3, no.~1, pp. 1--122,
  Jan. 2011. [Online]. Available: \url{http://dx.doi.org/10.1561/2200000016}
\BIBentrySTDinterwordspacing

\end{thebibliography}

\end{document}